\documentclass[section]{article}%
\pdfoutput=1

\newif\ifarxiv
\arxivtrue%

\newif\ifshowack
\ifarxiv \showacktrue \else \showackfalse \fi

\PassOptionsToPackage{section}{placeins}

\ifarxiv
  \usepackage{labtemplate/aditi}
  \usepackage{labtemplate/utils}
  \usepackage{labtemplate/algorithm}
  \usepackage{labtemplate/algorithmic}
  \newif\ifcolmsubmission
  \colmsubmissionfalse
\else
  \usepackage[final]{colm2026_conference}
\fi

\usepackage{microtype}
\ifarxiv\else\usepackage{hyperref}\fi
\usepackage{url}
\usepackage{booktabs}
\ifarxiv\else
  \usepackage{amsmath}
  \usepackage{amssymb}
\fi
\usepackage{mathtools}
\usepackage{amsthm}
\usepackage{enumitem}
\usepackage{tcolorbox}
\usepackage{float}
\usepackage[section]{placeins}
\ifarxiv
\newtcolorbox{takeawaybox}{
  colback=gray!5,
  colframe=black,
  boxrule=0.8pt,
  arc=2pt,
  left=6pt,
  right=6pt,
  top=6pt,
  bottom=6pt
}
\else
\newtcolorbox{takeawaybox}{
  colback=gray!5,
  colframe=black,
  boxrule=0.8pt,
  arc=2pt,
  left=6pt,
  right=6pt,
  top=3pt,
  bottom=3pt,
  beforeafter skip balanced=4pt,
}
\fi

\usepackage{subcaption}

\usepackage{annotate-equations}
\usetikzlibrary{fit, positioning}
\definecolor{appleblue}{RGB}{95, 164, 211}
\definecolor{applegreen}{RGB}{81, 180, 102}
\definecolor{applered}{RGB}{255, 125, 136}
\definecolor{softblue}{RGB}{78,122,151}
\definecolor{softorange}{RGB}{160,106,70}
\definecolor{softred}{RGB}{185,95,104}
\definecolor{softgreen}{RGB}{45,130,64}
\theoremstyle{plain}
\theoremstyle{definition}
\theoremstyle{remark}

\ifarxiv
  \newcommand{\mainfigwidth}{\textwidth}
  \newcommand{\colmtight}{}
  \newcommand{\pipelinesize}{\large}
  \newcommand{\pipelinetopskip}{3.5em}
  \newcommand{\pipelinecapskip}{2.75em}
\else
  \newcommand{\mainfigwidth}{\textwidth}
  \newcommand{\colmtight}{\looseness=-1}
  \newcommand{\pipelinesize}{}
  \newcommand{\pipelinetopskip}{2.4em}
  \newcommand{\pipelinecapskip}{2.4em}
\fi

\ifarxiv\else
  \newtheorem{theorem}{Theorem}[section]
  \newtheorem{lemma}[theorem]{Lemma}

  \newtheorem{definition}[theorem]{Definition}
  \newtheorem{assumption}[theorem]{Assumption}

\fi

\usepackage{lineno}

\definecolor{darkblue}{rgb}{0, 0, 0.5}
\ifarxiv\else
  \hypersetup{colorlinks=true, citecolor=darkblue, linkcolor=darkblue, urlcolor=darkblue}
\fi

\ifarxiv
  \title{Early Data Exposure Improves Robustness to Subsequent Fine-Tuning}
\else
  \title{Early Data Exposure Improves \\ Robustness to Subsequent Fine-Tuning}
\fi

\author{Lawrence Feng, Gaurav R. Ghosal, Jacob Mitchell Springer, Ziqian Zhong \& Aditi Raghunathan \\
Carnegie Mellon University \\
\texttt{\{lawrencefeng,raditi\}@cmu.edu}
}

\ifarxiv
  \setauthors{Lawrence Feng \authorsep Gaurav R. Ghosal \authorsep Jacob Mitchell Springer \authorsep Ziqian Zhong \authorsep Aditi Raghunathan}
  \setaffils{Carnegie Mellon University}
  \setemail{\{lawrencefeng, raditi\}@cmu.edu}
  \setwebsite{https://ar-forum.github.io/earlyexposure-website/}
  \setwebsitedisplay{ar-forum.github.io/earlyexposure-website}
\fi

\begin{document}

\newcommand{\dweb}{\textcolor{softblue}{\mathcal{D}_{\textrm{pre}}}}
\newcommand{\dgen}{\textcolor{softblue}{\mathcal{D}_{\textrm{gen}}}}
\newcommand{\dspec}{\textcolor{softorange}{\mathcal{D}_{\textrm{spec}}}}
\newcommand{\dpost}{\textcolor{softgreen}{\mathcal{D}_{\textrm{post}}}}
\newcommand{\dft}{\textcolor{softred}{\mathcal{D}_{\textrm{ft}}}}

\newcommand{\pre}{\textcolor{softblue}{\theta_{\text{pre}}}}
\newcommand{\post}{\textcolor{softgreen}{\theta_{\text{post}}}}
\newcommand{\ft}{\textcolor{softred}{\theta_{\text{ft}}}}

\newcommand{\lft}{\mathcal{L}_{\mathrm{ft}}}
\newcommand{\lret}{\mathcal{L}_{\mathrm{ret}}}
\newcommand{\limm}{\mathcal{L}_{\mathrm{im}}}
\newcommand{\lpre}{\mathcal{L}_{\mathrm{pre}}}

\ifcolmsubmission
\linenumbers
\fi

\maketitle

\begin{abstract}
How can we train models whose post-trained capabilities survive subsequent fine-tuning? Rather than focusing on downstream interventions to mitigate forgetting of upstream capabilities, we study how upstream training choices --- that is, the manner in which a capability is acquired --- shape how robustly that capability is retained. We investigate this question in a controlled three-stage language-model pipeline: pretraining, post-training to acquire a target capability, and downstream fine-tuning on a new objective. Across 135M and 1B models, two post-training domains, and two downstream fine-tuning tasks, we find that immediate post-training performance does not reliably predict retention after subsequent fine-tuning: training recipes that look equivalent immediately after post-training can retain the target capability very differently after subsequent fine-tuning. In particular, \textit{early exposure} --- mixing post-training data into pretraining --- consistently improves the frontier between retained upstream performance and downstream performance. In compute-matched experiments, where the target data must be allocated between pretraining and post-training, we find that the optimum lies at neither extreme. Together with our other empirical and theoretical findings, this supports the view that post-training drives immediate specialization while early exposure improves robustness to later forgetting. Replay and dropout, typically used to mitigate forgetting as it occurs during fine-tuning, provide complementary gains to early exposure when applied during post-training. Our findings suggest that robustness to subsequent fine-tuning should be treated as a first-class objective of upstream training, addressed preventatively through choices like early exposure rather than reactively during fine-tuning itself.

\end{abstract}

\section{Introduction}

When a post-trained language model is released for downstream fine-tuning, its carefully acquired capabilities are at risk. Fine-tuning on a new objective routinely causes catastrophic forgetting of behaviors introduced during post-training — whether instruction following, domain knowledge, coding ability, or safety-related behavior \citep{yang2025qwen3technicalreport, olmo2025olmo3}.

Most prior work treats this as a problem for the downstream fine-tuner to solve. If fine-tuning degrades prior capabilities, the natural response is to modify that fine-tuning stage: replay earlier data~\citep{bethune2025scalinglawsforgettingfinetuning, kotha2026replayingpretrainingdataimproves}, regularize the update~\citep{Kirkpatrick_2017}, restrict the trainable parameters~\citep{hu2021loralowrankadaptationlarge, biderman2024loralearnsforgets}, or jointly optimize competing objectives \citep{wortsman2022modelsoupsaveragingweights, wortsman2022robustfinetuningzeroshotmodels}.

We take a complementary view: robustness to subsequent fine-tuning should be treated as an objective of upstream model development. Upstream developers typically train in two stages — first on a large general corpus to build broad language understanding, then on a smaller, often scarce, targeted dataset to instill specific capabilities. Because this second stage uses limited data, how and when it is used matters. Our central intuition is that how a model learns a capability shapes how robustly it is retained: two models that reach identical post-training performance can differ substantially in how well those capabilities survive later adaptation.

To study this question, we use a controlled three-stage pipeline reflecting this standard practice: an upstream developer first {\color{softblue}pretrains} on a broad corpus, then {\color{softgreen}post-trains} on a smaller targeted dataset to acquire specific capabilities, and finally hands the resulting model to a downstream user who {\color{softred}fine-tunes} it on a new objective (Figure \ref{fig:pipeline}). We study this framework across multiple controlled settings spanning different post-training and downstream fine-tuning regimes (Table~\ref{tab:instantiations}), including both domain and behavioral adaptation, and evaluate these settings for 135M parameter models, extending our findings to 1B parameter models. We hold the downstream fine-tuning method fixed, applying standard supervised fine-tuning, and sweep its learning rate to characterize how upstream choices shape the tradeoff between downstream performance, retention of the post-trained capability, and performance on the broader pretraining distribution. Accordingly, our evaluation centers on the tradeoff frontier induced by different methods, rather than immediate post-training performance alone.

\begin{figure}[t]
\vspace{\pipelinetopskip}%
{\pipelinesize
\begin{equation*}
  \eqnmarkbox[appleblue]{pre}{\theta_{\mathrm{pre}}}
  \;\longrightarrow\;
  \eqnmarkbox[applegreen]{post}{\theta_{\mathrm{post}}}
  \hspace{11em}%
  \eqnmarkbox[applered]{ft}{\theta_{\mathrm{ft}}}
  \label{eq:pipeline}
\end{equation*}
}
\annotate[yshift=-1.8em]{below, left}{pre}{\normalsize Pretraining}
\annotate[yshift=-1.8em]{below, right}{post}{\normalsize Post-training for $X$}
\annotate[yshift=-1.8em]{below, right}{ft}{\normalsize Fine-tuning for task $Y$}
\begin{tikzpicture}[overlay, remember picture]
  \begin{scope}[on background layer]
    \node[draw=gray!60, rounded corners=5pt,
          fit=(pre)(post),
          inner xsep=14pt, inner ysep=14pt] (leftbox) {};
    \node[draw=gray!60, rounded corners=5pt,
          fit=(ft),
          inner xsep=14pt, inner ysep=14pt] (rightbox) {};
  \end{scope}
  \draw[->, dashed, thick, gray!60]
    (leftbox.east) --
    node[midway, below, font=\small\sffamily, text=gray!80]
      {downstream user}
    (rightbox.west);
  \node[above=0.4em of leftbox, font=\small\sffamily]  (leftlabel)
    {Acquiring Capability $X$};
  \node[anchor=base, font=\small\sffamily] at (rightbox.north |- leftlabel.base)
    {\vphantom{g} Is $X$ Retained?};
\end{tikzpicture}
\vspace{\pipelinecapskip}%
\caption{Overview of our three-stage experimental setup, in contrast to a typical two-stage setup. A first party pretrains then post-trains a model with the goal of achieving high performance on domain $X$. Subsequently, downstream users fine-tune $\post$ for a task $Y$, causing catastrophic forgetting of domain $X$. Previous work investigates interventions in the third stage: how can we fine-tune for $Y$ while mitigating forgetting on $X$? In this work, we investigate how \textit{the way in which} $X$ is learned affects how it is (or not) forgotten.} 
\label{fig:pipeline}
\vspace{-1em}
\end{figure} 

Our main intervention is simple: we expose the model to some of the eventual post-training data earlier by mixing it in during pretraining. Across datasets and model sizes, we find that this \textit{early exposure} improves the tradeoff between retained upstream capability and downstream fine-tuning loss (Figure \ref{fig:four_pipeline_frontiers}), even when it has little or no visible effect on immediate post-training performance (Section \ref{subsec:latent-mixing}).

Why does early exposure help? Our theoretical account (Section \ref{sec:theory}) suggests that mixing during pretraining allows the post-training capability to be represented in specialized features that are less vulnerable to subsequent interference. Our compute-matched experiments reinforce this view: even under a fixed budget of post-training data, the optimum does not lie at either extreme of allocating all exposure to pretraining or all exposure to post-training, but between the two (Section \ref{subsec:compute-matched}).

If the manner of learning matters, a natural follow-up question arises: what other interventions can shape how a capability is acquired? We study replay and dropout from this perspective (Section \ref{subsec:dropout-replay}). Intuitively, replay mitigates forgetting by interleaving earlier data with new-domain updates, while dropout discourages co-adaptation and promotes more robust representations \citep{rolnick2019experiencereplaycontinuallearning, 10.5555/2627435.2670313}. Importantly, we evaluate whether the benefits of these interventions persist after a later downstream fine-tuning stage. We find that both improve the tradeoff frontier while remaining complementary to pretraining-time mixing.

Together, our findings show that the upstream training process is a meaningful lever for shaping robustness to subsequent fine-tuning. In particular, a remarkably simple intervention---mixing a small amount of post-training data into pretraining---can materially improve how well capabilities survive subsequent fine-tuning. Replay and dropout provide additional complementary gains, further suggesting that robustness can be influenced well before forgetting is observed downstream. More broadly, our results point to a promising direction for future work: building models that are inherently easier to adapt by designing the upstream training pipeline to make valuable capabilities more durable from the start.\ifarxiv\else{} We defer an extended discussion of related work to Appendix~\ref{sec:related_works}.\fi

\begin{figure*}[t]                                
      \centering                                  
      \includegraphics[width=\mainfigwidth]{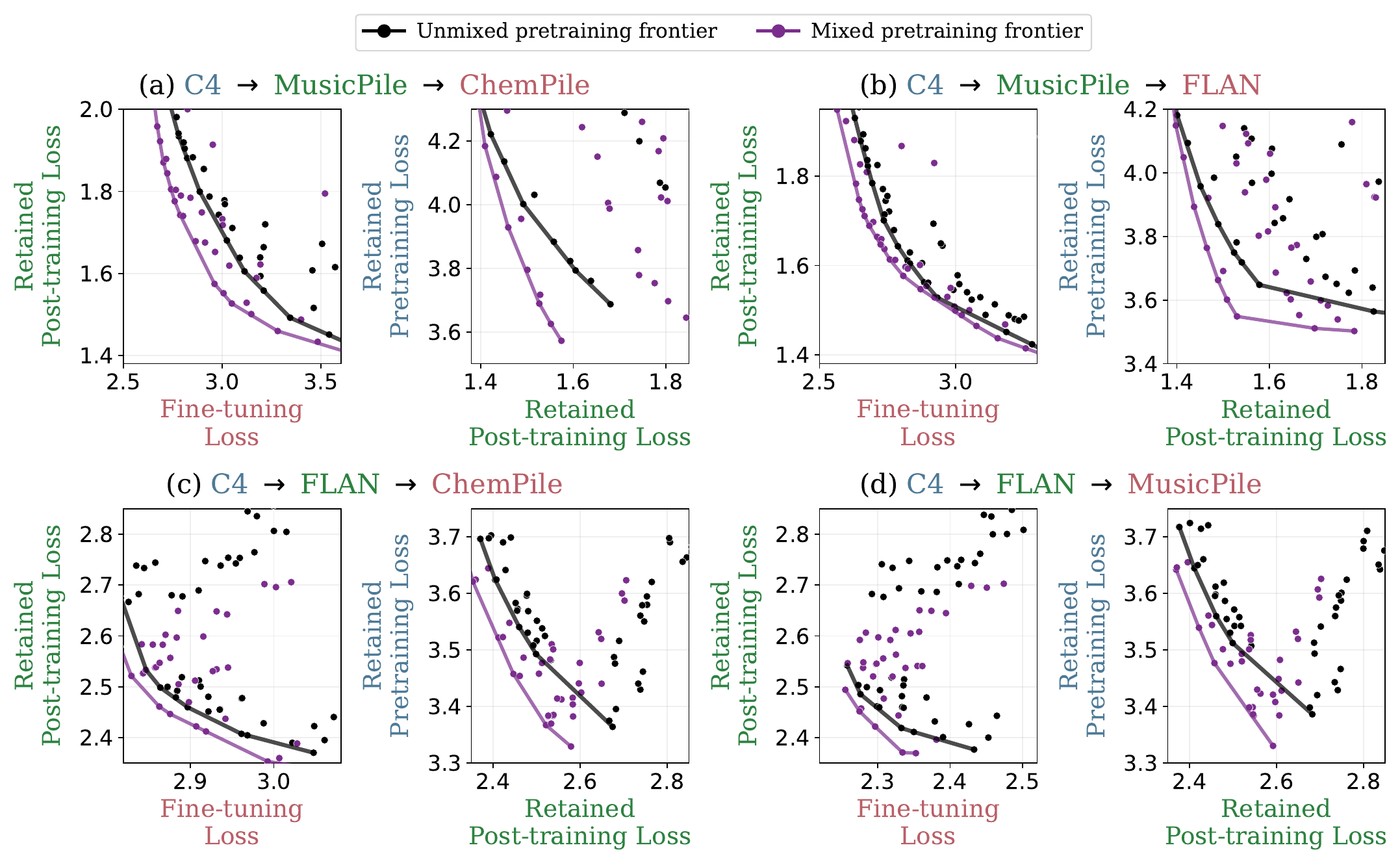}
      \caption{\textbf{Mixing during pretraining improves the frontier across four training pipelines (135M).} Each panel corresponds to one 
  3-stage pipeline. Within each panel, the left plot shows retained post-training loss versus downstream fine-tuning loss, and the right plot     
  shows retained pretraining loss versus retained post-training loss. \textbf{Black} denotes the frontier obtained from unmixed pretraining, and  
  \textbf{\color{violet}purple} denotes the frontier obtained from mixed pretraining. Across all four pipelines, mixing shifts the      
  frontier toward lower retained post-training loss, lower retained pretraining loss, and lower downstream fine-tuning loss,
  indicating that early exposure to post-training data can improve its downstream retention after subsequent fine-tuning.}
      \label{fig:four_pipeline_frontiers}
  \end{figure*}

\ifarxiv
\section{Related Works}
\label{sec:related_works}

\textbf{Catastrophic Forgetting.} A recurring challenge in sequential training is \textit{catastrophic forgetting}: when a model is optimized on new data, its performance can deteriorate on behaviors it previously exhibited \citep{mccloskey1989catastrophic}. For language models, this phenomenon shows up in modern training pipelines. For example, instruction tuning and RLHF can trade off against preexisting capabilities, an effect often discussed as an ``alignment tax'' \citep{ouyang2022traininglanguagemodelsfollow}. Relatedly, several works show that behaviors introduced during safety fine-tuning can be quickly weakened or reversed by subsequent training on different objectives or data \citep{yang2023shadowalignmenteasesubverting, qi2023finetuningalignedlanguagemodels}. These tradeoffs also appear in adjacent settings such as knowledge editing \citep{nishi2025representationshatteringtransformerssynthetic} and unlearning \citep{maini2024tofutaskfictitiousunlearning}. Beyond documenting the effect, recent work has started to map how training choices shape its severity: for instance, LoRA-style adaptation can alter forgetting dynamics \citep{biderman2024loralearnsforgets}, and longer pretraining can change how brittle or persistent acquired capabilities are \citep{springer2025overtrainedlanguagemodelsharder}. In this paper, we focus on catastrophic forgetting of post-trained capabilities, and study what properties of an intermediate checkpoint determine whether capabilities persist under subsequent training.

\textbf{Data Placement in Pretraining.} A line of recent works examine pretraining interventions for enforcing desired downstream capabilities and properties.  \citet{maini2025safetypretraininggenerationsafe,obrien2025deepignorancefilteringpretraining} propose filtering and augmenting data during pretraining to improve safety. Similarly, \citet{sam2026introducesafetyinterventionspretraining} demonstrate that the impact of such interventions improves as they are introduced earlier in pretraining. While these works incorporate downstream tasks during pretraining, they extensively modify the pretraining corpus by incorporating data-augmentations and filtering of the dataset. \citet{baek2026finetunersfallacypretrainfinetuning} demonstrate that mixing post-training data during pretraining can immediately improve in-domain performance relative to simply fine-tuning. In our work, we introduce an additional benefit of early exposure to post-training data: robustness to catastrophic forgetting during future training.

\fi

\section{Preliminaries and Setting}

Typically, a model developer {\color{softblue}(1) pretrains} a model on a general web corpus and {\color{softgreen}(2) post-trains} the model on a target domain; downstream users then {\color{softred}(3) fine-tune} the model for their own purposes. Throughout this work, we'll describe stages {\color{softblue}(1)} and {\color{softgreen}(2)} as upstream relative to stage {\color{softred}(3)} controlled by the downstream end user. Each stage is associated with a dataset: $\dweb$ (general pretraining), $\dpost$ (post-training), and $\dft$ (fine-tuning). The post-training corpus is assumed to be much smaller than the pretraining web corpus, reflecting the practical regime where post-training data is relatively scarce.

We write $\mathcal{L}(\theta;\mathcal{D})$ for the loss of parameters $\theta$ evaluated on dataset $\mathcal{D}$. All losses are computed on held-out splits of the corresponding datasets.

\textbf{Stage 1: Upstream Pretraining.}
The model is first pretrained on a general web corpus
$\dweb$, producing a pretrained checkpoint
$\pre$. In some experiments, the upstream developer additionally mixes a fraction $\lambda \in [0,1]$ of the post-training corpus
$\dpost$ into this stage. Here, $\lambda=0$ denotes no exposure to $\dpost$ during pretraining, while $\lambda>0$ denotes \textit{early exposure} to the post-training dataset. In this work, $\lambda$ is at most one, denoting at most one pass over $\dpost$ during pretraining.

\textbf{Stage 2: Upstream Post-training.}
Starting from $\pre$, the upstream developer post-trains the model on a relatively smaller corpus $\dpost$ to acquire a target capability or domain adaptation, yielding the post-trained checkpoint $\post$. We measure performance on $\dpost$ immediately after this stage using the \emph{immediate post-training loss}
\[
\mathcal{L}_{\mathrm{im}} := \mathcal{L}(\post;\dpost).
\]

\textbf{Stage 3: Subsequent Fine-Tuning.}
A downstream user then fine-tunes the post-trained model on a new objective $\dft$, producing the checkpoint $\ft$. We measure performance on this new objective using the \emph{downstream fine-tuning loss}
\[
\mathcal{L}_{\mathrm{ft}} := \mathcal{L}(\ft;\dft).
\]

Subsequent fine-tuning can degrade capabilities acquired during upstream post-training. To measure how much of the post-trained capability survives, we also evaluate $\ft$ on $\dpost$, defining the \emph{retained post-training loss}
\[
\lret := \mathcal{L}(\ft;\dpost).
\]

Our central question is how upstream training choices affect downstream adaptation, retention of post-trained capabilities, and preservation of general-domain performance under subsequent fine-tuning.

\begin{table}[t]
\begin{center}
\small
\begin{tabular}{llll}
\toprule
\textbf{Pipeline} & $\dweb$ & $\dpost$ & $\dft$ \\
\midrule
Music $\rightarrow$ Chemistry & C4 & MusicPile & ChemPile \\
Music $\rightarrow$ Instruction & C4 & MusicPile & FLAN \\
Instruction $\rightarrow$ Chemistry & C4 & FLAN & ChemPile \\
Instruction $\rightarrow$ Music & C4 & FLAN & MusicPile \\
\bottomrule
\end{tabular}
\end{center}
\caption{Experimental instantiations of the three-stage pipeline. We vary the upstream post-training corpus $\dpost$ and downstream fine-tuning corpus $\dft$ while keeping the general pretraining corpus $\dweb$ fixed to C4.}
\label{tab:instantiations}
\end{table}

\subsection{Evaluation methodology}
\label{subsec:evaluation-methodology}

Subsequent fine-tuning is inherently multi-objective. A downstream user may care not only about performance on the new fine-tuning objective $\dft$, but also about retaining capabilities acquired during upstream post-training on $\dpost$ and preserving more general capabilities associated with the pretraining distribution $\dweb$. We therefore track three losses throughout this work: the downstream fine-tuning loss $\lft$, the retained post-training loss $\lret$, and the retained pretraining loss
\[
\lpre := \mathcal{L}(\ft;\dweb).
\]

We use validation loss as our evaluation metric. Prior work has established loss as a reliable, scale-invariant proxy for capability: models with matched pretraining loss exhibit equivalent downstream task performance \citep{duunderstanding, gadre2024languagemodelsscalereliably, chen2025scalinglawspredictingdownstream}. This is particularly important at our training scales, where task accuracy is noisy and near random chance; in contrast, loss provides a continuous and smooth signal that enables fine-grained comparisons between models that may appear similar under coarse or discrete metrics.

Sweeping upstream training choices and hyperparameters yields checkpoints with different tradeoffs among these objectives. We summarize the best attainable tradeoffs using 2D \emph{loss frontiers}: for each method, we plot the Pareto-optimal set of checkpoints in a given 2D projection, i.e., those for which no other checkpoint from the same method achieves lower loss on both axes simultaneously. Our main analysis uses two complementary views: $(\lret, \lft)$, which captures the tradeoff between retaining the post-trained capability and adapting to the downstream task, and $(\lpre, \lret)$, which captures the tradeoff between retaining broader pretraining capabilities and retaining the post-trained capability. Together, these views provide interpretable slices through the underlying three-objective tradeoff.

\subsection{Experimental instantiations}

Across experiments, we fix the general pretraining corpus to C4 and study the four three-stage instantiations shown in Table~\ref{tab:instantiations}. These settings cover two qualitatively different forms of upstream capability acquisition, domain adaptation and instruction tuning, and let us test whether the same robustness phenomena appear across different downstream fine-tuning objectives.

\subsection{Model scales and training budgets}
\label{subsec:training-details}

Unless otherwise stated, we use a SmolLM2-style architecture~\citep{allal2025smollm2smolgoesbig} at two scales: our primary experiments use a 135M-parameter model, and we additionally run a 1B-parameter variant to test whether the same qualitative patterns persist at larger scale. For the 135M experiments, we pretrain on approximately 10B tokens from $\dweb$, optionally with early exposure to $\dpost$ during Stage~1. Starting from the resulting checkpoint $\pre$, we perform Stage~2 post-training on $\dpost$ using AdamW with linear warmup and cosine decay. In all but the compute-matched experiments, training proceeds exclusively on $\dpost$, with no restriction on dataset repetitions: we apply early stopping and continue training as long as validation loss on $\dpost$ improves (up to a maximum budget of 2B tokens). This ensures that all models are trained to convergence on $\dpost$, and that differences in downstream retention are not attributable to unequal training duration. We then fine-tune each post-trained checkpoint $\post$ on $\dft$ for a fixed token budget of 200M tokens with various learning rates.

Full optimizer settings, sweep ranges, and per-intervention details are provided in Appendix~\ref{appendix:training}.

\section{Experiments and Results}
\label{sec:mainresults}

We begin by asking whether pretraining-time mixing has any effect once post-training is run to convergence (Section \ref{subsec:latent-mixing}). We then ask whether scarce post-training data should be mixed during pretraining or reserved for a dedicated post-training stage (Section \ref{subsec:compute-matched}). Finally, we ask whether the benefits of mixing persist across broad hyperparameter sweeps and multiple pipeline instantiations (Section \ref{subsec:frontier-main}), before turning to replay and dropout as complementary post-training interventions (Section \ref{subsec:dropout-replay}).

\begin{figure}[t]
    \centering
    \begin{subfigure}[t]{0.49\textwidth}
        \centering
        \includegraphics[width=\linewidth]{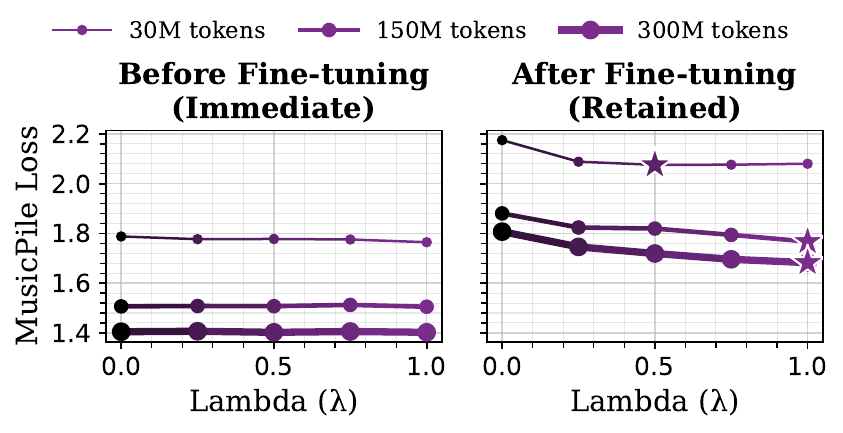}
        \caption{Varying mixture fraction $\lambda$.}
        \label{fig:impactmixingratio_saturation}
    \end{subfigure}
    \hfill
    \begin{subfigure}[t]{0.49\textwidth}
        \centering
        \includegraphics[width=\linewidth]{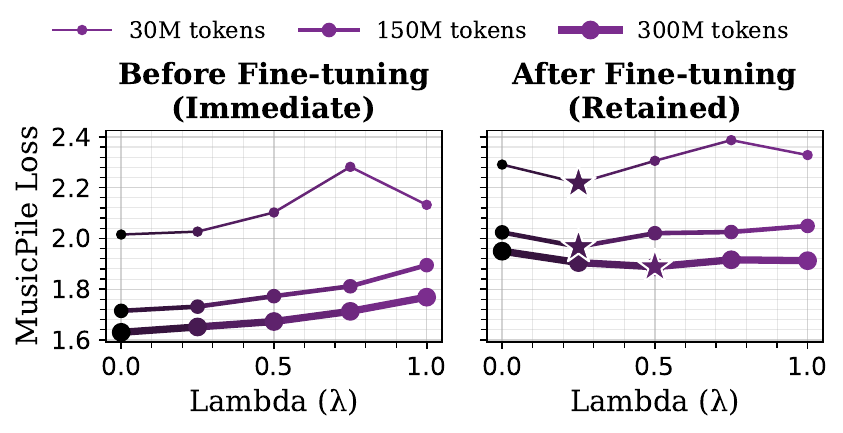}
        \caption{Compute-matched setting.}
        \label{fig:computematched}
    \end{subfigure}
    \caption{
    \textbf{Left:} As the mixture fraction $\lambda$ increases, immediate MusicPile loss after post-training remains nearly constant, while retained MusicPile loss after downstream fine-tuning on ChemPile improves. This shows that the benefits of mixing can be \textit{latent}: they may not be visible immediately after post-training, but emerge after subsequent fine-tuning.
    \textbf{Right:} In a compute-matched setting where total MusicPile exposure is held fixed across pretraining and post-training, increasing $\lambda$ worsens immediate MusicPile loss after post-training but improves retained MusicPile loss after downstream fine-tuning. Thus, even under a fixed MusicPile token budget, allocating some exposure earlier in training yields better retention.}
    \label{fig:latent-mixing-and-compute-matched}
\end{figure}

\subsection{Immediate post-training performance does not reflect downstream retention}
\label{subsec:latent-mixing}

We begin with a controlled study of the mixing ratio $\lambda$, designed to isolate whether early exposure to $\dpost$ has any effect once upstream post-training is run to convergence using early stopping. Unlike the broad hyperparameter sweeps used in our main frontier analysis, these experiments fix the Stage~2 post-training procedure and vary only how much of $\dpost$ is seen during Stage~1 pretraining.

\textbf{Setup.}
We fix the post-training configuration and vary only the pretraining mixing fraction $\lambda \in \{0, 0.25, 0.5, 0.75, 1.0\}$. We study three post-training dataset sizes $|\dpost| \in \{30\text{M},150\text{M},300\text{M}\}$ where $\dpost\subset\text{MusicPile}$. Starting from each pretrained checkpoint, we post-train on $\dpost$ until convergence using a fixed hyperparameter configuration. To induce forgetting, we then fine-tune on $\dft\subset\text{ChemPile}$, and report both the immediate post-training loss $\limm$ and the retained post-training loss $\lret$ at a fixed Stage~3 learning rate of $5\times 10^{-5}$ (Figure~\ref{fig:impactmixingratio_saturation}).

\textbf{Result.}
Varying $\lambda$ has little effect on immediate post-training performance: across dataset sizes, $\limm$ remains nearly flat as the mixing fraction increases. In other words, once post-training is allowed to run to convergence, mixed and unmixed models reach similar performance on $\dpost$. However, these checkpoints behave very differently under subsequent fine-tuning. As $\lambda$ increases, the retained post-training loss $\lret$ consistently decreases, indicating that models with more exposure to $\dpost$ during pretraining forget less after subsequent downstream adaptation. 

\begin{takeawaybox}
\textbf{Takeaway.} Early exposure can substantially improve retention under subsequent fine-tuning even when it provides little or no benefit to immediate post-training performance.
\end{takeawaybox}

\subsection{Early exposure and post-training play different roles under a fixed data budget}
\label{subsec:compute-matched}

The experiment above in Section \ref{subsec:latent-mixing} shows that early exposure to post-training data can improve its downstream retention even when it has little effect on immediate post-training performance. However, those experiments do not isolate whether the benefit comes from \emph{when} $\dpost$ is introduced or simply from the model seeing more total $\dpost$ tokens. We therefore ask a more controlled question: under a fixed $\dpost$ budget, should post-training data be mixed into pretraining at all, or is it better reserved for a dedicated post-training stage?

\textbf{Setup.}
We fix the total number of $\dpost$ tokens seen across Stage~1 pretraining and Stage~2 post-training and vary only how that budget is allocated. For each mixing fraction $\lambda \in \{0, 0.25, 0.5, 0.75, 1.0\}$, we expose the model to a $\lambda$-fraction of $\dpost$ during pretraining and reserve the remaining $(1-\lambda)$ fraction for post-training, so every model sees exactly one pass over $\dpost$ in total. Thus, $\lambda=0$ assigns the full budget to dedicated post-training, while $\lambda=1$ assigns it entirely to mixed pretraining. We evaluate this allocation study with $\dpost\subset\text{MusicPile}$ at three dataset sizes, $|\dpost| \in \{30\text{M},150\text{M},300\text{M}\}$, use $\dft\subset\text{ChemPile}$ for downstream fine-tuning, and report both $\limm$ and $\lret$ after Stage~3 fine-tuning at a fixed learning rate of $5 \times 10^{-5}$ (Figure~\ref{fig:computematched}).

\textbf{Result.}
Figure~\ref{fig:computematched} reveals a clear tradeoff. As $\lambda$ increases, the immediate post-training loss $\limm$ worsens: under a fixed $\dpost$ budget, allocating more post-training data to pretraining leaves fewer tokens for the post-training stage that exclusively optimizes for $\dpost$. The downstream picture is different. Increasing $\lambda$ consistently improves retained performance after Stage~3 fine-tuning, lowering $\lret$ across dataset sizes. As a result, the best immediate post-training performance is achieved by allocating all of $\dpost$ to Stage~2, while the best retained performance after downstream adaptation is achieved at a nonzero mixture fraction. Even under a fixed data budget, the optimum therefore lies between the two extremes of all-post-training and all-pretraining allocation. This suggests that dedicated post-training and early exposure are doing something meaningfully different: concentrating $\dpost$ in Stage~2 yields stronger immediate fitting to the post-training domain, while exposure during pretraining makes that capability less brittle under later training. We provide a theoretical understanding of this in Section \ref{sec:theory}.

\begin{takeawaybox}
\textbf{Takeaway:} Under a fixed $\dpost$ budget, the best immediate post-training performance occurs when all data is reserved for Stage~2, but the best retained performance after downstream fine-tuning occurs at a positive mixture fraction.
\end{takeawaybox}

\subsection{Early exposure improves the loss frontier across hyperparameter sweeps}
\label{subsec:frontier-main}

The controlled studies above isolate two key phenomena: the benefit of mixing can be latent, and under a fixed post-training-data budget the best retained performance is achieved at a nonzero mixture fraction. We now ask whether these conclusions persist once we move beyond controlled comparisons to the more realistic setting in which both post-training and downstream fine-tuning offer many tunable degrees of freedom. In practice, an upstream developer can vary post-training hyperparameters to reach different tradeoffs between adaptation and retention, while a downstream end user may likewise vary fine-tuning hyperparameters to target different operating points. We therefore evaluate each upstream strategy not by a single checkpoint, but by the frontier of attainable checkpoints it induces across broad Stage~2 post-training sweeps and Stage~3 fine-tuning sweeps.

\textbf{Setup.}
For each pipeline in Table~\ref{tab:instantiations}, we sweep Stage~2 post-training hyperparameters under both unmixed and mixed pretraining, then fine-tune every resulting checkpoint on the downstream objective using a range of Stage~3 learning rates, and evaluate the paired frontier views shown in Figure~\ref{fig:four_pipeline_frontiers}. Within each pipeline, the left panel plots retained post-training loss against downstream fine-tuning loss, while the right panel plots retained pretraining loss against retained post-training loss.

\textbf{Result.}
Across all four pipelines, early exposure consistently shifts the frontier relative to unmixed pretraining. In the $(\lret,\lft)$ view, mixing yields lower retained post-training loss at matched downstream fine-tuning loss, indicating greater robustness of the post-trained capability under subsequent adaptation. In the $(\lpre, \lret)$ view, mixing also improves the tradeoff between preserving broader pretraining capabilities and preserving the post-trained capability. These gains appear across both domain and behavioral post-training settings, suggesting that the benefit of mixing is not confined to a single dataset pair or narrow training regime. We further find that a similar qualitative frontier improvement appears in our 1B experiments, suggesting that the benefit of mixing persists beyond the small-model setting (Figures \ref{fig:1b-mixed-mp}
and \ref{fig:1b-mixed-flan-chempile}).

\begin{takeawaybox}
\textbf{Takeaway:} Across hyperparameter sweeps and training pipelines, early exposure consistently improves the attainable tradeoffs among downstream fine-tuning performance, retained post-training performance, and retained pretraining performance.
\end{takeawaybox}

\subsection{Replay and dropout provide complementary gains}
\label{subsec:dropout-replay}

\begin{figure}[t]
      \centering
      \includegraphics[width=\mainfigwidth]{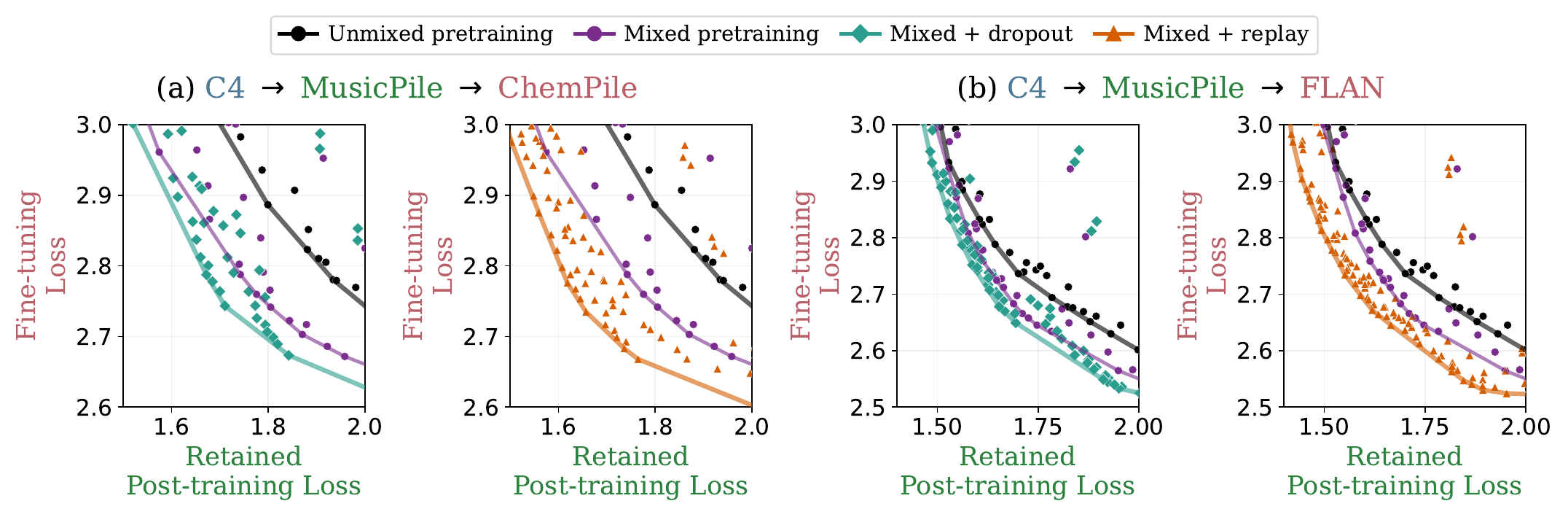}
    \caption{\textbf{Replay and dropout provide complementary gains on top of mixed pretraining.}
Each subfigure shows one 3-stage pipeline. Within each subfigure, the \textbf{left} panel compares unmixed pretraining, mixed pretraining, and mixed pretraining + dropout, while the \textbf{right} panel compares unmixed pretraining, mixed pretraining, and mixed pretraining + replay. Across both downstream settings, adding dropout or replay to mixed pretraining further shifts the loss frontier, indicating that these post-training interventions provide complementary gains rather than replacing the effect of pretraining-time mixing.}
    \label{fig:mixing_combined}
\end{figure}

Mixing $\dpost$ into pretraining is one way to shape how the model acquires the target post-training capability, but it is not the only one. We next study two alternative upstream interventions that act during post-training itself: \emph{replay}, which mixes a small amount of general-domain data into post-training, and \emph{dropout}, which regularizes the post-training update. We view both as interventions on \emph{how} the model learns $\dpost$, rather than simply how much performance it achieves immediately after post-training.

\textbf{Setup.}
We evaluate replay and dropout in the same three-stage framework as above, using broad Stage~2 hyperparameter sweeps and, for every resulting checkpoint, sweeping Stage~3 learning rates before measuring the resulting frontier between downstream fine-tuning loss $\lft$ and retained post-training loss $\lret$. Replay is implemented by mixing a small fraction (1\% following \citet{bethune2025scalinglawsforgettingfinetuning}) of general-domain data from $\dweb$ into post-training, while dropout is applied during post-training as a regularizer. At 1B we additionally test a larger (10\%) replay fraction (Appendix~\ref{app:1b-dropout-replay}).

\textbf{Result.}
Figure~\ref{fig:mixing_combined} shows that both replay and dropout further improve the loss frontier relative to early exposure alone. Replay encourages the model to acquire the post-training domain without simply overwriting broader pretraining features, while dropout may promote more distributed and robust representations during Stage~2 learning. These gains persist after downstream fine-tuning in both representative pipelines we study, indicating that post-training interventions can meaningfully improve robustness to later adaptation.

At 1B, the effect of replay on the $(\lft,\lret)$ frontier is weaker. However, replay remains useful for preserving broader general-domain performance, suggesting that its benefits may shift across scales and objectives (figures in Appendix \ref{app:1b-dropout-replay}). 

\textbf{Complementarity with mixing.}
Neither replay nor dropout eliminates the value of pretraining-time mixing. Instead, both remain complementary to mixing: the strongest frontiers are obtained by combining post-training interventions with mixed pretraining. This reinforces the broader picture developed above. Pretraining-time mixing plays a distinct role by changing when the model first encounters the post-training capability, while replay and dropout shape how that capability is learned during Stage~2.

\begin{takeawaybox}
\textbf{Takeaway:} Replay and dropout are upstream alternatives to mixing during pretraining that can also improve robustness to subsequent fine-tuning, and their gains are complementary with early exposure. 
\end{takeawaybox}

\section{Theoretical Analysis of Early Exposure}
\label{sec:theory}

We find that mixing even a small fraction of $\dpost$ into pretraining substantially improves retention under subsequent fine-tuning, and that this effect is complementary to various post-training regularizations applied during post-training. This is striking, as there is little reason to expect that a small change to the pretraining corpus should persistently shape retention two stages downstream. Moreover, these benefits persist across a range of post-training interventions, suggesting that mixing operates on a different axis than techniques that regularize the post-training or fine-tuning stages. Broadly, our findings indicate that early exposure has a unique effect on how capabilities are implemented in the model, which then propagates through subsequent training stages.

We now formally characterize how early exposure affects the way post-trained capabilities are implemented in the model and their vulnerability to future forgetting. We analyze our three-stage pipeline in a two-layer linear model, which enables a precise characterization of feature learning during pretraining and its impact on subsequent training dynamics.

\textbf{Setup.} We train a two-layer linear network $\theta = \mathbf{W}_{1}\mathbf{W}_{2}$ sequentially on three regression tasks simulating $\dweb$, $\dpost$, and $\dft$, using gradient descent on the squared loss $\mathcal{L}(\theta;\mathcal{D}) = \mathbf{E}[||\theta\mathbf{x}-\mathbf{y}||^{2}]$. Each task is defined by an input distribution and a ground-truth linear map $\mathbf{A}^{t}$, with $\mathbf{y} = \mathbf{A}^{t}\mathbf{x}$. Following \citet{springer2025overtrainedlanguagemodelsharder}, we assume all tasks share singular vectors $\mathbf{U},\mathbf{V}$ so that $\mathbf{A}^{t} = \mathbf{U}\mathbf{\Sigma}_{t}\mathbf{V}^{\top}$; the singular values of $\mathbf{\Sigma}_{t}$ are the \emph{features} of task $t$.

\textbf{Feature structure.} We consider the input-space to be partitioned into three blocks, exhibiting different behaviors across the tasks:
\begin{itemize}[nosep,leftmargin=*]
\item \textit{Invariant features} ($n{-}2k$ features) have identical singular values across $\mathbf{A}^{\textrm{gen}}$, $\mathbf{A}^{\textrm{spec}}$, and $\mathbf{A}^{\textrm{ft}}$.
\item \textit{Inconsistent features} ($k$ features) have shared dimensions where the tasks disagree---the singular values differ between $\mathbf{A}^{\textrm{gen}}$, $\mathbf{A}^{\textrm{spec}}$, and $\mathbf{A}^{\textrm{ft}}$.
\item \textit{Specialized features} ($k$ features) are active only on $\dpost$, while having zero covariance under pretraining and downstream distributions.
\end{itemize}

\textbf{Task definitions.} The pretraining task $\dweb$ draws inputs from $x \sim \mathcal{N}(0, \mathbf{I}_{n-k})$, activating only shared dimensions. The post-training task $\dpost$ draws inputs from $x \sim \mathcal{N}(0, \mathbf{I}_{n})$, activating all dimensions including the specialized ones. Like pretraining, the downstream task $\dft$ draws inputs from $x \sim \mathcal{N}(0, \mathbf{I}_{n-k})$---critically, it does not activate the specialized features. We model \textit{early exposure} by training on the distribution $\mathcal{D}_{\textrm{mixed}} = (1{-}\alpha)\dweb + \alpha\dpost $. Full details and formal assumptions are given in Appendix~\ref{app:thy}.

\subsection{Early exposure learns different features}
We first characterize what features each pretraining strategy learns.
\begin{theorem}[\textit{Informal:} Only Early Exposure Learns Specialized Features] Let $\theta^{\textrm{mixed}}$ and $\theta^{\textrm{unmixed}}$ be the parameters learned by pretraining on $\mathcal{D}_{\textrm{mixed}}$ and $\dweb$, respectively, after sufficient training. Then $\theta^{\textrm{mixed}}$ learns the specialized features, while $\theta^{\textrm{unmixed}}$ does not.
\end{theorem}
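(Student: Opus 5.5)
We work in the rotated basis and use the shared singular vectors $\mathbf{U},\mathbf{V}$ to diagonalize the dynamics. Write $\theta = \mathbf{W}_1\mathbf{W}_2$ and change coordinates to $\tilde{\mathbf{W}}_1 = \mathbf{U}^\top \mathbf{W}_1$ and $\tilde{\mathbf{W}}_2 = \mathbf{W}_2\mathbf{V}$. The inputs are isotropic Gaussians restricted to a coordinate block, so the population loss on any of the task distributions takes the form $\mathbf{E}\|\theta\mathbf{x}-\mathbf{y}\|^2 = \|(\theta - \mathbf{A}^t)\mathbf{\Sigma}_x^{1/2}\|_F^2$ with $\mathbf{\Sigma}_x$ diagonal in the $\mathbf{V}$ basis. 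For the mixture $\mathcal{D}_{\textrm{mixed}}$ the loss is the convex combination
\[
(1-\alpha)\|(\theta-\mathbf{A}^{\textrm{gen}})\mathbf{P}_{n-k}\|_F^2+\alpha\|\theta-\mathbf{A}^{\textrm{spec}}\|_F^2 ,
\]
where $\mathbf{P}_{n-k}$ projects onto the shared coordinates. Under the spectral (small, balanced, aligned) initialization assumed in Appendix~\ref{app:thy}, following \citet{springer2025overtrainedlanguagemodelsharder}, the gradient flow stays diagonal. Each mode $i$ then evolves independently as a scalar product $s_i = a_i b_i$ with balanced dynamics $\dot a_i = b_i(c_i - \kappa_i s_i)$, where $\kappa_i$ is the input variance in that direction and $c_i$ is the weighted target. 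This reduces the statement to a per-mode analysis.

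The key steps are as follows. (i) For unmixed pretraining on $\dweb$, the specialized modes have $\kappa_i = 0$ and $c_i = 0$, so the gradient with respect to those modes vanishes identically. Their parameters therefore remain at initialization, and the learned singular value $s_i$ stays at its $O(\text{init}^2)$ value, which is zero in the vanishing-initialization limit. Hence $\theta^{\textrm{unmixed}}$ does not learn the specialized features. (ii) For mixed pretraining, the specialized modes have $\kappa_i = \alpha$ and $c_i = \alpha\,\sigma_i^{\textrm{spec}}$, so the scalar dynamics $\dot s_i = 2\sqrt{s_i^2}\,\alpha(\sigma_i^{\textrm{spec}} - s_i)$ (balanced case) has the unique attracting fixed point $s_i = \sigma_i^{\textrm{spec}}$. (iii) We solve this logistic-type ODE in closed form, $s_i(t) = \sigma/(1+(\sigma/s_0-1)e^{-2\alpha\sigma t})$. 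This gives an explicit time $T = O\big(\frac{1}{\alpha\sigma}\log\frac{\sigma}{s_0\epsilon}\big)$ after which $|s_i - \sigma_i^{\textrm{spec}}| \le \epsilon$. This makes precise what ``sufficient training'' means. (iv) For completeness, the shared modes converge in both cases: to $\sigma^{\textrm{gen}}$ in the unmixed case, and to the weighted fixed point $(1-\alpha)\sigma^{\textrm{gen}} + \alpha\sigma^{\textrm{spec}}$ for inconsistent features in the mixed case. These limits are not needed for the theorem but will be needed for later results.

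The main obstacle is justifying the decoupling into scalar modes, that is, showing that off-diagonal components stay zero along the trajectory. To handle it, I would first show that the set of parameters with $\tilde{\mathbf{W}}_1$ and $\tilde{\mathbf{W}}_2$ diagonal and balanced is invariant under the flow. The gradient of the loss at a diagonal point is diagonal, since all covariances and targets are diagonal in this basis, and the balancedness quantity $\mathbf{W}_1^\top\mathbf{W}_1 - \mathbf{W}_2\mathbf{W}_2^\top$ is conserved by gradient flow. A secondary subtlety is that the specialized modes require a nonzero initialization to escape the saddle at $s_i = 0$. I would therefore state the mixed result for small but nonzero init $s_0 > 0$, and note that the escape time scales like $\log(1/s_0)/\alpha$. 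This is where the mixing fraction $\alpha$ controls how much training counts as ``sufficient''.
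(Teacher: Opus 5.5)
Your proof is correct for the informal statement, but it takes a different route from the paper and proves a different version of the result. The paper never analyzes the dynamics directly. It notes that the mixture's cross-covariance is $(1-\alpha)\mathbf{\Sigma}^{\textrm{pre}}_{xy}+\alpha\mathbf{\Sigma}^{\textrm{post}}_{xy}$. It then uses Assumptions~\ref{assmp:invfeatures} and~\ref{assmp:suffmixing} to order the singular values: invariant first, then specialized at $\alpha\beta$, then inconsistent at $(1-\alpha)\sigma^{\textrm{pre}}_i+\alpha\sigma^{\textrm{post}}_i$. Finally it applies the sequential-learning result, Theorem~\ref{thm:seqlearning}, at the intermediate time $t_{n-k}$. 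So in the paper ``sufficient training'' means stopping on a plateau, not converging. The formal Theorem~\ref{thm:dfffeatures} therefore says more than your result: the mixed checkpoint has learned the specialized block \emph{while its inconsistent block is still zero}.

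Your per-mode ODE argument has real advantages. It justifies the decoupling itself, via invariance of the aligned balanced set, rather than citing Gidel et al. It needs no ordering assumption, and it makes the unmixed claim hold for all time. It also gets the limit right. The mixture has input variance $\alpha$ along the specialized directions, which violates the whitened-input hypothesis in Assumption~\ref{assmp:covariance_decomp}. So $\alpha\beta$ governs the learning order, but the learned value is $\beta$, as you found.

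What the paper's route buys is that zero inconsistent block, which is exactly what the later results rely on. Zero modes stay zero during post-training. Hence the mixed model fits $\mathcal{D}_{\textrm{post}}$ only through specialized features, and $\Delta_{\textrm{mixed}}=0$ after fine-tuning. Your remark (iv) therefore points the wrong way. At your converged mixed checkpoint the inconsistent modes are nonzero. Post-training would move them to $\sigma^{\textrm{post}}_i$, and fine-tuning would then overwrite them exactly as in the unmixed model.

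You can recover the paper's version inside your own framework by comparing escape rates. Near zero, a mode grows like $s_0e^{2c_it}$. Take the time $T\approx\frac{1}{2\alpha\beta}\log\frac{\beta^2}{s_0\epsilon}$ at which the specialized modes are $\epsilon$-close to $\beta$. At that time, by Assumption~\ref{assmp:suffmixing}, each inconsistent mode is only of order $s_0^{1-c_i/(\alpha\beta)}$, which tends to $0$ as $s_0\to 0$. The unmixed run must be stopped separately, for example at convergence. The same assumption forces $\sigma^{\textrm{pre}}_i<\alpha\beta$, so the unmixed inconsistent modes escape later than the mixed specialized ones.
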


The key mechanism is that linear networks learn features in descending order of their singular value~\citep{gidel2019implicitregularizationdiscretegradient,springer2025overtrainedlanguagemodelsharder}, with remaining features staying near zero. Without early exposure, the $k$ specialized features have the lowest singular values and are not learned. Early exposure, on the other hand, boosts the effective singular value above the learning threshold. See Appendix~\ref{app:thy} for the formal statement.

\textbf{Impact of a Small Mixing Ratio.} The mixing fraction $\alpha$ need not be large to have an impact. Suppose the specialized features have a singular value $\beta$ in the post-training task ($A^{\textrm{post}}$). Without exposure to the post-training data, specialized features have zero effective singular value and are thus never learned. Mixing an $\alpha$-fraction of $\dpost$ boosts the effective singular value to $\alpha\beta$, so when $\beta$ is sufficiently large, even small $\alpha$ suffices to cross the threshold at which specialized features are learned. Empirically, we also observe benefits from early exposure even when the quantity of mixed data is small relative to the total pretraining corpus.

\subsection{Post-training primarily reuses existing features}

Next, we study the impact of the different features learned by early exposure on the post-training process.

\begin{theorem}[\textit{Informal}: Post-training on $\theta^{\textrm{mixed}}$ versus $\theta^{\textrm{unmixed}}$] After sufficient post-training on $\dpost$, $\theta^{\textrm{mixed}}_{\textrm{post}}$ converges to parameters that leverage specialized features to reduce post-training loss, while $\theta^{\textrm{unmixed}}_{\textrm{post}}$ converges to parameters that only modify the inconsistent features.
\end{theorem}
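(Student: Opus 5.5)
The plan is to work in the spectral basis shared by all tasks. Because every $\mathbf{A}^{t}=\mathbf{U}\mathbf{\Sigma}_{t}\mathbf{V}^{\top}$ and every input covariance is diagonal in $\mathbf{V}$, we can use a balanced, aligned initialization in the style of \citet{springer2025overtrainedlanguagemodelsharder}. Under that initialization the gradient flow on the two-layer network $\theta=\mathbf{W}_1\mathbf{W}_2$ decouples into independent scalar dynamics, one per mode $i$. Write $\theta=\mathbf{U}\,\mathrm{diag}(s_i)\mathbf{V}^{\top}$, where $s_i=a_ib_i$ is the product of the two layer weights on mode $i$. For a task with per-mode input variance $c_i$ and target singular value $\sigma_i$, the dynamics are
\[
\dot s_i = 2 c_i\, (a_i^2+b_i^2)\,(\sigma_i - s_i).
\]
In the balanced case $a_i^2+b_i^2 = 2|s_i|$. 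This is the sigmoidal/saddle-to-saddle ODE. Its key property is that a mode with $c_i=0$ stays exactly where it starts. A mode with small initial magnitude also stays near zero unless $c_i\sigma_i$ is large enough. This is the mechanism behind the previous theorem.

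We take the output of the previous theorem as our starting state. For $\theta^{\textrm{mixed}}$, the invariant and inconsistent modes sit near the pretraining-mixture targets, and the specialized modes satisfy $s_i\approx \alpha\beta/\alpha=\beta$. More carefully, the mixed objective has effective coefficient $c_i\sigma_i=\alpha\beta$ on the specialized modes, so their fixed point is $\beta$, and they have escaped the origin. For $\theta^{\textrm{unmixed}}$, the specialized modes remain at their $O(\epsilon)$ initialization. This is because $c_i=0$ under $\dweb$, so they never move.

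Next we run post-training on $\dpost$, which has $c_i=1$ for all modes, and analyze each block separately.

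\textbf{Invariant modes.} These are already at $\sigma_i$, which is a fixed point, so they do not move.

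\textbf{Inconsistent modes.} Here $s_i\neq \sigma^{\textrm{spec}}_i$. These modes are at nonzero magnitude, so their rate $a_i^2+b_i^2$ is bounded below. They therefore converge exponentially to $\sigma^{\textrm{spec}}_i$ in both models. This is the "modify inconsistent features" part of the statement.

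\textbf{Specialized modes.} The two models differ here.
\begin{itemize}
\item In the mixed model, $s_i$ starts at order $\beta$. The ODE is then linear-rate, and the mode converges to $\beta$ within time $O(\log(1/\delta)/\beta)$. So $\theta^{\textrm{mixed}}_{\textrm{post}}$ uses the specialized features to drive the loss to zero.
\item In the unmixed model, $s_i$ starts at magnitude $\epsilon$. Solving the logistic ODE $\dot s=4s(\beta-s)$ gives an escape time of order $\frac{1}{4\beta}\log(\beta/\epsilon)$. We define "sufficient post-training" as a horizon $T$ that exceeds the convergence time of the inconsistent modes but is much smaller than this escape time; early stopping in the experiments plays the analogous role. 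On $[0,T]$ we get $s_i(t)\le \epsilon e^{4\beta T}$, which stays small. Hence the unmixed solution changes only the inconsistent block, up to vanishing terms.
\end{itemize}

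If the formal statement instead fixes $T$ via a loss-based stopping rule, I would show that the loss restricted to the non-specialized block converges first. I would then identify the specialized residual $\beta^2 k$ as a plateau that gradient descent sits on for the escape-time window, and show that the stopping criterion is met on that plateau.

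The main obstacle is making this separation of timescales rigorous uniformly. It requires three comparisons: the ratio of the escape time $\log(\beta/\epsilon)/\beta$ to the inconsistent-mode convergence time, which depends on the gaps $|\sigma^{\textrm{gen}}_i-\sigma^{\textrm{spec}}_i|$ and their magnitudes; handling discrete-step gradient descent rather than gradient flow; and allowing a small imbalance in the initialization. My first attempt would be to bound the discrete iterates by the continuous ones with step size $\eta<1/(4\max_i\sigma_i)$. I would then state the result in terms of $\epsilon\to 0$, so that the escape time diverges while every other timescale stays fixed.
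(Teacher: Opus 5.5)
Your argument is sound, but it takes a different route from the paper. The paper works with the \emph{idealized} pretrained checkpoints, in which every mode not learned during pretraining is exactly zero. This covers the specialized block of $\theta^{\mathrm{unmixed}}$. It also covers the inconsistent block of $\theta^{\mathrm{mixed}}$, because pretraining is evaluated at the time when exactly $n-k$ modes have been learned, and the sufficient-mixing assumption ranks the specialized modes above the inconsistent ones. Zero is an exact fixed point of the per-mode update, so absent modes stay absent for every $K$. The convergence lemma adapted from \citet{springer2025overtrainedlanguagemodelsharder} then handles the nonzero modes once $K \ge \frac{1}{\lambda c_{\min}}\log\frac{100\Gamma}{\epsilon}$. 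This gives $\theta^{\mathrm{mixed}}_{\mathrm{post}}\approx\Sigma^{\mathrm{spec,post}}$ and $\theta^{\mathrm{unmixed}}_{\mathrm{post}}\approx\Sigma^{\mathrm{shared,post}}$, with only a lower bound on the horizon.

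You instead keep the $O(\epsilon)$ residual left by initialization, so ``absent stays absent'' is only metastable. You then need a horizon shorter than the escape time $\approx\log(\beta/\epsilon)/(4\beta)$ together with a timescale separation. You resolve this correctly by sending $\epsilon\to0$ at fixed problem parameters. This step requires the unmixed checkpoint's inconsistent modes to start at a magnitude bounded away from zero, i.e.\ $\sigma^{\mathrm{pre}}_i$ bounded below; the paper uses this tacitly as well.

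The paper's version is short and gives genuine convergence, but only through an idealization that the sequential-learning theorem does not literally deliver. Yours is faithful to the finite initialization scale. It also makes explicit that post-training run literally to convergence would eventually grow the specialized modes of the unmixed model too. The price is a window-type conclusion and extra bookkeeping.

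One substantive difference in setup: you run mixed pretraining to convergence, so your $\theta^{\mathrm{mixed}}$ has also fit the inconsistent block. (Your specialized fixed point $\beta$, rather than the paper's $\alpha\beta$, correctly accounts for the input variance $\alpha$ on those directions.) Your mixed model therefore uses both blocks, whereas the paper's uses the specialized block alone. This still establishes the informal statement. However, in your version both models move the inconsistent block identically, so after fine-tuning on $\mathcal{D}_{\mathrm{ft}}$ their forgetting increments essentially coincide. The paper's subsequent claim $\Delta_{\mathrm{mixed}}=0$ depends on the mixed checkpoint's inconsistent block being exactly empty.
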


The key insight is that features absent at initialization remain absent throughout post-training: if $\Sigma_{ii} = 0$ at the start of post-training, it stays $0$. Since $\theta^{\textrm{unmixed}}$ never learned the specialized features, post-training from this checkpoint can only reduce loss on $\dpost$ by distorting the inconsistent features. Post-training from $\theta^{\textrm{mixed}}$, having learned the specialized features during pretraining, can additionally improve loss on $\dpost$ using the specialized features. 

\colmtight\textbf{Cost of Specialization to General Performance.} Our analysis shows that, without early exposure, adaptation to $\dpost$ proceeds by modifying features that also support general capabilities—namely, the inconsistent features—thereby degrading performance on the pretraining task (i.e., increasing loss on $\dweb$). In contrast, early exposure enables the formation of specialized features that support post-training without interfering with other tasks. Consistent with the predictions of our analytical model, we observe that post-training following early exposure incurs less degradation in C4 loss than post-training from a base model without early exposure (Figure~\ref{fig:four_pipeline_frontiers}).

\subsection{Characterizing downstream forgetting behavior}

Finally, we show that the different feature usage established above directly determines forgetting under downstream fine-tuning.

\begin{theorem}[\textit{Informal:} $\theta^{\textrm{unmixed}}_{\textrm{post}}$ experiences more forgetting than $\theta_{\textrm{post}}^{\textrm{mixed}}$] Let $\Delta_{\textrm{mixed}}, \Delta_{\textrm{unmixed}}$ denote the increase in loss on $\dpost$ after fine-tuning on $\dft$, for the mixed and unmixed checkpoints, respectively. Then $\Delta_{\textrm{unmixed}} \geq \Delta_{\textrm{mixed}}$.
\end{theorem}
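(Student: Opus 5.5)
We work throughout in the diagonalized coordinates supplied by the shared singular vectors $\mathbf{U},\mathbf{V}$. Under the balanced/aligned initialization assumptions of Appendix~\ref{app:thy}, gradient descent keeps $\theta=\mathbf{W}_1\mathbf{W}_2$ of the form $\mathbf{U}\,\mathrm{diag}(s)\,\mathbf{V}^\top$. The dynamics then decouple into independent scalar ODEs, one per feature $i$:
\[
\dot s_i = c\, s_i\,(a^t_i - s_i)\,\gamma^t_i .
\]
Here $a^t_i$ is the target singular value for task $t$, and $\gamma^t_i\in\{0,1\}$ records whether dimension $i$ has nonzero input covariance under task $t$. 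The loss on $\dpost$ is likewise a sum of per-feature terms, $\mathcal{L}(\theta;\dpost)=\sum_i (s_i-a^{\textrm{spec}}_i)^2$. Consequently $\Delta$, the increase in loss on $\dpost$, splits into contributions from the invariant, specialized and inconsistent blocks, and we compare the three blocks one at a time.

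\textbf{Invariant and specialized blocks.} For the invariant block, both the mixed and unmixed checkpoints sit at the common target value, which is a fixed point of fine-tuning. The contribution is therefore zero in both cases.

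For the specialized block, $\gamma^{\textrm{ft}}_i=0$, so fine-tuning leaves $s_i$ unchanged for both checkpoints. This block contributes nothing to either $\Delta$, even though it does lower the mixed checkpoint's absolute loss.

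\textbf{Inconsistent block.} Here the comparison carries real content. Under fine-tuning, each $s_i$ flows monotonically from its post-training value $s^{\textrm{post}}_i$ toward $a^{\textrm{ft}}_i$, so the increase is
\[
(s^{\textrm{ft}}_i-a^{\textrm{spec}}_i)^2-(s^{\textrm{post}}_i-a^{\textrm{spec}}_i)^2 .
\]
The previous theorem gives us two facts. The unmixed checkpoint reduces post-training loss only by moving the inconsistent features toward $a^{\textrm{spec}}_i$. The mixed checkpoint, after (early-stopped or finite) post-training, has its inconsistent features no closer to $a^{\textrm{spec}}_i$, because part of the loss reduction was carried by the specialized features.

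The plan is to show that this per-feature increase is monotone in $|s^{\textrm{post}}_i-a^{\textrm{spec}}_i|$. Concretely, starting closer to $a^{\textrm{spec}}_i$ makes the increase larger when fine-tuning runs for a fixed time, or to convergence, with the same learning rate. At convergence this is immediate: $s^{\textrm{ft}}_i=a^{\textrm{ft}}_i$ for both checkpoints, so
\[
\Delta_i=(a^{\textrm{ft}}_i-a^{\textrm{spec}}_i)^2-(s^{\textrm{post}}_i-a^{\textrm{spec}}_i)^2,
\]
which is decreasing in the post-training distance. For finite time we would use the explicit logistic solution of the scalar ODE together with a comparison (order-preservation) argument. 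Summing over blocks then gives $\Delta_{\textrm{unmixed}}\ge\Delta_{\textrm{mixed}}$.

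\textbf{Main obstacle.} The hardest step is establishing rigorously that the mixed checkpoint's inconsistent features lie farther from $a^{\textrm{spec}}$ than the unmixed checkpoint's. If both are post-trained fully to convergence, each inconsistent $s_i$ tends to $a^{\textrm{spec}}_i$ regardless of the specialized block, and the gap vanishes, giving equality. This is consistent with the weak inequality $\ge$ in the statement.

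So we would first prove equality in the fully converged case. For the strict regime, we would formalize post-training as stopping at a matched post-training loss, mirroring the paper's early stopping. Matched total loss, combined with the strictly smaller specialized-block loss for the mixed checkpoint, forces a larger inconsistent-block residual for the mixed checkpoint. If needed we would add an assumption that the $k$ inconsistent features share a common target, so that the per-feature comparison follows from the aggregate one.
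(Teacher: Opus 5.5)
Your block decomposition, and your handling of the invariant and specialized blocks, match the paper. The inconsistent block, however, rests on a wrong picture of the mixed checkpoint, and this is where the argument breaks.

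By Theorem~\ref{thm:dfffeatures} (via Assumption~\ref{assmp:suffmixing}), mixed pretraining learns the specialized features \emph{instead of} the inconsistent ones: $\mathbf{\Sigma}_{\mathrm{mixed}}$ has $\mathbf{0}_k$ on the inconsistent block. The two-layer dynamics are multiplicative; your own ODE has $s_i=0$ as a fixed point. So these entries stay exactly zero through post-training, which is what the post-training theorem says, since its target $\mathbf{\Sigma}^{\mathrm{spec,post}}$ is zero there. They also stay zero through fine-tuning. Hence the inconsistent block contributes nothing to $\Delta_{\mathrm{mixed}}$, and $\Delta_{\mathrm{mixed}}=0$ outright.

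The paper's proof then reduces to a lower bound on $\Delta_{\mathrm{unmixed}}$. The unmixed inconsistent features move from within $\epsilon$ of $\sigma^{\mathrm{post}}_i$ to within $\epsilon$ of $\sigma^{\mathrm{ft}}_i$, a shift exceeding $c_{\mathrm{mis}}$. This gives $\Delta_{\mathrm{unmixed}}\ge k(c_{\mathrm{mis}}^2-2c_{\mathrm{mis}}\epsilon-4\beta\epsilon)>0$, using $\beta<c_{\mathrm{mis}}/2$ and the smallness of $\epsilon$. The mixed checkpoint is better not because its inconsistent features sit ``farther'' from $a^{\mathrm{spec}}_i$ after loss-sharing, but because it never acquired them, so it has nothing there to forget.

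Because this is missed, several of your steps are false in the paper's model.
\begin{itemize}
\item Your claim that full post-training drives both checkpoints' inconsistent features to $a^{\mathrm{spec}}_i$, so that $\Delta_{\mathrm{unmixed}}=\Delta_{\mathrm{mixed}}$, is wrong. The mixed ones are pinned at $0$, and the paper obtains \emph{strict} inequality precisely in that converged regime.
\item Your convergence formula $(a^{\mathrm{ft}}_i-a^{\mathrm{spec}}_i)^2-(s^{\mathrm{post}}_i-a^{\mathrm{spec}}_i)^2$ does not apply to the mixed checkpoint, whose $s^{\mathrm{ft}}_i=0\neq a^{\mathrm{ft}}_i$.
\item The finite-time monotonicity you plan to prove via the logistic solution is false. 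Starting exactly at $a^{\mathrm{spec}}_i$, the loss increase is second order in the fine-tuning displacement; starting at distance $\delta>0$, it is first order. So for short fine-tuning the closer start forgets \emph{less}. For example, take $\dot s=s(1-s)$ (so $a^{\mathrm{ft}}=1$) and $a^{\mathrm{spec}}=3$. Starting at $3$ the increase is about $36t^2$, while starting at $2.5$ it is about $3.75t$. No order-preservation argument will give the ordering you want.
\end{itemize}
The matched-loss early-stopping reformulation and the common-target assumption are therefore unnecessary, and they change the paper's setting. The missing idea is simply to carry forward the zero inconsistent block of the mixed pretrained checkpoint.
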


Because $\dft$ has no covariance along the specialized features, gradient updates during fine-tuning have zero projection along those directions. Inconsistent features, however, overlap with $\dft$ and are overwritten. Without early exposure, all the loss reduction on $\dpost$ is achieved by modifying the inconsistent features and is therefore vulnerable to erasure during fine-tuning. With early exposure, loss reduction is also implemented in isolated specialized features, enabling it to persist after further training. This provides a formal account of the frontier shift observed empirically: the mixed checkpoint's retention advantage traces back to feature learning during pretraining.

\colmtight\textbf{Summary.} Ultimately, our analysis shows that even limited exposure to post-training data during pretraining induces the formation of \textit{specialized features} for that domain. These features are isolated from both $\dgen$ and $\dft$, rendering them resistant to overwriting during fine-tuning. In contrast, without early exposure, post-training relies on modifying broadly shared features to reduce loss. This makes the adaptation inherently fragile and prone to forgetting under subsequent fine-tuning.
\section{Conclusion}

\colmtight\textbf{Main themes.} This work advances three claims. First, how well a capability survives later fine-tuning cannot be read off from how well a model performs on that capability immediately after it is acquired; two training recipes that look equivalent at the handoff can diverge substantially once the model is adapted downstream. Second, the \emph{manner} in which a capability is learned — when it is introduced during training, how it is presented, and what else the model is learning at the same time — shapes how durable that capability will be under later updates. Third, upstream training offers not a single knob but a family of interventions: early exposure during pretraining, replay during post-training, and regularization during post-training each shift the retention--adaptation frontier, and their effects are largely complementary rather than substitutable. Together, these observations reframe robustness to fine-tuning from a downstream problem to be mitigated into a design objective of upstream training.

\colmtight\textbf{Limitations.} Our experiments restrict the amount of post-training data introduced during pretraining to at most a single pass over that corpus; we do not characterize what happens when the post-training data is repeated much more aggressively — for instance, allocating several percent of the total pretraining budget to post-training data, which would imply many repetitions of a small corpus (see \cite{baek2026finetunersfallacypretrainfinetuning}). We focus on a single downstream adaptation method and do not characterize how early exposure interacts with preference-based or reinforcement learning fine-tuning schemes. We provide additional LoRA experiments in Appendix \ref{lora-appendix}.

\colmtight\textbf{Future work.} A fuller characterization of when early exposure helps — across varying degrees of overlap between upstream and downstream data, across a wider range of exposure levels, and into regimes of heavy repetition of post-training data — would sharpen practical guidance for upstream developers. Extending the study to larger models and longer training runs would test whether upstream interventions continue to shift the retention frontier at scale. A complementary line of work is algorithmic: whether new objectives or regularizers applied later in training can approximate the representational effect of early exposure, without modifying the pretraining corpus.

\newpage
\bibliography{colm2026_conference}

\begin{thebibliography}{29}
\providecommand{\natexlab}[1]{#1}
\providecommand{\url}[1]{\texttt{#1}}
\expandafter\ifx\csname urlstyle\endcsname\relax
  \providecommand{\doi}[1]{doi: #1}\else
  \providecommand{\doi}{doi: \begingroup \urlstyle{rm}\Url}\fi

\bibitem[Allal et~al.(2025)Allal, Lozhkov, Bakouch, Blázquez, Penedo,
  Tunstall, Marafioti, Kydlíček, Lajarín, Srivastav, Lochner, Fahlgren,
  Nguyen, Fourrier, Burtenshaw, Larcher, Zhao, Zakka, Morlon, Raffel, von
  Werra, and Wolf]{allal2025smollm2smolgoesbig}
Loubna~Ben Allal, Anton Lozhkov, Elie Bakouch, Gabriel~Martín Blázquez,
  Guilherme Penedo, Lewis Tunstall, Andrés Marafioti, Hynek Kydlíček,
  Agustín~Piqueres Lajarín, Vaibhav Srivastav, Joshua Lochner, Caleb
  Fahlgren, Xuan-Son Nguyen, Clémentine Fourrier, Ben Burtenshaw, Hugo
  Larcher, Haojun Zhao, Cyril Zakka, Mathieu Morlon, Colin Raffel, Leandro von
  Werra, and Thomas Wolf.
\newblock Smollm2: When smol goes big -- data-centric training of a small
  language model, 2025.
\newblock URL \url{https://arxiv.org/abs/2502.02737}.

\bibitem[Baek et~al.(2026)Baek, Monti, Schwab, Abbas, Adiga, Blakeney, Böther,
  Burstein, Carranza, Deng, Doshi, Dorna, Fang, Jiang, Joshi, Larsen, Lee,
  Mentzer, Merrick, Mongstad, Pan, Suri, Teh, Telanoff, Urbanek, Wang, Wills,
  Yin, Raghunathan, Kolter, Gaza, Morcos, Leavitt, and
  Maini]{baek2026finetunersfallacypretrainfinetuning}
Christina Baek, Ricardo~Pio Monti, David Schwab, Amro Abbas, Rishabh Adiga,
  Cody Blakeney, Maximilian Böther, Paul Burstein, Aldo~Gael Carranza, Alvin
  Deng, Parth Doshi, Vineeth Dorna, Alex Fang, Tony Jiang, Siddharth Joshi,
  Brett~W. Larsen, Jason~Chan Lee, Katherine~L. Mentzer, Luke Merrick, Haakon
  Mongstad, Fan Pan, Anshuman Suri, Darren Teh, Jason Telanoff, Jack Urbanek,
  Zhengping Wang, Josh Wills, Haoli Yin, Aditi Raghunathan, J.~Zico Kolter,
  Bogdan Gaza, Ari Morcos, Matthew Leavitt, and Pratyush Maini.
\newblock The finetuner's fallacy: When to pretrain with your finetuning data,
  2026.
\newblock URL \url{https://arxiv.org/abs/2603.16177}.

\bibitem[Bethune et~al.(2025)Bethune, Grangier, Busbridge, Gualdoni, Cuturi,
  and Ablin]{bethune2025scalinglawsforgettingfinetuning}
Louis Bethune, David Grangier, Dan Busbridge, Eleonora Gualdoni, Marco Cuturi,
  and Pierre Ablin.
\newblock Scaling laws for forgetting during finetuning with pretraining data
  injection, 2025.
\newblock URL \url{https://arxiv.org/abs/2502.06042}.

\bibitem[Biderman et~al.(2024)Biderman, Portes, Ortiz, Paul, Greengard,
  Jennings, King, Havens, Chiley, Frankle, Blakeney, and
  Cunningham]{biderman2024loralearnsforgets}
Dan Biderman, Jacob Portes, Jose Javier~Gonzalez Ortiz, Mansheej Paul, Philip
  Greengard, Connor Jennings, Daniel King, Sam Havens, Vitaliy Chiley, Jonathan
  Frankle, Cody Blakeney, and John~P. Cunningham.
\newblock Lora learns less and forgets less, 2024.
\newblock URL \url{https://arxiv.org/abs/2405.09673}.

\bibitem[Chen et~al.(2025)Chen, Huang, Gao, Wang, Yang, and
  Ji]{chen2025scalinglawspredictingdownstream}
Yangyi Chen, Binxuan Huang, Yifan Gao, Zhengyang Wang, Jingfeng Yang, and Heng
  Ji.
\newblock Scaling laws for predicting downstream performance in llms, 2025.
\newblock URL \url{https://arxiv.org/abs/2410.08527}.

\bibitem[Du et~al.(2024)Du, Zeng, Dong, and Tang]{duunderstanding}
Zhengxiao Du, Aohan Zeng, Yuxiao Dong, and Jie Tang.
\newblock Understanding emergent abilities of language models from the loss
  perspective.
\newblock In \emph{The Thirty-eighth Annual Conference on Neural Information
  Processing Systems}, 2024.

\bibitem[Gadre et~al.(2024)Gadre, Smyrnis, Shankar, Gururangan, Wortsman, Shao,
  Mercat, Fang, Li, Keh, Xin, Nezhurina, Vasiljevic, Jitsev, Soldaini, Dimakis,
  Ilharco, Koh, Song, Kollar, Carmon, Dave, Heckel, Muennighoff, and
  Schmidt]{gadre2024languagemodelsscalereliably}
Samir~Yitzhak Gadre, Georgios Smyrnis, Vaishaal Shankar, Suchin Gururangan,
  Mitchell Wortsman, Rulin Shao, Jean Mercat, Alex Fang, Jeffrey Li, Sedrick
  Keh, Rui Xin, Marianna Nezhurina, Igor Vasiljevic, Jenia Jitsev, Luca
  Soldaini, Alexandros~G. Dimakis, Gabriel Ilharco, Pang~Wei Koh, Shuran Song,
  Thomas Kollar, Yair Carmon, Achal Dave, Reinhard Heckel, Niklas Muennighoff,
  and Ludwig Schmidt.
\newblock Language models scale reliably with over-training and on downstream
  tasks, 2024.
\newblock URL \url{https://arxiv.org/abs/2403.08540}.

\bibitem[Gidel et~al.(2019)Gidel, Bach, and
  Lacoste-Julien]{gidel2019implicitregularizationdiscretegradient}
Gauthier Gidel, Francis Bach, and Simon Lacoste-Julien.
\newblock Implicit regularization of discrete gradient dynamics in linear
  neural networks, 2019.
\newblock URL \url{https://arxiv.org/abs/1904.13262}.

\bibitem[Hu et~al.(2021)Hu, Shen, Wallis, Allen-Zhu, Li, Wang, Wang, and
  Chen]{hu2021loralowrankadaptationlarge}
Edward~J. Hu, Yelong Shen, Phillip Wallis, Zeyuan Allen-Zhu, Yuanzhi Li, Shean
  Wang, Lu~Wang, and Weizhu Chen.
\newblock Lora: Low-rank adaptation of large language models, 2021.
\newblock URL \url{https://arxiv.org/abs/2106.09685}.

\bibitem[Kirkpatrick et~al.(2017)Kirkpatrick, Pascanu, Rabinowitz, Veness,
  Desjardins, Rusu, Milan, Quan, Ramalho, Grabska-Barwinska, Hassabis, Clopath,
  Kumaran, and Hadsell]{Kirkpatrick_2017}
James Kirkpatrick, Razvan Pascanu, Neil Rabinowitz, Joel Veness, Guillaume
  Desjardins, Andrei~A. Rusu, Kieran Milan, John Quan, Tiago Ramalho, Agnieszka
  Grabska-Barwinska, Demis Hassabis, Claudia Clopath, Dharshan Kumaran, and
  Raia Hadsell.
\newblock Overcoming catastrophic forgetting in neural networks.
\newblock \emph{Proceedings of the National Academy of Sciences}, 114\penalty0
  (13):\penalty0 3521–3526, March 2017.
\newblock ISSN 1091-6490.
\newblock \doi{10.1073/pnas.1611835114}.
\newblock URL \url{http://dx.doi.org/10.1073/pnas.1611835114}.

\bibitem[Kotha \& Liang(2026)Kotha and
  Liang]{kotha2026replayingpretrainingdataimproves}
Suhas Kotha and Percy Liang.
\newblock Replaying pre-training data improves fine-tuning, 2026.
\newblock URL \url{https://arxiv.org/abs/2603.04964}.

\bibitem[Lambert et~al.(2024)Lambert, Morrison, Pyatkin, Huang, Ivison,
  Brahman, Miranda, Liu, Dziri, Lyu, Gu, Malik, Graf, Hwang, Yang, Bras,
  Tafjord, Wilhelm, Soldaini, Smith, Wang, Dasigi, and
  Hajishirzi]{lambert2024tulu3}
Nathan Lambert, Jacob Morrison, Valentina Pyatkin, Shengyi Huang, Hamish
  Ivison, Faeze Brahman, Lester James~V. Miranda, Alisa Liu, Nouha Dziri, Shane
  Lyu, Yuling Gu, Saumya Malik, Victoria Graf, Jena~D. Hwang, Jiangjiang Yang,
  Ronan~Le Bras, Oyvind Tafjord, Chris Wilhelm, Luca Soldaini, Noah~A. Smith,
  Yizhong Wang, Pradeep Dasigi, and Hannaneh Hajishirzi.
\newblock T\"ulu 3: Pushing frontiers in open language model post-training,
  2024.
\newblock URL \url{https://arxiv.org/abs/2411.15124}.

\bibitem[Maini et~al.(2024)Maini, Feng, Schwarzschild, Lipton, and
  Kolter]{maini2024tofutaskfictitiousunlearning}
Pratyush Maini, Zhili Feng, Avi Schwarzschild, Zachary~C. Lipton, and J.~Zico
  Kolter.
\newblock Tofu: A task of fictitious unlearning for llms, 2024.
\newblock URL \url{https://arxiv.org/abs/2401.06121}.

\bibitem[Maini et~al.(2025)Maini, Goyal, Sam, Robey, Savani, Jiang, Zou,
  Fredrikson, Lipton, and Kolter]{maini2025safetypretraininggenerationsafe}
Pratyush Maini, Sachin Goyal, Dylan Sam, Alex Robey, Yash Savani, Yiding Jiang,
  Andy Zou, Matt Fredrikson, Zacharcy~C. Lipton, and J.~Zico Kolter.
\newblock Safety pretraining: Toward the next generation of safe ai, 2025.
\newblock URL \url{https://arxiv.org/abs/2504.16980}.

\bibitem[McCloskey \& Cohen(1989)McCloskey and
  Cohen]{mccloskey1989catastrophic}
Michael McCloskey and Neal~J Cohen.
\newblock Catastrophic interference in connectionist networks: The sequential
  learning problem.
\newblock In \emph{Psychology of learning and motivation}, volume~24, pp.\
  109--165. Elsevier, 1989.

\bibitem[Nishi et~al.(2025)Nishi, Ramesh, Okawa, Khona, Tanaka, and
  Lubana]{nishi2025representationshatteringtransformerssynthetic}
Kento Nishi, Rahul Ramesh, Maya Okawa, Mikail Khona, Hidenori Tanaka, and
  Ekdeep~Singh Lubana.
\newblock Representation shattering in transformers: A synthetic study with
  knowledge editing, 2025.
\newblock URL \url{https://arxiv.org/abs/2410.17194}.

\bibitem[O'Brien et~al.(2025)O'Brien, Casper, Anthony, Korbak, Kirk, Davies,
  Mishra, Irving, Gal, and
  Biderman]{obrien2025deepignorancefilteringpretraining}
Kyle O'Brien, Stephen Casper, Quentin Anthony, Tomek Korbak, Robert Kirk,
  Xander Davies, Ishan Mishra, Geoffrey Irving, Yarin Gal, and Stella Biderman.
\newblock Deep ignorance: Filtering pretraining data builds tamper-resistant
  safeguards into open-weight llms, 2025.
\newblock URL \url{https://arxiv.org/abs/2508.06601}.

\bibitem[Olmo et~al.(2025)Olmo, Ettinger, Bertsch, Kuehl, Graham, Heineman,
  Groeneveld, Brahman, Timbers, Ivison, Morrison, Poznanski, Lo, Soldaini,
  Jordan, Chen, Noukhovitch, Lambert, Walsh, Dasigi, Berry, Malik, Shah, Geng,
  Arora, Gupta, Anderson, Xiao, Murray, Romero, Graf, Asai, Bhagia, Wettig,
  Liu, Rangapur, Anastasiades, Huang, Schwenk, Trivedi, Magnusson, Lochner,
  Liu, Miranda, Sap, Morgan, Schmitz, Guerquin, Wilson, Huff, Bras, Xin, Shao,
  Skjonsberg, Shen, Li, Wilde, Pyatkin, Merrill, Chang, Gu, Zeng, Sabharwal,
  Zettlemoyer, Koh, Farhadi, Smith, and Hajishirzi]{olmo2025olmo3}
Team Olmo, Allyson Ettinger, Amanda Bertsch, Bailey Kuehl, David Graham, David
  Heineman, Dirk Groeneveld, Faeze Brahman, Finbarr Timbers, Hamish Ivison,
  Jacob Morrison, Jake Poznanski, Kyle Lo, Luca Soldaini, Matt Jordan, Mayee
  Chen, Michael Noukhovitch, Nathan Lambert, Pete Walsh, Pradeep Dasigi, Robert
  Berry, Saumya Malik, Saurabh Shah, Scott Geng, Shane Arora, Shashank Gupta,
  Taira Anderson, Teng Xiao, Tyler Murray, Tyler Romero, Victoria Graf, Akari
  Asai, Akshita Bhagia, Alexander Wettig, Alisa Liu, Aman Rangapur, Chloe
  Anastasiades, Costa Huang, Dustin Schwenk, Harsh Trivedi, Ian Magnusson,
  Jaron Lochner, Jiacheng Liu, Lester James~V. Miranda, Maarten Sap, Malia
  Morgan, Michael Schmitz, Michal Guerquin, Michael Wilson, Regan Huff,
  Ronan~Le Bras, Rui Xin, Rulin Shao, Sam Skjonsberg, Shannon~Zejiang Shen,
  Shuyue~Stella Li, Tucker Wilde, Valentina Pyatkin, Will Merrill, Yapei Chang,
  Yuling Gu, Zhiyuan Zeng, Ashish Sabharwal, Luke Zettlemoyer, Pang~Wei Koh,
  Ali Farhadi, Noah~A. Smith, and Hannaneh Hajishirzi.
\newblock Olmo 3, 2025.
\newblock URL \url{https://arxiv.org/abs/2512.13961}.

\bibitem[OLMo et~al.(2025)OLMo, Walsh, Soldaini, Groeneveld, Lo, Arora, Bhagia,
  Gu, Huang, Jordan, Lambert, Schwenk, Tafjord, Anderson, Atkinson, Brahman,
  Clark, Dasigi, Dziri, Ettinger, Guerquin, Heineman, Ivison, Koh, Liu, Malik,
  Merrill, Miranda, Morrison, Murray, Nam, Poznanski, Pyatkin, Rangapur,
  Schmitz, Skjonsberg, Wadden, Wilhelm, Wilson, Zettlemoyer, Farhadi, Smith,
  and Hajishirzi]{olmo2025olmo2}
Team OLMo, Pete Walsh, Luca Soldaini, Dirk Groeneveld, Kyle Lo, Shane Arora,
  Akshita Bhagia, Yuling Gu, Shengyi Huang, Matt Jordan, Nathan Lambert, Dustin
  Schwenk, Oyvind Tafjord, Taira Anderson, David Atkinson, Faeze Brahman,
  Christopher Clark, Pradeep Dasigi, Nouha Dziri, Allyson Ettinger, Michal
  Guerquin, David Heineman, Hamish Ivison, Pang~Wei Koh, Jiacheng Liu, Saumya
  Malik, William Merrill, Lester James~V. Miranda, Jacob Morrison, Tyler
  Murray, Crystal Nam, Jake Poznanski, Valentina Pyatkin, Aman Rangapur,
  Michael Schmitz, Sam Skjonsberg, David Wadden, Christopher Wilhelm, Michael
  Wilson, Luke Zettlemoyer, Ali Farhadi, Noah~A. Smith, and Hannaneh
  Hajishirzi.
\newblock 2 olmo 2 furious, 2025.
\newblock URL \url{https://arxiv.org/abs/2501.00656}.

\bibitem[Ouyang et~al.(2022)Ouyang, Wu, Jiang, Almeida, Wainwright, Mishkin,
  Zhang, Agarwal, Slama, Ray, Schulman, Hilton, Kelton, Miller, Simens, Askell,
  Welinder, Christiano, Leike, and
  Lowe]{ouyang2022traininglanguagemodelsfollow}
Long Ouyang, Jeff Wu, Xu~Jiang, Diogo Almeida, Carroll~L. Wainwright, Pamela
  Mishkin, Chong Zhang, Sandhini Agarwal, Katarina Slama, Alex Ray, John
  Schulman, Jacob Hilton, Fraser Kelton, Luke Miller, Maddie Simens, Amanda
  Askell, Peter Welinder, Paul Christiano, Jan Leike, and Ryan Lowe.
\newblock Training language models to follow instructions with human feedback,
  2022.
\newblock URL \url{https://arxiv.org/abs/2203.02155}.

\bibitem[Qi et~al.(2023)Qi, Zeng, Xie, Chen, Jia, Mittal, and
  Henderson]{qi2023finetuningalignedlanguagemodels}
Xiangyu Qi, Yi~Zeng, Tinghao Xie, Pin-Yu Chen, Ruoxi Jia, Prateek Mittal, and
  Peter Henderson.
\newblock Fine-tuning aligned language models compromises safety, even when
  users do not intend to!, 2023.
\newblock URL \url{https://arxiv.org/abs/2310.03693}.

\bibitem[Rolnick et~al.(2019)Rolnick, Ahuja, Schwarz, Lillicrap, and
  Wayne]{rolnick2019experiencereplaycontinuallearning}
David Rolnick, Arun Ahuja, Jonathan Schwarz, Timothy~P. Lillicrap, and Greg
  Wayne.
\newblock Experience replay for continual learning, 2019.
\newblock URL \url{https://arxiv.org/abs/1811.11682}.

\bibitem[Sam et~al.(2026)Sam, Goyal, Maini, Robey, and
  Kolter]{sam2026introducesafetyinterventionspretraining}
Dylan Sam, Sachin Goyal, Pratyush Maini, Alexander Robey, and J.~Zico Kolter.
\newblock When should we introduce safety interventions during pretraining?,
  2026.
\newblock URL \url{https://arxiv.org/abs/2601.07087}.

\bibitem[Springer et~al.(2025)Springer, Goyal, Wen, Kumar, Yue, Malladi,
  Neubig, and Raghunathan]{springer2025overtrainedlanguagemodelsharder}
Jacob~Mitchell Springer, Sachin Goyal, Kaiyue Wen, Tanishq Kumar, Xiang Yue,
  Sadhika Malladi, Graham Neubig, and Aditi Raghunathan.
\newblock Overtrained language models are harder to fine-tune, 2025.
\newblock URL \url{https://arxiv.org/abs/2503.19206}.

\bibitem[Srivastava et~al.(2014)Srivastava, Hinton, Krizhevsky, Sutskever, and
  Salakhutdinov]{10.5555/2627435.2670313}
Nitish Srivastava, Geoffrey Hinton, Alex Krizhevsky, Ilya Sutskever, and Ruslan
  Salakhutdinov.
\newblock Dropout: a simple way to prevent neural networks from overfitting.
\newblock \emph{J. Mach. Learn. Res.}, 15\penalty0 (1):\penalty0 1929–1958,
  January 2014.
\newblock ISSN 1532-4435.

\bibitem[Wortsman et~al.(2022{\natexlab{a}})Wortsman, Ilharco, Gadre, Roelofs,
  Gontijo-Lopes, Morcos, Namkoong, Farhadi, Carmon, Kornblith, and
  Schmidt]{wortsman2022modelsoupsaveragingweights}
Mitchell Wortsman, Gabriel Ilharco, Samir~Yitzhak Gadre, Rebecca Roelofs,
  Raphael Gontijo-Lopes, Ari~S. Morcos, Hongseok Namkoong, Ali Farhadi, Yair
  Carmon, Simon Kornblith, and Ludwig Schmidt.
\newblock Model soups: averaging weights of multiple fine-tuned models improves
  accuracy without increasing inference time, 2022{\natexlab{a}}.
\newblock URL \url{https://arxiv.org/abs/2203.05482}.

\bibitem[Wortsman et~al.(2022{\natexlab{b}})Wortsman, Ilharco, Kim, Li,
  Kornblith, Roelofs, Gontijo-Lopes, Hajishirzi, Farhadi, Namkoong, and
  Schmidt]{wortsman2022robustfinetuningzeroshotmodels}
Mitchell Wortsman, Gabriel Ilharco, Jong~Wook Kim, Mike Li, Simon Kornblith,
  Rebecca Roelofs, Raphael Gontijo-Lopes, Hannaneh Hajishirzi, Ali Farhadi,
  Hongseok Namkoong, and Ludwig Schmidt.
\newblock Robust fine-tuning of zero-shot models, 2022{\natexlab{b}}.
\newblock URL \url{https://arxiv.org/abs/2109.01903}.

\bibitem[Yang et~al.(2025)Yang, Li, Yang, Zhang, Hui, Zheng, Yu, Gao, Huang,
  Lv, Zheng, Liu, Zhou, Huang, Hu, Ge, Wei, Lin, Tang, Yang, Tu, Zhang, Yang,
  Yang, Zhou, Zhou, Lin, Dang, Bao, Yang, Yu, Deng, Li, Xue, Li, Zhang, Wang,
  Zhu, Men, Gao, Liu, Luo, Li, Tang, Yin, Ren, Wang, Zhang, Ren, Fan, Su,
  Zhang, Zhang, Wan, Liu, Wang, Cui, Zhang, Zhou, and
  Qiu]{yang2025qwen3technicalreport}
An~Yang, Anfeng Li, Baosong Yang, Beichen Zhang, Binyuan Hui, Bo~Zheng, Bowen
  Yu, Chang Gao, Chengen Huang, Chenxu Lv, Chujie Zheng, Dayiheng Liu, Fan
  Zhou, Fei Huang, Feng Hu, Hao Ge, Haoran Wei, Huan Lin, Jialong Tang, Jian
  Yang, Jianhong Tu, Jianwei Zhang, Jianxin Yang, Jiaxi Yang, Jing Zhou,
  Jingren Zhou, Junyang Lin, Kai Dang, Keqin Bao, Kexin Yang, Le~Yu, Lianghao
  Deng, Mei Li, Mingfeng Xue, Mingze Li, Pei Zhang, Peng Wang, Qin Zhu, Rui
  Men, Ruize Gao, Shixuan Liu, Shuang Luo, Tianhao Li, Tianyi Tang, Wenbiao
  Yin, Xingzhang Ren, Xinyu Wang, Xinyu Zhang, Xuancheng Ren, Yang Fan, Yang
  Su, Yichang Zhang, Yinger Zhang, Yu~Wan, Yuqiong Liu, Zekun Wang, Zeyu Cui,
  Zhenru Zhang, Zhipeng Zhou, and Zihan Qiu.
\newblock Qwen3 technical report, 2025.
\newblock URL \url{https://arxiv.org/abs/2505.09388}.

\bibitem[Yang et~al.(2023)Yang, Wang, Zhang, Petzold, Wang, Zhao, and
  Lin]{yang2023shadowalignmenteasesubverting}
Xianjun Yang, Xiao Wang, Qi~Zhang, Linda Petzold, William~Yang Wang, Xun Zhao,
  and Dahua Lin.
\newblock Shadow alignment: The ease of subverting safely-aligned language
  models, 2023.
\newblock URL \url{https://arxiv.org/abs/2310.02949}.

\end{thebibliography}
\bibliographystyle{colm2026_conference}

\newpage
\appendix
\raggedbottom

\section*{Contributions}

Lawrence Feng led the project and conducted all the main experiments.

Gaurav R. Ghosal seeded the idea of investigating the role of pretraining in catastrophic forgetting.

Gaurav R. Ghosal, Ziqian Zhong, Jacob Mitchell Springer, and Aditi Raghunathan were involved throughout the project, including shaping the project direction, experimental design, and paper writing.

\section*{Acknowledgments}

We gratefully acknowledge support from Apple, Google, Jane Street, the National Science Foundation and the FLAME cluster at Carnegie Mellon University.

This material is based upon work supported by the National Science Foundation Graduate Research Fellowship under Grant No. DGE2140739. Any opinions, findings, conclusions, or recommendations expressed in this material are those of the authors and do not necessarily reflect the views of the National Science Foundation.

We'd like to thank Christina Baek for her insights on the dynamics of pretraining and fine-tuning. We'd like to thank Kevin Li for his feedback on earlier versions of this work.

\ifarxiv\else

\fi

\newpage 

\section{Training Details}
\label{appendix:training}
\FloatBarrier

\subsection{SmolLM2-1B model architecture}
\label{app:model_1b}

\begin{table}[h]
\centering
\caption{SmolLM2-1B model architecture (custom config interpolated from SmolLM2 family).}
\label{tab:model_arch_1b}
\begin{tabular}{ll}
\toprule
\textbf{Parameter} & \textbf{Value} \\
\midrule
Parameters & 1.03B \\
Hidden dimension & 1,728 \\
Attention heads & 27 \\
Layers & 24 \\
Head dimension & 64 \\
Query groups & 27 (MHA) \\
MLP intermediate size & 4,608 \\
Vocabulary size & 49,152 \\
Context length & 8,192 (max), 1,024 (training) \\
Normalization & RMSNorm \\
Position encoding & RoPE (base=100,000) \\
\bottomrule
\end{tabular}
\end{table}

\subsection{Dataset statistics}
\label{app:datasets}

\begin{table}[hbt!]
\centering
\caption{135M Parameter Experiments}
\label{tab:datasets}
\begin{tabular}{llrl}
\toprule
\textbf{Dataset} & \textbf{Split} & \textbf{Tokens} & \textbf{Description} \\
\midrule
C4 & Train & 8.7B & General web text pretraining corpus \\
MusicPile & Train & 0.3B & Music-domain text corpus \\
ChemPile & Train & 0.3B & Chemistry-domain text corpus \\
FLAN & Train & 0.3B & Instruction-tuning dataset \\
\bottomrule
\end{tabular}
\end{table}

\begin{table}[hbt!]
\centering
\caption{1B Parameter Experiments}
\label{tab:datasets_1b}
\begin{tabular}{llrl}
\toprule
\textbf{Dataset} & \textbf{Split} & \textbf{Tokens} & \textbf{Description} \\
\midrule
C4 & Train & 19.7B & General web text pretraining corpus \\
MusicPile & Train & 0.3B & Music-domain text corpus \\
ChemPile & Train & 0.3B & Chemistry-domain text corpus \\
FLAN & Train & 0.3B & Instruction-tuning dataset \\
\bottomrule
\end{tabular}
\end{table}

\subsection{Optimizer configuration}
\label{app:optimizer}

\begin{table}[hbt!]
\centering
\caption{Optimizer configuration used across all experiments.}
\label{tab:optimizer}
\begin{tabular}{ll}
\toprule
\textbf{Parameter} & \textbf{Value} \\
\midrule
Optimizer & AdamW \\
$\beta_1$ & 0.9 \\
$\beta_2$ & 0.95 \\
Gradient clipping & 1.0 (max norm) \\
Precision & bf16-mixed \\
\bottomrule
\end{tabular}
\end{table}

\newpage

\subsection{1B Stage 1 pretraining}
\label{app:stage1_1b}

\begin{table}[h]
\centering
\caption{Stage 1 pretraining configuration for 1B experiments.}
\label{tab:stage1_1b}
\begin{tabular}{lll}
\toprule
\textbf{Parameter} & \textbf{Value} & \textbf{Notes} \\
\midrule
Total tokens & 20B & Chinchilla-optimal for 1.03B params \\
C4 corpus & 21.0B tokens & 120 shards, tokenized \\
Learning rate & 5e-4 & Peak LR \\
Minimum LR & 5e-5 & Cosine decay target \\
Warmup steps & 1,000 & \\
Global batch size & 512 & \\
Micro batch size & 30 & Per-GPU \\
Eval interval & 1,000 steps & \\
Save interval & 1,000 steps & \\
GPUs & 8$\times$ L40S & Single node \\
Seed & 42 & \\
\bottomrule
\end{tabular}
\end{table}

\subsection{Post-training hyperparameters}

\begin{table}[hbt!]
\centering
\caption{FFT hyperparameter search space for Stage 2 post-training (135M).}
\label{tab:fft_config}
\begin{tabular}{lll}
\toprule
\textbf{Parameter} & \textbf{Values} & \textbf{Notes} \\
\midrule
Learning rate & \{1e-4, 2e-4, 5e-4, 1e-3, 5e-3\} & Peak LR \\
Minimum LR & 5e-5 & Cosine decay target \\
Dropout & \{0.0, 0.02, 0.05\} & embed/attn/resid/mlp \\
Weight decay & 0.1 & Fixed \\
Warmup steps & 500 & Fixed \\
Batch size & \{192, 480, 896\} & Global batch size \\
Max tokens & 2B & With early stopping \\
\bottomrule
\end{tabular}
\end{table}

\begin{table}[h]
\centering
\caption{Stage 2 FFT hyperparameter search space for 1B-scale post-training.}
\label{tab:stage2_1b_fft}
\begin{tabular}{lll}
\toprule
\textbf{Parameter} & \textbf{Values} & \textbf{Notes} \\
\midrule
Learning rate & \{1e-5, 2e-5, 5e-5, 1e-4, 2e-4\} & Peak LR \\
Minimum LR & 5e-5 & Cosine decay target \\
Dropout & 0.0 & Fixed (see \S\ref{app:dropout_1b} for ablation) \\
Weight decay & 0.1 & Fixed \\
Warmup steps & 100 & \\
Global batch size & 512 & \\
Micro batch size & 30 & Per-GPU \\
Max tokens & 2B & With early stopping \\
CPT budget & 300M & Tokens per epoch (subsampled) \\
Early stopping patience & 3 & Evaluation intervals \\
Evaluation interval & 100 steps & \\
GPUs & 8$\times$ L40S & Single node \\
Seed & 40 & \\
\bottomrule
\end{tabular}
\end{table}

\newpage

\subsection{LoRA configuration}

\begin{table}[hbt!]
\centering
\caption{LoRA hyperparameter configuration for Stage 2 post-training.}
\label{tab:lora_config}
\begin{tabular}{lll}
\toprule
\textbf{Parameter} & \textbf{Values} & \textbf{Notes} \\
\midrule
LoRA rank ($r$) & 64 & Fixed \\
LoRA alpha ($\alpha$) & 128 & Fixed, $\alpha/r = 2$ \\
LoRA dropout & \{0.0, 0.02, 0.05\} & Same as FFT dropout \\
LoRA targets & projection, mlp, head & Q/K/V excluded \\
Learning rate & \{1e-4, 2e-4, 5e-4, 1e-3, 5e-3\} & Same as FFT \\
Weight decay & 0.1 & Same as FFT \\
Other parameters & \multicolumn{2}{l}{Same as FFT (Table~\ref{tab:fft_config})} \\
\bottomrule
\end{tabular}
\end{table}

\subsection{1B Stage 2 CPT: dropout ablation}
\label{app:dropout_1b}

\begin{table}[h]
\centering
\caption{Dropout ablation configuration for 1B Stage 2 CPT. MusicPile CPT pipeline only. 12 runs total (2 $\lambda$ $\times$ 3 LRs $\times$ 2 dropout rates).}
\label{tab:dropout_1b}
\begin{tabular}{lll}
\toprule
\textbf{Parameter} & \textbf{Values} & \textbf{Notes} \\
\midrule
Dropout & \{0.02, 0.05\} & embed/attn/resid/mlp \\
Learning rate & \{1e-5, 2e-5, 5e-5\} & Best 3 from baseline \\
$\lambda$ & \{0.0, 1.0\} & MusicPile mixing only \\
CPT dataset & MusicPile & Priority pipeline \\
Other parameters & \multicolumn{2}{l}{Same as baseline (Table~\ref{tab:stage2_1b_fft})} \\
\bottomrule
\end{tabular}
\end{table}

\newpage

\section{Additional Plots}

\subsection{135M dropout and replay frontiers with retained pretraining loss (C4)}

\begin{figure}[H]
    \centering
    \includegraphics[width=\textwidth]{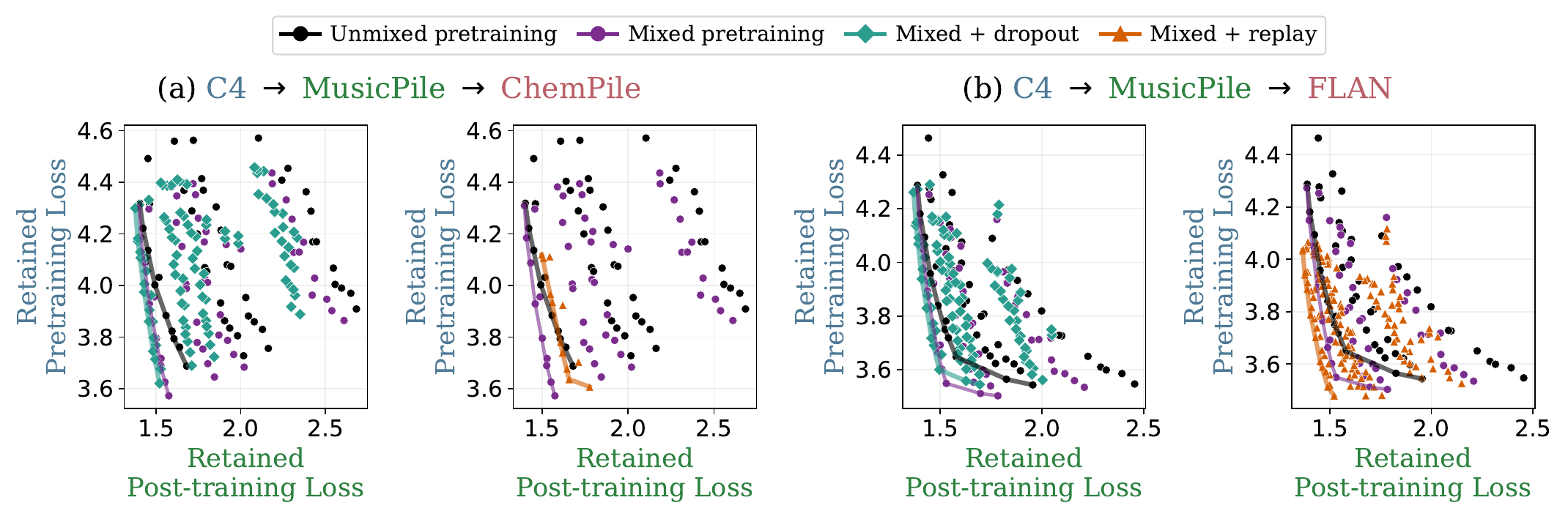}
    \caption{\textbf{Dropout and replay preserve broader pretraining capability in addition to the post-training capability (135M).} Companion to Figure~\ref{fig:mixing_combined}, plotting retained pretraining performance against retained post-training performance. Adding dropout or replay to mixed pretraining further lowers retained pretraining loss at matched retained post-training loss.}
    \label{fig:135M-dropout-replay-pt-c4}
\end{figure}

\subsection{Additional 135M dropout and replay frontiers (dropout and replay without mixing)}

\begin{figure}[H]
    \centering
    \includegraphics[width=\textwidth]{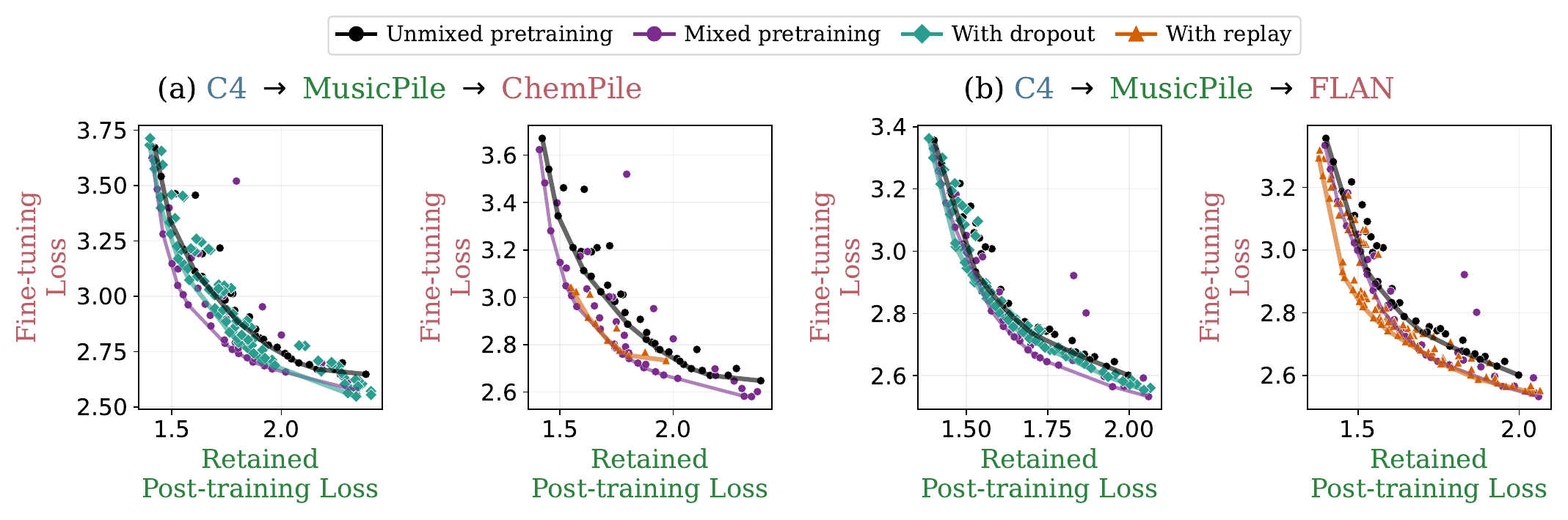}
    \caption{\textbf{Dropout and replay applied without mixing (135M).} To isolate the effect of post-training interventions from pretraining-time mixing, each panel applies dropout or replay on top of \emph{unmixed} pretraining ($\lambda=0$), with the mixed-pretraining frontier shown for reference. Within each pipeline, the left panel adds dropout during Stage~2 post-training; the right panel adds a small fraction ($1\%$) of general-domain replay. Both interventions shift the fine-tuning--retention frontier toward the lower left.}
    \label{fig:135M-dropout-replay-unmixed}
\end{figure}

\subsection{135M LoRA experiments}
\label{lora-appendix}

\begin{figure}[H]
    \centering
    \includegraphics[width=\textwidth]{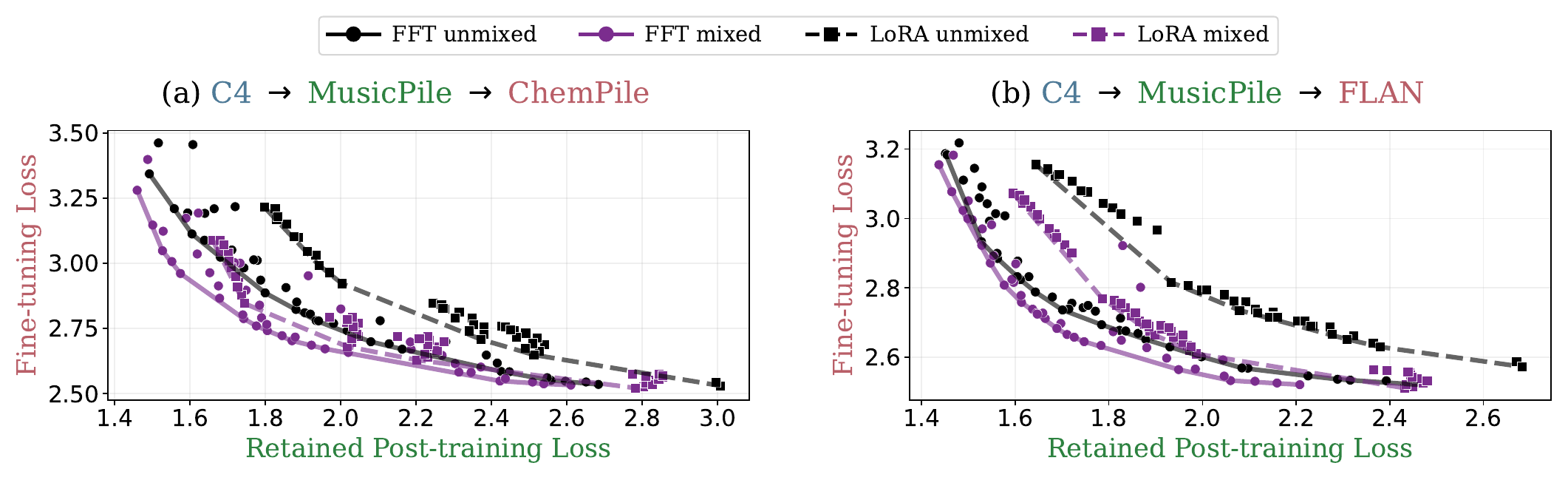}
    \caption{\textbf{FFT vs LoRA fine-tuning--retention frontiers (135M).} Each panel shows four frontiers obtained by sweeping Stage~2 post-training hyperparameters and Stage~3 fine-tuning learning rates. Mixed pretraining improves both the FFT and LoRA frontiers in both downstream settings, and FFT generally attains a better fine-tuning--retention tradeoff than LoRA at matched upstream configurations.}
    \label{fig:135M-fft-lora}
\end{figure}

\newpage
\subsection{1B experiments}
\label{app:1b-plots}

\textbf{Black} denotes the frontier obtained from unmixed pretraining, and \textbf{\color{violet}purple} denotes the frontier obtained from mixed pretraining.

\subsubsection{Mixing frontiers}

\begin{figure}[H]
    \centering
    \includegraphics[width=\textwidth]{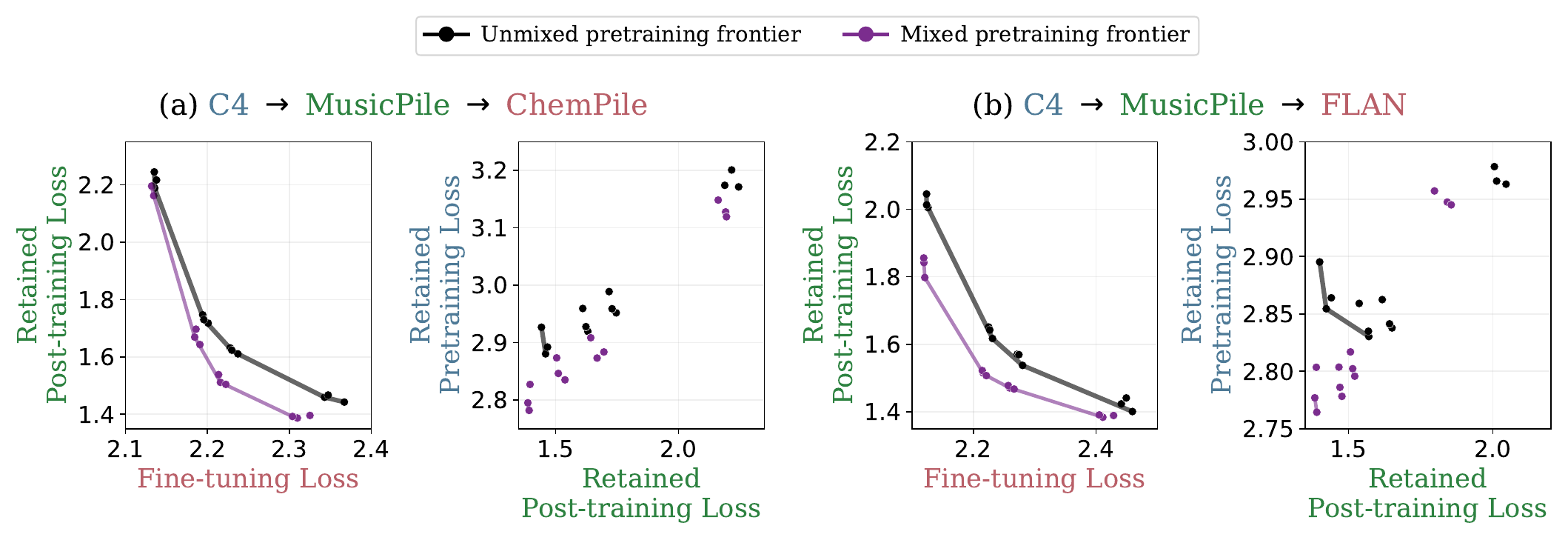}
    \caption{\textbf{Mixing frontiers at 1B, MusicPile post-training pipelines.} Companion to Figure~\ref{fig:four_pipeline_frontiers} at 1B scale. Within each pipeline, the left panel plots retained post-training loss against downstream fine-tuning loss, and the right panel plots retained pretraining loss against retained post-training loss. As at the 135M scale, mixed pretraining consistently shifts the frontier toward lower retained post-training loss, lower retained pretraining loss, and lower downstream fine-tuning loss.}
    \label{fig:1b-mixed-mp}
\end{figure}

\begin{figure}[H]
    \centering
    \includegraphics[width=0.5\textwidth]{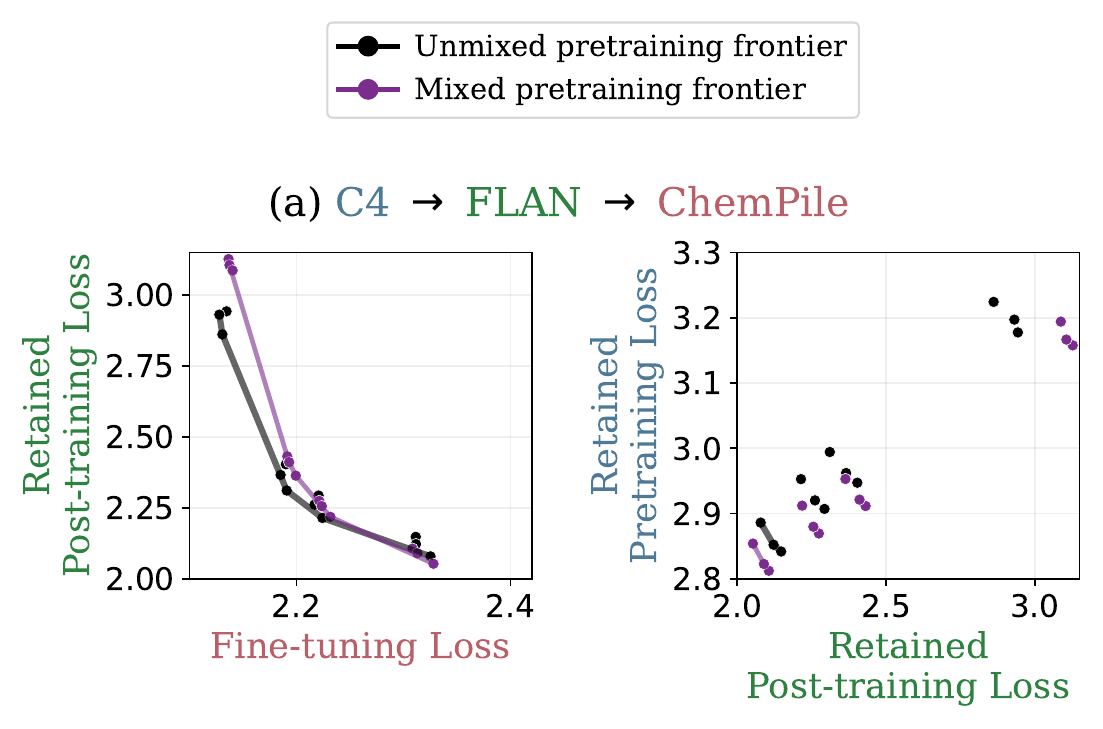}
    \caption{\textbf{Mixing frontiers at 1B, FLAN post-training pipeline.} Left: retained post-training loss vs fine-tuning loss. Right: retained pretraining loss (C4) vs retained post-training loss. In this pipeline, mixed pretraining does not noticeably improve the retained post-training vs fine-tuning frontier (left), but it does improve the retained pretraining vs retained post-training frontier (right).}
    \label{fig:1b-mixed-flan-chempile}
\end{figure}

\newpage
\begin{figure}[H]
    \centering
    \includegraphics[width=0.45\textwidth]{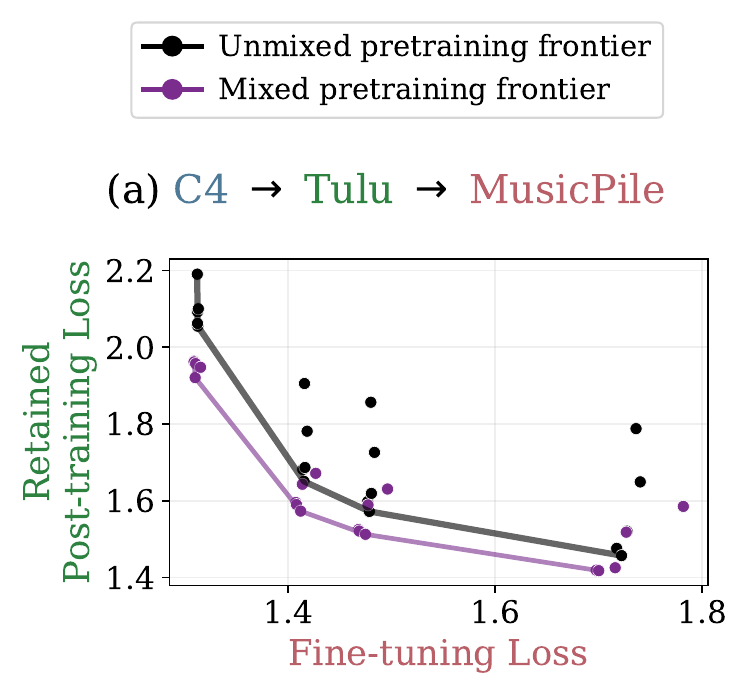}
    \caption{\textbf{Mixing frontier at 1B with a modern instruction-tuning mixture (Tulu-3-SFT).} As an additional check that the benefit of early exposure is not specific to single-domain post-training corpora, we repeat the 1B frontier sweep with Tulu-3-SFT \citep{lambert2024tulu3}, a modern instruction-tuning mixture spanning chat, instruction-following, math, code, safety, and multilingual data, as the post-training corpus (C4 $\rightarrow$ Tulu-3-SFT $\rightarrow$ MusicPile). The mixed-pretraining frontier again dominates the unmixed frontier.}
    \label{fig:1b-mixed-tulu}
\end{figure}

\newpage
\subsubsection{Upstream dropout and replay at 1B parameters}
\label{app:1b-dropout-replay}

At 1B we sweep replay at two fractions, 1\% and 10\% of the post-training budget, since the optimal replay fraction varies with the domain and training scale \citep{bethune2025scalinglawsforgettingfinetuning, kotha2026replayingpretrainingdataimproves}. The figures in this section pool both fractions; restricting to either alone yields nearly identical frontiers, so our conclusions are not sensitive to this choice.

\begin{figure}[H]
    \centering
    \includegraphics[width=\textwidth]{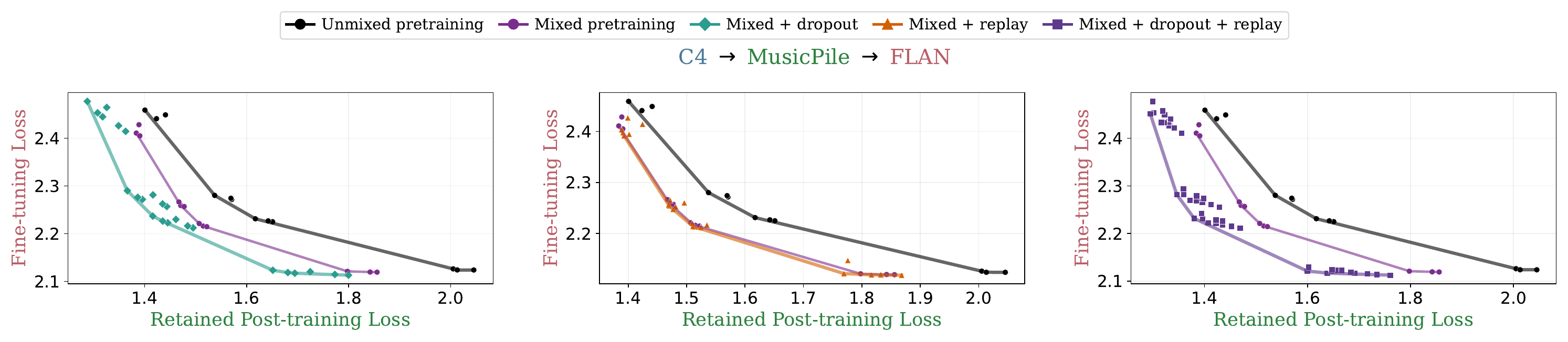}
    \caption{\textbf{Dropout and replay on top of mixed pretraining at 1B.} Companion to Figure~\ref{fig:mixing_combined} at 1B scale. The panels add Stage~2 dropout (left), replay (middle), and their combination (right) on top of mixed pretraining, with the unmixed and mixed frontiers repeated in every panel for reference. As at 135M, dropout further shifts the fine-tuning--retention frontier beyond mixed pretraining alone. Replay alone, in contrast, tracks the mixed-pretraining frontier closely at this scale, consistent with \citet{kotha2026replayingpretrainingdataimproves}, who find that replay becomes unimportant once the target data is partially seen during pretraining. Combining dropout and replay (right) yields the best frontier overall, dominating all other configurations.}
    \label{fig:1b-interventions-mixed}
\end{figure}

\begin{figure}[H]
    \centering
    \includegraphics[width=\textwidth]{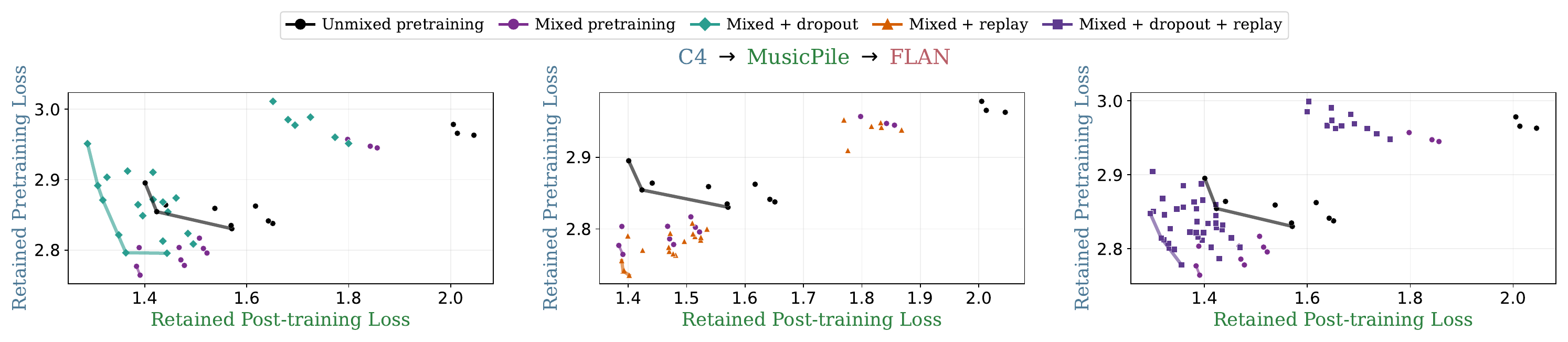}
    \caption{\textbf{Dropout and replay on top of mixed pretraining, broader pretraining retention (1B).} Companion to Figure~\ref{fig:135M-dropout-replay-pt-c4} at 1B scale: the same Stage~2 sweeps plotted against retained pretraining loss on C4. Replay (middle) preserves and often slightly improves C4 retention relative to mixed pretraining alone, consistent with its role of keeping general-domain data present during Stage~2. Dropout (left), at the rates we swept, degrades C4 retention for many configurations, which we attribute to insufficient tuning of the dropout rate; the combined intervention (right) inherits this degradation from its dropout component.}
    \label{fig:1b-interventions-mixed-c4}
\end{figure}

\newpage

\begin{figure}[H]
    \centering
    \includegraphics[width=\textwidth]{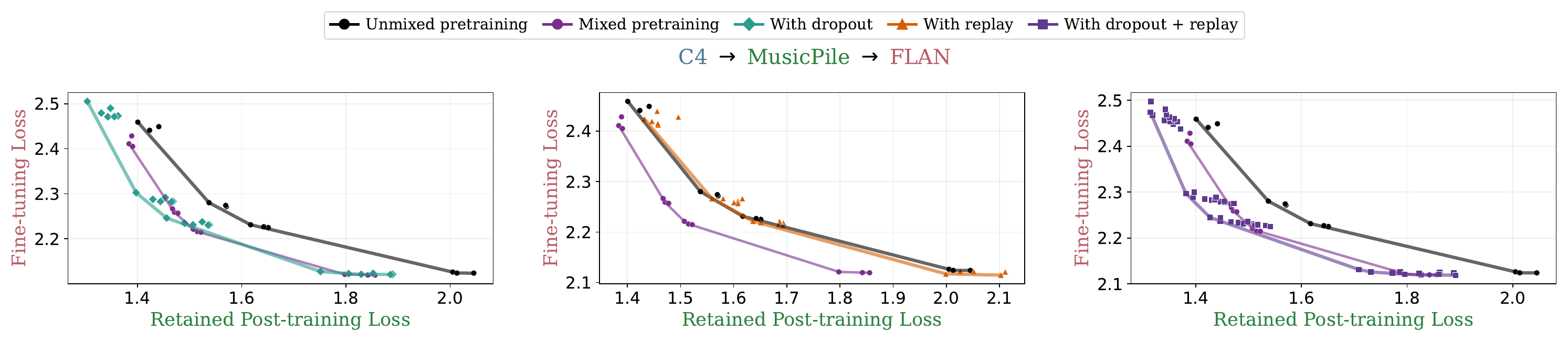}
    \caption{\textbf{Dropout and replay applied without pretraining-time mixing (1B).} Companion to Figure~\ref{fig:135M-dropout-replay-unmixed} at 1B scale. Each panel applies dropout (left), replay (middle), or both (right) on top of unmixed pretraining ($\lambda=0$), with the mixed-pretraining frontier shown for reference. As at 135M, dropout shifts the fine-tuning--retention frontier well beyond the unmixed baseline, while replay alone has little effect at this scale; combining the two (right) behaves similarly to dropout alone.}
    \label{fig:1b-interventions-unmixed}
\end{figure}

\newpage
\subsubsection{Sensitivity to the Stage 3 fine-tuning budget}
\label{app:1b-stage3-evolution}

\begin{figure}[H]
    \centering
    \includegraphics[width=\textwidth]{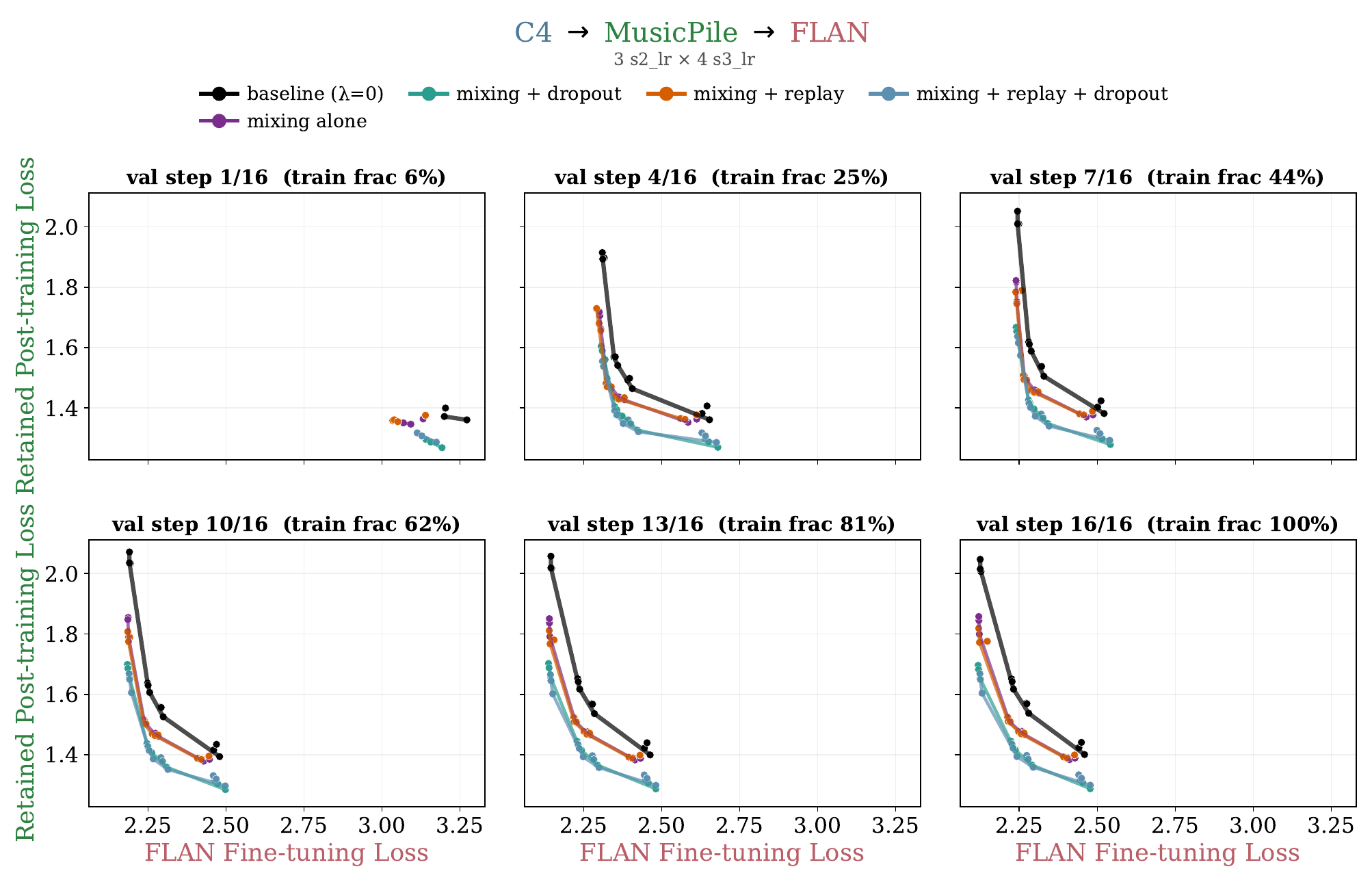}
    \caption{\textbf{Frontier ordering is established early in Stage~3 fine-tuning and preserved throughout (1B).} Each panel shows the fine-tuning--retention frontiers of the C4 $\rightarrow$ MusicPile $\rightarrow$ FLAN pipeline at a fixed fraction of the Stage~3 fine-tuning budget (from 6\% to 100\%), for the unmixed baseline, mixing alone, mixing + dropout, mixing + replay, and mixing + replay + dropout (3 Stage~2 learning rates $\times$ 4 Stage~3 learning rates per intervention). The relative ordering of the intervention frontiers emerges very early in fine-tuning and is preserved through to the final checkpoint, indicating that our frontier comparisons are not sensitive to the exact Stage~3 fine-tuning budget.}
    \label{fig:1b-stage3-evolution}
\end{figure}

\newpage
\subsection{Additional Plots on the Latent Effect of Early Exposure}
  \label{app:latent-effective-mixing}

  \begin{figure}[H]
      \centering
      \begin{subfigure}[t]{0.48\textwidth}
          \centering
          \includegraphics[width=\linewidth]{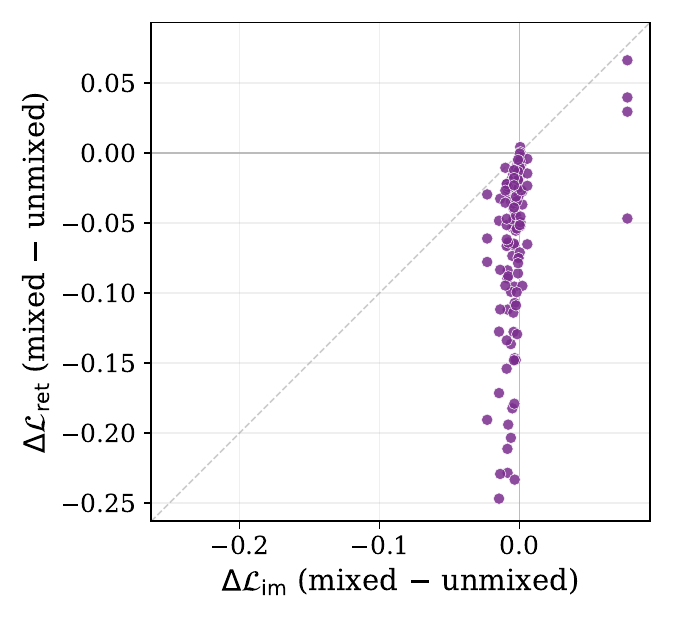}
          \caption{135M parameters.}
          \label{fig:delta-mixing-135m}
      \end{subfigure}
      \hfill
      \begin{subfigure}[t]{0.48\textwidth}
          \centering
          \includegraphics[width=\linewidth]{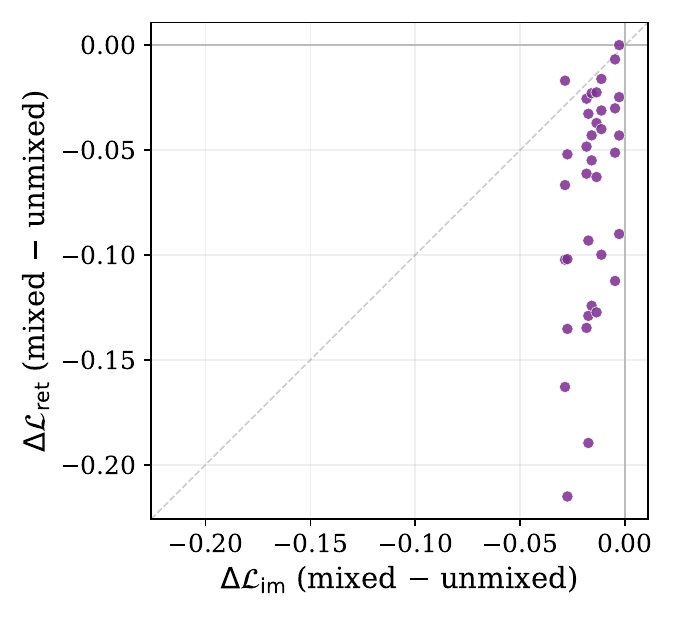}
          \caption{1B parameters.}
          \label{fig:delta-mixing-1b}
      \end{subfigure}
      \caption{\textbf{The benefits of early exposure cannot be predicted from post-training performance.} We sweep training hyperparameters across the C4 $\rightarrow$ MusicPile $\rightarrow$ FLAN pipeline, with and without early exposure, forming pairs where the only difference is whether or not the model had early exposure to the post-training domain. We measure the immediate $\limm$ and retained $\lret$ performance on the post-training domain. We compute the difference in $\limm$ and $\lret$ between pairs of models. We see that points cluster vertically near $\Delta \limm \approx 0$ with substantial negative $\Delta \lret$. That is, models trained with and without early exposure perform similarly after Stage 2, but differ substantially after Stage 3, often performing significantly better (i.e. lower loss).}
      \label{fig:delta-mixing}
  \end{figure}

\newpage
\subsection{Base-Model Familiarity Predicts Robustness to Forgetting in OLMo-2-7B}
\label{app:olmo2}

Our main experiments establish the effect of early exposure causally, by varying the fraction of post-training data mixed into pretraining. Here we ask whether the same mechanism is visible in an off-the-shelf model at larger scale, using the public OLMo-2-1124-7B base and SFT checkpoints \citep{olmo2025olmo2}. On a frozen public model we cannot manipulate exposure directly, so we use per-example base-model loss as a proxy: low base-model loss indicates content the model was already familiar with after pretraining, which we predict should be forgotten less. Concretely, we fine-tune the SFT checkpoint on ChemPile (52M tokens) as the forgetting stage, and score per-example response-token loss on 50{,}000 examples of Tulu-3-SFT \citep{lambert2024tulu3}, the SFT training corpus, before and after this fine-tuning.

Consistent with our main results, forgetting rises monotonically with base-model loss (Figure~\ref{fig:olmo2}a): SFT examples the base model already finds familiar undergo less forgetting under subsequent fine-tuning. To confirm this reflects base-model familiarity rather than what the SFT model is currently worse at, we control for SFT-model loss: within a narrow slice of SFT-model loss, the relationship persists (Figure~\ref{fig:olmo2}b), and comparing familiar against unfamiliar examples matched on SFT-model loss, the familiar group forgets roughly $4\times$ less. The positive slope also persists within individual source domains of the Tulu-3 mixture, ruling out differences between source domains as the driver. Together, these results show that base-model familiarity predicts robustness to forgetting at 7B scale, consistent with the mechanism behind pretraining-time mixing established causally in the paper.

\begin{figure}[H]
    \centering
    \begin{subfigure}[t]{0.48\textwidth}
        \centering
        \includegraphics[width=\linewidth]{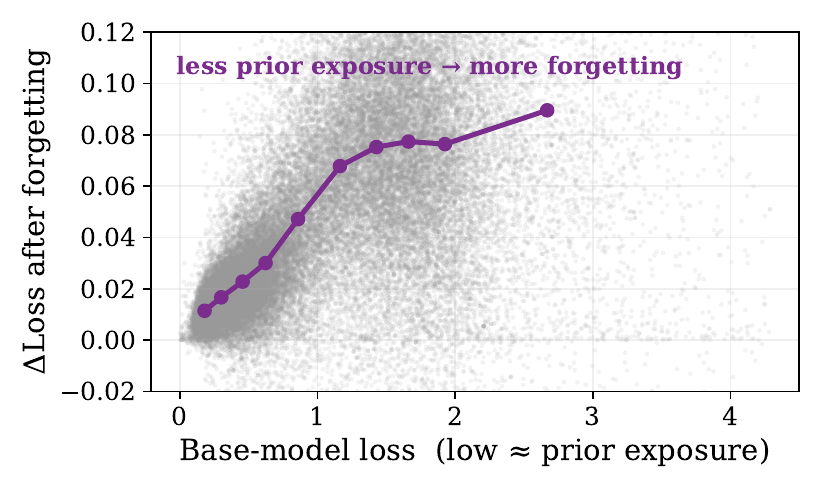}
        \caption{All 50k Tulu-3-SFT examples.}
        \label{fig:olmo2-base-vs-delta}
    \end{subfigure}
    \hfill
    \begin{subfigure}[t]{0.48\textwidth}
        \centering
        \includegraphics[width=\linewidth]{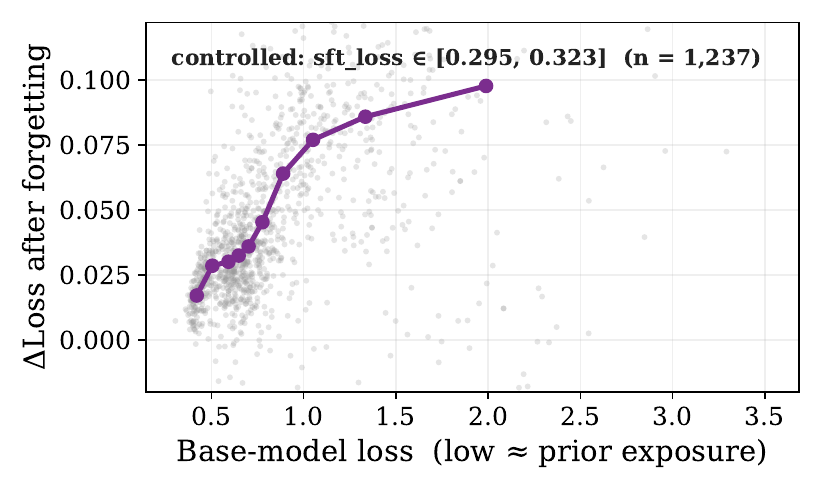}
        \caption{Controlling for SFT-model loss.}
        \label{fig:olmo2-slice}
    \end{subfigure}
    \caption{\textbf{Content the base model already knows is forgotten less (OLMo-2-7B).} Per-example change in loss after fine-tuning the OLMo-2-7B-SFT checkpoint on ChemPile, plotted against base-model loss (gray: individual Tulu-3-SFT examples; purple: bin means). \textbf{(a)} Forgetting rises monotonically with base-model loss: content already familiar after pretraining (low base-model loss) is more robust to subsequent fine-tuning. \textbf{(b)} The same relationship restricted to a narrow slice of SFT-model loss ($n=1{,}237$), showing the trend is not explained by what the SFT model is currently worse at; other slices look qualitatively identical.}
    \label{fig:olmo2}
\end{figure}

\newpage

\newcommand{\xmixed}[0]{\mathbf{X}^{\textrm{(mixed)}}}
\newcommand{\ymixed}[0]{\mathbf{Y}^{\textrm{(mixed)}}}
\newcommand{\sigmaxymixed}[0]{\mathbf{\Sigma}_{xy}^{\textrm{(mixed)}}}

\newcommand{\xpt}[0]{\mathbf{X}^{\textrm{(pt)}}}
\newcommand{\ypt}[0]{\mathbf{Y}^{\textrm{(pt)}}}
\newcommand{\sigmaxypt}[0]{\mathbf{\Sigma}_{xy}^{\textrm{(pt)}}}
\newcommand{\wf}[0]{\mathbf{W}_{1}}
\newcommand{\ws}[0]{\mathbf{W}_{2}}

\section{Theoretical Analysis}
\label{app:thy}
\subsection{Preliminaries and Setup}
    \paragraph{Model and data distribution} We consider a two-layer linear network $\theta= \mathbf{W_{1}} \mathbf{W_{2}} \mathbf{x}$ on a series of regression problems using the squared loss. In particular, the problems take the form of $\mathcal{L}_{t}(\theta) = \mathbf{E}_{x \sim \mathcal{D}_{\textrm{t}}}[||\theta x - \mathbf{A}^{t} x||_{2}^{2}]$, where $t \in \{\textcolor{softblue}{\textrm{pre}},\textcolor{softgreen}{\textrm{post}},\textcolor{softred}{\textrm{ft}}\}$ index the current stage of training. Here, $\mathcal{D}_{t}$ denotes the input distribution and the ground-truth outputs are generated as $\mathbf{A}^{t} \mathbf{X}$, where $\mathbf{X} \sim \mathcal{D}_{t}$. Following the analysis in \citet{springer2025overtrainedlanguagemodelsharder}, we consider the singular values and vectors of $\mathbf{A}^{t}$ as the learned features for the training task $t$.
    \begin{assumption}[Simultaneous Diagonalizability]
    There are orthonormal matrices $\mathbf{U}, \mathbf{V}$ such that for $t \in \{\textrm{pre},\textrm{post},\textrm{ft}\}$  we can write    
    $\mathbf{A}^{t} = \mathbf{U} \mathbf{\Sigma_{t}} \mathbf{V}^{\top}$, where all the $\mathbf{\Sigma_{t}}$ are diagonal matrices.
    \end{assumption}

In order to model transfer and interference between the distributions, we will next specify a structure on the relationships between the different features. We first assume the presence of \emph{invariant features}, capturing common linguistic capabilities that are broadly applicable across domains and tasks. Across these definitions, we assume a consistent indexing of the singular values (although the ordering of the singular values in terms of their magnitude may be different). In the following we will denote the singular values of the the task covariances interchangeably with the notations $\sigma^{t}_{i} = (\Sigma_{t})_{ii}$ to denote the ground-truth value of the feature.

\begin{definition}[Invariant Features] For $i \in [1, n-2k]$, we have that $(\mathbf{\Sigma}_{\textrm{pre}})_{ii} = (\mathbf{\Sigma_{\textrm{post}}})_{ii} = (\mathbf{\Sigma}_{\textrm{ft}})_{ii}$. For clarity and to emphasize their static nature, we will often denote the values of the invariant features as $\sigma^{\textrm{inv}}_{1},...,\sigma^{\textrm{inv}}_{n-2k}$. For conciseness, we will also use $d_{\textrm{invariant}} = n-2k$ to refer to the number of invariant features.
\end{definition}

In addition to these highly general features, we also consider the features through which the model may learn more domain specific information. We consider that such specialization can be implemented through one of two pathways:
\begin{definition}[Inconsistent Features] We define a feature (indexed by $i$) to be inconsistent if we have that $(\mathbf{\Sigma}_{\textrm{post}})_{ii} >  (\mathbf{\Sigma}_{\textrm{pre}})_{ii}$ and $(\mathbf{\Sigma}_{\textrm{post}})_{ii} - (\mathbf{\Sigma}_{\textrm{pre}})_{ii} > c_{\textrm{mis}}$
\end{definition}

Inconsistent features therefore incur a tradeoff between reducing loss on $\mathcal{D}_{\textrm{post}}$ and preserving performance on $\mathcal{D}_{\textrm{pre}}$. Finally, we introduce \textit{specialized features}, which do not incur such a tradeoff.

\begin{definition}[Specialized features] We consider feature $i$ is \textbf{specialized} if we have that $(\mathbf{V}^{\top})_{i} \mathbf{x} = 0$ and that $(\mathbf{\Sigma_{\textrm{post}}})_{ii} > 0$. For simplicity we will assume that all specialized features take the same value of  $(\mathbf{\Sigma_{\textrm{post}}})_{ii} = \beta$. We will also consider that $\beta < \frac{1}{2} c_{\textrm{mis}}$, which encodes that inconsistent features shift by a relatively large magnitude across the distributions.
\end{definition}

Intuitively, $\mathcal{D}_{\textrm{pre}}$ and $\mathcal{D}_{\textrm{ft}}$ have no covariance along the specialized feature directions. As we will show, this results in gradient steps taken along them causing no interference along these directions. However, as a result of their zero-covariance, these features are also impossible to learn without explicitly seeing the post-training data.

\textbf{Downstream Tuning Task} We consider that the downstream tuning task is relatively more similar to the pretraining task than the post-training task. As such, we consider that the inputs are sampled according to $x \sim \mathcal{N}(0,\mathbf{I}_{n-k})$ (i.e. it doesn't activate the specialized features). As previously, we have that the singular values corresponding to the invariant features remain constant. We will also consider that $\mathbf{A}^{\textrm{post}}$ and $\mathbf{A}^{\textrm{ft}}$ diverge on the inconsistent feature. Concretely we have that $(\mathbf{\Sigma}_{\textrm{post}})_{ii} >  (\mathbf{\Sigma}_{\textrm{ft}})_{ii}$ and $(\mathbf{\Sigma}_{\textrm{post}})_{ii} - (\mathbf{\Sigma}_{\textrm{ft}})_{ii} > c_{\textrm{mis}}$.

\paragraph{Mixed Training} We parameterize the mixed distribution by a parameter $\alpha$ and train on the distribution $\mathcal{D}_{\alpha, \textrm{mixed}} = (1-\alpha) \dweb + \alpha\dpost$. As all distributions in our setting have mean zero, we have that the covariance matrix of this mixture of Gaussian distributions is  $(1-\alpha) \mathbf{\Sigma_{\text{pre}}} + \alpha\mathbf{\Sigma}_{\textrm{post}}$.

\begin{assumption}[Invariant Features are High Magnitude] We consider that the invariant features are higher magnitude than the specialized features and the inconsistent features, concretely:
\begin{equation*}
   \sigma_{i}^{\textrm{pre}} > \sigma_{j}^{\textrm{pre}} 
\end{equation*}
$\forall i \in [0,d_{\textrm{invariant}}]$ and $\forall j \in (d_{\textrm{invariant}},n)$. We make a similar assumption on the relationship between the invariant features and the specialized features, concretely:
\begin{equation*}
\sigma_{i}^{\textrm{pre}} > \sigma_{j}^{\textrm{post}}
\end{equation*}
$\forall i \in [0,d_{\textrm{invariant}}]$ and $\forall j \in (d_{\textrm{invariant}},n)$. This intuitively encodes that the invariant features correspond to the strongest directions in the data.
\label{assmp:invfeatures}

\end{assumption}
\begin{assumption}[Sufficient Specialized Mixing] We assume that there exists $\alpha \in [0,1]$
\begin{equation*}
\alpha\beta >(1-\alpha)\sigma^{\textrm{pre}}_{i} + \alpha \sigma_{i}^{\textrm{post}} \,\,\, \forall i \in [d_{\textrm{invariant}}, d_{\textrm{invariant}}+k]
\end{equation*}
\label{asmp:sufficientspecializedmixing}
\label{assmp:suffmixing}
\end{assumption}

Intuitively, Assumption \ref{asmp:sufficientspecializedmixing} suggests that there exists a mixing ratio such that the mixing specialized features become more salient than the inconsistent features. However, this mixing ratio need not be high if the strength of the specialized feature is high in the covariance of $\dpost$.

\subsection{Analysis of Initial Pretraining}

Here, we will study the dynamics of the pretraining stage. We first introduce an important result on the sequential learning dynamics of features in two layer linear networks \citep{gidel2019implicitregularizationdiscretegradient}. For a given pretraining task where $\mathbf{X},\mathbf{Y}$ represent the inputs and outputs, respectively, we define that $\mathbf{\Sigma}_{xy} = \frac{1}{n} \mathbf{X}^{\top} \mathbf{Y}$ and $\mathbf{\Sigma}_{x} = \frac{1}{n} \mathbf{X}^{\top}\mathbf{X}$. We will write that $\mathbf{\Sigma}_{xy} = \sum_{i=1}^{R_{xy}} \sigma_{i} \mathbf{u}_{i}\mathbf{v_{i}}^{T}$, where $R_{xy}$ is the rank of $\mathbf{\Sigma_{xy}}$. We will also assume that: 

\begin{assumption}[Joint Decomposition] There exist orthogonal matrices $\mathbf{U},\mathbf{V}$ such that 
\begin{equation}
    \mathbf{\Sigma}_{xy} = \mathbf{U} \mathbf{D}_{xy}\mathbf{V}^{\top}, \mathbf{\Sigma}_{xx} = \mathbf{U}\mathbf{D}_{xx} \mathbf{U}^{\top}
\end{equation} and we will denote the singular values of $\mathbf{\Sigma}_{xy}$ as $\sigma_{1},...,\sigma_{R_{xy}}$ and the diagonal entries of $\mathbf{D}_{xx}$ as $\lambda_{1},...,\lambda_{R_{x}}=1$. 
\label{assmp:covariance_decomp}
\end{assumption}

Next, we will characterize the initialization scale of model before pretraining. Following \citep{springer2025overtrainedlanguagemodelsharder}, we have the following initialization:
\begin{assumption}[Pretrained Initialization Scale]
Let $(\mathbf{W}_{1}(0), \mathbf{W}_{2}(0))$ be the parameters at initialization. Then we have that $\mathbf{W}_{1}(0) = \mathbf{W}_{2}(0) = \exp({- \mathcal{T}) \mathbf{I_{d}}}$. 
\label{assmp:pretrained_initialization}
\end{assumption}

Essentially, Assumption \ref{assmp:pretrained_initialization} requires that the model parameters are close, but not exactly $0$ which yields \emph{sequential feature learning}. We next explicitly re-state the result from \cite{gidel2019implicitregularizationdiscretegradient}.
\begin{theorem}[Sequential Learning of Features \cite{gidel2019implicitregularizationdiscretegradient, springer2025overtrainedlanguagemodelsharder}] Suppose  $\mathbf{W}_{1},\mathbf{W}_{2}$ obey the initialization in Assumption \ref{assmp:pretrained_initialization} and the pretraining task obeys Assumption \ref{assmp:covariance_decomp}. Then there exist times $t_{1},...,t_{r}$ such that
\begin{equation*}
|| \mathbf{W}_{1}(t_{i}) -\mathbf{U}(\Sigma_{:i})^{\frac{1}{2}}||_{F} \leq \exp(-C\tau)
\end{equation*}
\begin{equation*}
    || \mathbf{W}_{2}(t_{i}) -(\Sigma_{:i})^{\frac{1}{2}} \mathbf{V}^{\top}||_{F} \leq \exp(-C\tau)
\end{equation*}
Where $\Sigma_{:i}$ is defined to be $\textrm{diag}(\sigma_{1},...\sigma_{i},0,...0)$, equivalently the rank $i$ approximation of $\textrm{diag}(\sigma_{1},...,\sigma_{R_{xy}})$.
\label{thm:seqlearning}
\end{theorem}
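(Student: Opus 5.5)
We prove Theorem~\ref{thm:seqlearning} by using Assumption~\ref{assmp:covariance_decomp} to decouple the gradient flow into independent scalar dynamics, one per singular direction, and then analyzing the resulting logistic-type ODE. Since $\Sigma_{xx}=\mathbf{I}$ on the active block, the loss is $\frac12\|\mathbf{W}_1\mathbf{W}_2-\Sigma_{xy}\|_F^2$ up to constants. Following the convention of the statement, we write the product as $\mathbf{W}_1\mathbf{W}_2$ with $\mathbf{W}_1\approx\mathbf{U}\Sigma^{1/2}$ and $\mathbf{W}_2\approx\Sigma^{1/2}\mathbf{V}^\top$.

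\textbf{Reduction to scalars.} We first change variables to $\tilde{\mathbf{W}}_1=\mathbf{U}^\top\mathbf{W}_1$ and $\tilde{\mathbf{W}}_2=\mathbf{W}_2\mathbf{V}$. The problem then becomes fitting the diagonal $\mathrm{diag}(\sigma_i)$. The initialization in Assumption~\ref{assmp:pretrained_initialization} is a multiple of the identity, and we choose the basis so that it is aligned, i.e.\ $\mathbf{U}$ and $\mathbf{V}$ coincide with the identity frame at initialization, or we absorb this alignment into the assumption as \citet{springer2025overtrainedlanguagemodelsharder} do. Under this choice the flow preserves diagonality: the gradient of a diagonal pair against a diagonal target is diagonal. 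Each coordinate then evolves as $\dot a_i=b_i(\sigma_i-a_ib_i)$ and $\dot b_i=a_i(\sigma_i-a_ib_i)$, with $a_i(0)=b_i(0)=e^{-\tau}$. The balancedness $a_i^2-b_i^2$ is conserved and equals $0$, so $a_i=b_i$ and $p_i=a_i^2$ satisfies $\dot p_i=2p_i(\sigma_i-p_i)$.

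\textbf{Solving the ODE.} This is logistic growth with the explicit solution $p_i(t)=\sigma_i/\bigl(1+(\sigma_i e^{2\tau}-1)e^{-2\sigma_i t}\bigr)$. Hence $p_i$ stays at size $O(e^{-2\tau+2\sigma_i t})$ until roughly $t\approx \tau/\sigma_i$, after which it saturates exponentially fast toward $\sigma_i$. We take $t_i=\tau(1/\sigma_i+1/\sigma_{i+1})/2$, the midpoint in rescaled time between the $i$-th and $(i+1)$-th transitions, with $t_r$ taken large.

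At time $t_i$ we check the two groups of coordinates separately.
\begin{itemize}
\item For $j\le i$ we have $\sigma_j t_i-\tau\ge c\tau$. The error $|p_j-\sigma_j|$ is then $O(e^{-2\sigma_j t_i+2\tau})\le e^{-C\tau}$.
\item For $j>i$ we have $\sigma_j t_i-\tau\le -c\tau$. Therefore $p_j=O(e^{-c\tau})$.
\end{itemize}
Summing over coordinates, and using $|\sqrt{p}-\sqrt{\sigma}|\le |p-\sigma|/\sqrt{\sigma}$ together with $\sqrt{p_j}$ small for $j>i$, gives the two Frobenius bounds with $C$ depending on the singular value gaps and on $r$.

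\textbf{Main obstacle.} The delicate step is the separation requirement: it needs distinct singular values $\sigma_i>\sigma_{i+1}$, so that the gaps $1/\sigma_{i+1}-1/\sigma_i$ give a positive constant $c$. We make this explicit, absorbing $C$ as a function of $\min_i(\sigma_i-\sigma_{i+1})$. We also need to justify the alignment of the initialization with $\mathbf{U}$ and $\mathbf{V}$; with identity initialization this is exactly what allows the exact decoupling. For the discrete-gradient version of \citet{gidel2019implicitregularizationdiscretegradient}, we would instead cite their result, which controls the discretization error for small step size and yields the same $\exp(-C\tau)$ bound.
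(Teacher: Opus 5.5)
Your argument is the standard decoupling argument behind the cited result (Saxe et al., Gidel et al.). For gradient flow it is correct once two hypotheses that the statement omits are added. The paper gives no proof of its own; it restates Gidel et al.'s discrete-time theorem. You are right that strictly decreasing $\sigma_1>\dots>\sigma_r$ is needed, with $C$ depending on the gaps: with a tie, the rank-$i$ truncation is never isolated. Under that hypothesis, your logistic solution, your choice $t_i=\tau(1/\sigma_i+1/\sigma_{i+1})/2$, and your Frobenius bounds all check out.

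The step you cannot present as a normalization is the alignment. The data fix $\mathbf{U}$ and $\mathbf{V}$, so "choose the basis so that it is aligned" is not a valid reduction. The only freedom you have is a hidden-layer rotation $\mathbf{R}$, under which the initialization becomes $e^{-\tau}\mathbf{U}^\top\mathbf{R}$ and $e^{-\tau}\mathbf{R}^\top\mathbf{V}$.

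\begin{itemize}
\item Both matrices are diagonal only when $\mathbf{U}^\top\mathbf{V}$ is diagonal.
\item Any $-1$ on that diagonal kills the corresponding feature.
\item So identity initialization is aligned essentially only when $\mathbf{U}=\mathbf{V}$.
\item Even then, the factors converge to $\mathbf{U}\Sigma_{:i}^{1/2}\mathbf{U}^\top$ and $\mathbf{U}\Sigma_{:i}^{1/2}\mathbf{V}^\top$, not to the stated $\mathbf{U}\Sigma_{:i}^{1/2}$ and $\Sigma_{:i}^{1/2}\mathbf{V}^\top$.
\item The factor-level bounds from $e^{-\tau}\mathbf{I}$ therefore need $\mathbf{U}=\mathbf{V}=\mathbf{I}$.
\end{itemize}

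Without alignment the statement is false. Take $d=1$ with target $-\sigma$, so $\mathbf{U}=1$ and $\mathbf{V}=-1$. The balanced flow gives $\dot p=-2p(p+\sigma)$, so $p\to 0$ and the feature is never learned. Your remark that identity initialization "is exactly what allows the exact decoupling" is therefore wrong in general. State the hypothesis explicitly: either diagonal targets, or the aligned initialization $\mathbf{W}_1(0)=e^{-\tau}\mathbf{U}$, $\mathbf{W}_2(0)=e^{-\tau}\mathbf{V}^\top$. Under either, your change of variables is exact and the rest goes through. Coordinates outside the support of $\Sigma_{xx}$ get zero gradient and stay at size $e^{-\tau}$, which is harmless.

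You also do not need to defer to Gidel et al.\ for discrete time. Under aligned initialization, gradient descent decouples exactly as well, and the symmetric update preserves $a_k=b_k$. The resulting scalar map $a_{k+1}=a_k(1+\eta(\sigma_i-a_k^2))$ grows geometrically by a factor $1+\eta\sigma_i$ while small. For $\eta\sigma_i<1/2$ it increases monotonically to $\sqrt{\sigma_i}$. Taking step counts at midpoints between the thresholds $\tau/\log(1+\eta\sigma_i)$ gives the same conclusion.
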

Conceptually,  Theorem \ref{thm:seqlearning} demonstrates that during the pretraining process, $\mathbf{W}_{1}\mathbf{W}_{2}$ learn features in decreasing order of their of the singular value of $\Sigma_{xy}$. Next, we will apply this result in order to compare the features learned during mixed and non-mixed pretraining. 

\begin{theorem}[Only Mixing Learns Specialized Features] Let $\theta^{\textrm{gen}}(t)= \mathbf{W}^{\textrm{gen}}_{1}(t)\mathbf{W}^{\textrm{gen}}_{2}(t)$ be the parameters learned when pretraining  only on $\dgen$ and $\theta^{\textrm{mixed}}(t)= \mathbf{W}^{\textrm{mixed}}_{1}(t)\mathbf{W}^{\textrm{mixed}}_{2}(t)$. Denote $\mathbf{u}_{\textrm{spec}}, \mathbf{v}_{\textrm{spec}}$ to be the right and left singular vectors corresponding to the \textit{specialized feature}. Then, there exists a time $t$ such that 

\begin{align*}
        ||\mathbf{W}_{1}^{\textrm{(unmixed)}}-\mathbf{U}(\mathbf{\Sigma}^{\textrm{(unmixed})})^{\frac{1}{2}}||_{F} \leq \exp(-C\tau)\\
    ||\mathbf{W}_{2}^{\textrm{(unmixed)}}-(\mathbf{\Sigma}^{\textrm{(unmixed})})^{\frac{1}{2}}\mathbf{V}^{\top}||_{F} \leq \exp(-C\tau)\\
        ||\mathbf{W}_{1}^{\textrm{(mixed)}}-\mathbf{U}(\mathbf{\Sigma}^{\textrm{(mixed})})^{\frac{1}{2}}||_{F} \leq \exp(-C\tau)\\
    ||\mathbf{W}_{2}^{\textrm{(mixed)}}-(\mathbf{\Sigma}^{\textrm{(mixed})})^{\frac{1}{2}}\mathbf{V}^{\top}||_{F} \leq \exp(-C\tau)
\end{align*}
where $\mathbf{\Sigma}_{\textrm{mixed}} = \textrm{diag}(\sigma^{\textrm{inv}}_{1},...\sigma_{n-2k}^{\textrm{inv}},\mathbf{0}_{k},\alpha\beta,...,\alpha\beta)$ and $\mathbf{\Sigma}_{\textrm{unmixed}} = \textrm{diag}(\sigma_{1}^{\textrm{inv}},...,\sigma_{n-2k}^{\textrm{inv}},\sigma_{1}^{\textrm{post}},...,\sigma^{\textrm{post}}_{k}, \mathbf{0}_{k})$
\label{thm:dfffeatures}
\end{theorem}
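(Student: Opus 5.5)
The plan is to apply Theorem~\ref{thm:seqlearning} separately to the two pretraining problems, unmixed ($\dgen$) and mixed ($\mathcal{D}_{\alpha,\textrm{mixed}}$). In each case I will stop at the time $t_{n-k}$ at which the top $n-k$ features have been learned. All the work goes into computing the cross-covariance $\mathbf{\Sigma}_{xy}$ of each problem in the shared basis $\mathbf{U},\mathbf{V}$, and into ordering its diagonal entries so that the rank-$(n-k)$ truncation $\Sigma_{:n-k}$ is exactly $\mathbf{\Sigma}^{\textrm{(unmixed)}}$ or $\mathbf{\Sigma}^{\textrm{(mixed)}}$.

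\textbf{Step 1: cross-covariances.} Since $\mathbf{y}=\mathbf{A}^{t}\mathbf{x}$, the population cross-covariance is $\mathbf{\Sigma}_{xy}=\mathbf{E}[\mathbf{x}\mathbf{x}^{\top}](\mathbf{A}^{t})^{\top}$. By simultaneous diagonalizability, this is diagonal in the $(\mathbf{V},\mathbf{U})$ basis, so Assumption~\ref{assmp:covariance_decomp} holds. For $\dgen$, the input covariance is $\mathbf{I}_{n-k}$, padded with zeros on the specialized block. The diagonal entries are therefore:
\begin{itemize}[nosep,leftmargin=*]
\item $\sigma^{\textrm{inv}}_i$ on the invariant block;
\item the pretraining values of the inconsistent features on the inconsistent block;
\item $0$ on the specialized block, because $(\mathbf{V}^\top)_i\mathbf{x}=0$ there.
\end{itemize}
For the mixture, the second moment is the $\alpha$-weighted combination of the two. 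Its entries are:
\begin{itemize}[nosep,leftmargin=*]
\item $\sigma^{\textrm{inv}}_i$ on the invariant block, since the invariant values are shared;
\item $(1-\alpha)\sigma^{\textrm{pre}}_i+\alpha\sigma^{\textrm{post}}_i$ on the inconsistent block;
\item $\alpha\beta$ on the specialized block, because only the $\alpha$-fraction of $\dpost$ samples has variance there.
\end{itemize}
The nonunit input variance $\alpha$ on the specialized coordinates has to be absorbed into the effective singular value. I would check that Theorem~\ref{thm:seqlearning} is stated for the product $\lambda_i\sigma_i$, or else rescale those coordinates.

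\textbf{Step 2: ordering.} By Assumption~\ref{assmp:invfeatures}, every invariant value dominates every inconsistent and specialized value in both problems. So in both cases the invariant features occupy the top $n-2k$ slots.
\begin{itemize}[nosep,leftmargin=*]
\item \emph{Unmixed.} The next $k$ slots go to the inconsistent features, since the specialized entries are exactly $0$.
\item \emph{Mixed.} Assumption~\ref{assmp:suffmixing} gives $\alpha\beta>(1-\alpha)\sigma^{\textrm{pre}}_i+\alpha\sigma^{\textrm{post}}_i$ for every inconsistent $i$. So the next $k$ slots go to the specialized features, and the inconsistent features sit at the bottom of the spectrum.
\end{itemize}
Truncating at rank $n-k$ keeps invariant plus inconsistent features in the unmixed case, with the specialized block zero. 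In the mixed case it keeps invariant features plus $\alpha\beta$ on the specialized block, with the inconsistent block zero. This matches the two diagonal matrices in the statement. The inconsistent entries of $\mathbf{\Sigma}^{\textrm{(unmixed)}}$ should be read as the pretraining-task values of those features, since they are the values present in $\dgen$.

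\textbf{Step 3: conclude.} Invoking Theorem~\ref{thm:seqlearning} at index $i=n-k$ for each run gives the four Frobenius bounds. The two runs have their own times $t^{\textrm{unmixed}}_{n-k}$ and $t^{\textrm{mixed}}_{n-k}$, and the statement's ``there exists a time'' is read per run.

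The main obstacle is Step 1 for the mixture. The mixture input covariance is not isotropic, so I must verify that the sequential-learning result applies with effective singular values $\lambda_i\sigma_i$. I also need the gaps between consecutive effective values to be strict; Assumptions~\ref{assmp:invfeatures} and~\ref{assmp:suffmixing} give this. Strict gaps are what make the plateau at $t_{n-k}$ well separated, so that the $\exp(-C\tau)$ error bound applies there.
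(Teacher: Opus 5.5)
\textbf{The gap.} Your route is the paper's route. You compute $\mathbf{\Sigma}_{xy}$ for each run in the shared basis, order its diagonal using Assumptions~\ref{assmp:invfeatures} and~\ref{assmp:suffmixing}, and read off the rank-$(n-k)$ plateau from Theorem~\ref{thm:seqlearning}. Two of your choices are correct where the paper is careless. First, the unmixed inconsistent block can only carry $\sigma^{\textrm{pre}}_i$; the paper writes $\sigma^{\textrm{post}}$ in both the statement and the proof. Second, the time must be chosen per run. For large $\mathcal{T}$, the mixed run's inconsistent modes have cross-covariance $(1-\alpha)\sigma^{\textrm{pre}}_i+\alpha\sigma^{\textrm{post}}_i>\sigma^{\textrm{pre}}_i$, so they switch on before the unmixed run's do, and no common time works.

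However, the check you defer is a genuine gap, and it fails. Take the decoupled mode dynamics behind Theorem~\ref{thm:seqlearning}, with a balanced mode $a_i=b_i$, input variance $\lambda_i$, and cross-covariance entry $s_i$. The update is $a_i\leftarrow a_i\bigl(1+2\eta(s_i-\lambda_i a_i^2)\bigr)$. The mode escapes the $e^{-\mathcal{T}}$ initialization after about $\mathcal{T}/(2\eta s_i)$ steps, so the \emph{order} is set by $s_i=\lambda_i\sigma_i$. But the mode \emph{plateaus} at $a_ib_i=s_i/\lambda_i$. On the specialized block of the mixture, $s_i=\alpha\beta$ and $\lambda_i=\alpha$, so the learned value is $\beta$, not $\alpha\beta$. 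Consequently $\|\mathbf{W}_1^{\textrm{(mixed)}}-\mathbf{U}(\mathbf{\Sigma}^{\textrm{(mixed)}})^{1/2}\|_F\approx\sqrt{k\beta}\,(1-\sqrt{\alpha})$, which is not $\exp(-C\tau)$.

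Rescaling the specialized coordinates does not rescue this. Gradient descent is not invariant to rescaling the inputs, so whitening changes the dynamics; in whitened coordinates the specialized growth rate becomes $\sqrt{\alpha}\beta$. The learned map in the original coordinates is still $\beta$ on that block.

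The paper's proof has exactly the same hole and does not flag it. It applies Theorem~\ref{thm:seqlearning}, which is stated under Assumption~\ref{assmp:covariance_decomp} with all $\lambda_i=1$, to a mixture whose input variance is $\alpha$ on the specialized block. What the argument actually supports is the qualitative claim, with $\mathbf{\Sigma}^{\textrm{(mixed)}}=\textrm{diag}(\sigma^{\textrm{inv}}_1,\ldots,\sigma^{\textrm{inv}}_{n-2k},\mathbf{0}_k,\beta,\ldots,\beta)$.

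\textbf{A caution on Step~2.} Your mixed ordering, like the paper's, rests entirely on Assumption~\ref{assmp:suffmixing}. As written, that assumption cannot coexist with the definitions when the singular values are nonnegative. The inconsistent-feature definition gives $\sigma^{\textrm{post}}_i>\sigma^{\textrm{pre}}_i+c_{\textrm{mis}}$, and the specialized-feature definition gives $c_{\textrm{mis}}>2\beta$. So for $\alpha\in(0,1]$ and $\sigma^{\textrm{pre}}_i\ge 0$ we get $(1-\alpha)\sigma^{\textrm{pre}}_i+\alpha\sigma^{\textrm{post}}_i>\sigma^{\textrm{pre}}_i+\alpha c_{\textrm{mis}}>\alpha\beta$. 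At $\alpha=0$ the assumption would require $\sigma^{\textrm{pre}}_i<0$. Citing the assumption is a formally valid step. But under the stated definitions, the mixed run would learn its inconsistent modes \emph{before} its specialized ones. That is a second reason the stated rank-$(n-k)$ plateau for the mixed run cannot be reached.
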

\begin{proof}
This is a relatively straightforward application of Theorem~\ref{thm:seqlearning}. Denote $(\xmixed, \ymixed)$ as the data used for mixed pretraining and $\sigmaxymixed = \frac{1}{n} (\xmixed)^{\top}\ymixed$. We have that $\mathbf{\Sigma}_{xy}^{\textrm{(mixed)}} = (1-\alpha)\mathbf{\Sigma}^{\textrm{pre}}_{xy} + \alpha \mathbf{\Sigma}^{\textrm{spec}}_{xy}$ which follows from the fact that the $\Sigma_{xy}$ are submatrices of the covariance matrix of Gaussian random vectors. By Theorem \ref{thm:seqlearning}, we have that the features are learned in order of the singular values of $\mathbf{\Sigma_{xy}}$. By Assumptions \ref{assmp:invfeatures} and \ref{assmp:suffmixing}, we have that the top $n-k$ singular values of $\mathbf{\Sigma}_{xy}^{\textrm{(mixed)}}$ are the $n-2k$ shared features and the $k$ specialized features. Define $\mathbf{\Sigma}^{\textrm{(mixed})}_{:n-k} = \textrm{diag}(\sigma^{\textrm{inv}}_{1},...\sigma_{k}^{\textrm{inv}},\mathbf{0}_{k},\alpha\beta,...,\alpha\beta)$. Applying Theorem~\ref{thm:seqlearning}, we have that
\begin{align*}
    ||\mathbf{W}_{1}^{\textrm{(mixed)}}-\mathbf{U}(\mathbf{\Sigma}^{\textrm{(mixed})}_{:n-k})^{\frac{1}{2}}||_{F} \leq \exp(-C\tau)\\
    ||\mathbf{W}_{2}^{\textrm{(mixed)}}-(\mathbf{\Sigma}^{\textrm{(mixed})}_{:n-k})^{\frac{1}{2}}\mathbf{V}^{\top}||_{F} \leq \exp(-C\tau)
\end{align*}

Repeating this analysis for unmixed training, we have that the top $n-k$ singular values of $\mathbf{\Sigma}^{\textrm{(gen)}}_{xy}$ are the $n-2k$ shared features are the $k$ inconsistent features. We can define $\mathbf{\Sigma}^{\textrm{(unmixed)}}_{:n-k} = \textrm{diag}(\sigma^{\textrm{inv}}_{1},...\sigma_{k}^{\textrm{inv}},\sigma^{\textrm{post}}_{1},...,\sigma^{\textrm{post}}_{k}, \mathbf{0}_{k})$. Similarly, by applying Theorem \ref{thm:seqlearning}, we have that 
\begin{align*}
        ||\mathbf{W}_{1}^{\textrm{(unmixed)}}-\mathbf{U}(\mathbf{\Sigma}^{\textrm{(unmixed})}_{:n-k})^{\frac{1}{2}}||_{F} \leq \exp(-C\tau)\\
    ||\mathbf{W}_{2}^{\textrm{(unmixed)}}-(\mathbf{\Sigma}^{\textrm{(unmixed})}_{:n-k})^{\frac{1}{2}}\mathbf{V}^{\top}||_{F} \leq \exp(-C\tau)
\end{align*}
\end{proof}

Intuitively, our result in Theorem \ref{thm:dfffeatures} demonstrates that mixing $\dpost$ during pretraining results in a pretrained initialization that has different features. Mixing learns the specialized features, while not mixing learns only the inconsistent features. In the following, we will examine the impact that these different features have on the retention of $\dpost$ during subsequent training.
\subsection{Analysis of Post-Training}
We now study the dynamics of the post-training process. To formalize the post-training process, we first examine the dynamics beginning from the idealized pretraining initialization (as performed by \cite{springer2025overtrainedlanguagemodelsharder}). We perform the post-training stage on the regularized loss $\mathbf{E}[||\theta x-\mathbf{A}^{\textrm{sp}}x||_{F}^{2}] + \lambda ||\theta-\theta_{0}||_{F}^{2}$. Observe that because $x \sim \mathcal{N}(0, \mathbf{I}_{d})$, this is equivalent to $||\theta-\mathbf{A}^{\textrm{sp}}||_{F}$. We follow the assumptions on the regularity of fine-tuning established in \cite{springer2025overtrainedlanguagemodelsharder}.

\begin{assumption}[Bound on Parameters Throughout Training]
\begin{align*}
    ||\hat{\mathbf{W}}^{(\textrm{mixed})}_{1}||_{\textrm{op}} \leq \sqrt{\Gamma}\\
    ||\hat{\mathbf{W}}^{(\textrm{mixed})}_{1}||_{\textrm{op}} \leq \sqrt{\Gamma}\\
    ||\hat{\mathbf{W}}^{(\textrm{unmixed})}_{1}||_{\textrm{op}} \leq \sqrt{\Gamma}\\
    ||\hat{\mathbf{W}}^{(\textrm{unmixed})}_{1}||_{\textrm{op}} \leq \sqrt{\Gamma}
\end{align*}
\end{assumption}
Moreover, we assume that the regularization strength and the learning rates are likewise bounded.

\begin{assumption}[Bound on Learning Rate]
    \begin{equation*}
        4\eta (\lambda +2) \Gamma <1
    \end{equation*}
\end{assumption}

\paragraph{Idealized Pretraining Initialization} We denote the ideal initialization parameters for the mixed and unmixed cases $(\hat{\wf}(0), \hat{\ws}(0))$.

\begin{align*}
\hat{\wf^{\textrm{(mixed)}}}(0) = \mathbf{U} (\Sigma^{\textrm{mixed}}_{:n-k})^{\frac{1}{2}}\\
    \hat{\ws^{\textrm{(mixed)}}}(0) = (\Sigma^{\textrm{mixed}}_{:n-k})^{\frac{1}{2}} \mathbf{V}^{\top}
\end{align*}

Similarly, we have the following idealized initialization for the unmixed initialization: 
\begin{align*}
\hat{\wf^{\textrm{(unmixed)}}}(0) = \mathbf{U} (\Sigma^{\textrm{unmixed}}_{:n-k})^{\frac{1}{2}}\\
    \hat{\ws^{\textrm{(unmixed)}}}(0) = (\Sigma^{\textrm{unmixed}}_{:n-k})^{\frac{1}{2}} \mathbf{V}^{\top}
\end{align*}

In the idealized setting, we can track the evolution of each singular value independently. In particular, we have the following update rules as derived in \cite{springer2025overtrainedlanguagemodelsharder} (where we denote $\sigma_{i}^{\textrm{spec}}$ as the $i$-th singular value of  $\mathbf{A}^{\textrm{spec}}$ and likewise for $\sigma^{\textrm{(un)mixed}}_{i}(t)$ as the $i$-th singular value  at step $t$). In what follows, we will suppress the superscript for compactness:
\begin{equation}
\sigma_{i}^{}(t+1) = \sigma_{i}(t) - 2 \eta\sigma_{i}(t)(\sigma_{i}(t)^{2} - (\sigma_{\textrm{spec},i})^{2})+ 2\eta\lambda (\sigma_{i}(t)^{2} - \sigma_{i}(0)^{2} )
\end{equation}
As a result, note that when $\sigma_{i}^{\textrm{(un)mixed}}(0) = 0$, $\sigma_{i}^{\textrm{(un)mixed}}(t) = 0$ for all $t$.

Next,we will study the dynamics of the non-zero singular values (Lemma A.11 \cite{springer2025overtrainedlanguagemodelsharder}). We will assume that post-training is performed for a sufficient number of steps.

\begin{assumption}[Sufficient Post-Training Steps]
We have that the number of post-training steps (denoted by $K$) satisfies $K \geq \frac{1}{\lambda c_{\textrm{min}}} \log \frac{100 \Gamma}{\epsilon}$, for a constant $\epsilon < \frac{c_{\textrm{mis}}}{2c_{\textrm{mis}} - 4\beta}$ and where $c_{\textrm{min}} = \min \{(\mathbf{\Sigma}_{\textrm{post}})_{ii} | (\mathbf{\Sigma}_{\textrm{post}})_{ii} \neq 0\} $ --- that is the minimum, non-zero singular value.
    
\end{assumption}

Given these technical conditions, we now state a general result (adapted for our setting from \cite{springer2025overtrainedlanguagemodelsharder}). 

\begin{lemma}
When training on $\dpost$ with infinite batch size from the ideal pretraining initialization and taking sufficient number of steps $K$, for all $i \in \textrm{rank}(\theta_{n}(0))$, we have that 
\begin{align}
|(\mathbf{U}^{\top} \hat{\theta}^{\textrm{(un)mixed}}(t) \mathbf{V})_{ii} - (\mathbf{\Sigma}_{\textrm{post}})_{ii}| \leq \epsilon 
\end{align}
where $\mathbf{\Sigma}_{\textrm{post}}$ is such that $\mathbf{A}_{\textrm{post}} = \mathbf{U} \mathbf{\Sigma}_{\textrm{post}} \mathbf{V}^{\top}$.
\label{lemm:finetuning}
\end{lemma}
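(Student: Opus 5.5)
We reduce the claim to a scalar recursion for each singular value, using the decoupling already recorded above. From the idealized pretraining initialization $(\hat{\mathbf{W}}_1(0),\hat{\mathbf{W}}_2(0))$, the two factors stay aligned with the shared singular bases $\mathbf{U},\mathbf{V}$ throughout post-training. This holds for both the mixed and the unmixed checkpoints. Under gradient descent with infinite batch size on the regularized objective, which equals $\|\theta-\mathbf{A}^{\textrm{post}}\|_F^2+\lambda\|\theta-\theta_0\|_F^2$ because the inputs are isotropic Gaussian, the matrix $\mathbf{U}^{\top}\hat\theta(t)\mathbf{V}$ remains diagonal. Each diagonal entry evolves by the stated one-dimensional update for $\sigma_i(t+1)$.

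The first step is to verify this invariance by induction on $t$. If $\mathbf{W}_1(t)=\mathbf{U}D_1(t)$ and $\mathbf{W}_2(t)=D_2(t)\mathbf{V}^{\top}$ with $D_1=D_2$ diagonal, then the gradients $\nabla_{\mathbf{W}_1}=(\theta-\mathbf{A}^{\textrm{post}})\mathbf{W}_2^{\top}+\dots$ are again of the form $\mathbf{U}(\text{diagonal})$, and similarly for $\mathbf{W}_2$. This closes the induction, and the balanced structure reduces the dynamics to the scalar map. The nonzero indices in $\textrm{rank}(\theta(0))$ are exactly the ones with $\sigma_i(0)>0$; we have already noted that zero entries remain zero, so they play no role here.

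The second step analyzes the scalar map $f(\sigma)=\sigma-2\eta\sigma(\sigma^2-\sigma_{\textrm{post},i}^2)+2\eta\lambda(\cdot)$ for a fixed $i$ with $\sigma_i(0)>0$. We show three things:
\begin{itemize}
\item $\sigma_i(t)$ stays in $(0,\sqrt{\Gamma}]$. This follows from the parameter-bound assumption together with $4\eta(\lambda+2)\Gamma<1$, which also makes $f$ monotone on this interval, so iterates cannot overshoot past the fixed point.
\item $f$ has a unique positive fixed point $\sigma_i^\star$, lying within a small, $\lambda$-dependent distance of the target value for feature $i$.
\item Near $\sigma_i^\star$ the map contracts, with factor at most $1-\eta\lambda c_{\min}$ (up to constants), using that $\sigma_{\textrm{post},i}\ge c_{\min}$.
\end{itemize}
Iterating the contraction gives $|\sigma_i(K)-\sigma_i^\star|\le \Gamma(1-c\,\lambda c_{\min})^K\le \epsilon/2$. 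This uses the assumption $K\ge \frac{1}{\lambda c_{\min}}\log\frac{100\Gamma}{\epsilon}$, where the factor $100$ absorbs constants. A triangle inequality with the fixed-point bias then yields the $\epsilon$ bound of Lemma~\ref{lemm:finetuning}. This step is essentially Lemma A.11 of \citet{springer2025overtrainedlanguagemodelsharder}, so we would import it and only check that its hypotheses hold with our constants $\Gamma$, $c_{\min}$, $\eta$, $\lambda$.

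The main obstacle is the second step, specifically the contraction rate and the location of the fixed point. The regularizer pulls $\sigma_i$ back toward $\sigma_i(0)$, which biases the fixed point away from $\sigma_{\textrm{post},i}$. We must either show this bias is at most $\epsilon/2$ under the stated constants, or read $\epsilon$ as absorbing it, as the cited lemma does. The other delicate point is that the stated rate scales with $\lambda c_{\min}$ and not with $\sigma_i^2$. To handle it, we would first try to lower bound the derivative of the gradient term by $\lambda c_{\min}$ at $\sigma_i^\star$, using monotonicity of $f$ to push this bound to the whole trajectory.
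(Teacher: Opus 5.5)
Your route matches the paper's. The paper only asserts that the idealized initialization decouples into per-singular-value recursions, notes that zero entries stay zero, and defers to Lemma A.11 of \citet{springer2025overtrainedlanguagemodelsharder}. Your alignment induction and the monotone, no-overshoot behavior on $[0,d_i^\star]$ below are fine. Derive the scalar map from your induction rather than copying the displayed recursion: its regularization term has the wrong sign and lacks the factor $\sigma_i(t)$.

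However, the two obstacles you flag are genuine gaps, and neither the paper nor the stated hypotheses close them. Write $d_i$ for the common diagonal entry of the two factors and $\sigma_i(0)=d_i(0)^2$. The induction gives $d_i\mapsto d_i-2\eta(1+\lambda)\,d_i\,(d_i^2-s_i^\star)$ with $s_i^\star=\frac{(\mathbf{\Sigma}_{\textrm{post}})_{ii}+\lambda\sigma_i(0)}{1+\lambda}$. On the increasing branches relevant here, $d_i^2$ converges monotonically to $s_i^\star$. So as $K$ grows the error tends to $\frac{\lambda}{1+\lambda}|(\mathbf{\Sigma}_{\textrm{post}})_{ii}-\sigma_i(0)|$:
\begin{itemize}
\item for the mixed checkpoint's specialized features it equals $\frac{\lambda}{1+\lambda}(1-\alpha)\beta$;
\item for the unmixed checkpoint's inconsistent features it exceeds $\frac{\lambda}{1+\lambda}c_{\textrm{mis}}$, since pretraining on general data fits them only to $\sigma_i^{\textrm{pre}}$.
\end{itemize}
No hypothesis links $\lambda$ to $\epsilon$: not $\epsilon<\frac{c_{\textrm{mis}}}{2c_{\textrm{mis}}-4\beta}$, not $4\eta(\lambda+2)\Gamma<1$, and not the lower bound on $K$. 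So ``show the bias is at most $\epsilon/2$'' is impossible as stated. You must add an assumption such as $\frac{\lambda}{1+\lambda}\max_i|(\mathbf{\Sigma}_{\textrm{post}})_{ii}-\sigma_i(0)|\le\epsilon/2$, or restate the conclusion around $s_i^\star$. Letting $\epsilon$ absorb the bias is not harmless, because the later forgetting theorem relies on $\epsilon$-closeness of the inconsistent features to $\sigma^{\textrm{post}}$.

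The rate step also fails as written, for two reasons.
\begin{itemize}
\item $\eta$ disappears between your per-step factor $1-\eta\lambda c_{\min}$ and the iterated bound. Over $K=\frac{1}{\lambda c_{\min}}\log\frac{100\Gamma}{\epsilon}$ steps that factor gives only $(\epsilon/100\Gamma)^{\eta}$, so the step count must scale like $\frac{1}{\eta\lambda c_{\min}}$.
\item More importantly, monotonicity does not transport a derivative bound at the fixed point to the whole trajectory. With $d_i^\star=\sqrt{s_i^\star}$ one has $d_i^\star-d_i(t+1)=(d_i^\star-d_i(t))\bigl[1-2\eta(1+\lambda)\,d_i(t)\,(d_i(t)+d_i^\star)\bigr]$, and the bracket tends to $1$ as $d_i(t)\to0$. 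On the increasing branches (specialized features rising from $\alpha\beta$, inconsistent ones from $\sigma_i^{\textrm{pre}}$), the worst factor occurs at $t=0$. The number of steps needed therefore grows without bound as $\sigma_i(0)\to0$, roughly like $\frac{1}{\eta(\mathbf{\Sigma}_{\textrm{post}})_{ii}}\log\frac{1}{\sigma_i(0)}$, while the assumed $K$ does not involve $\sigma_i(0)$ at all.
\end{itemize}
The rate is governed by a lower bound on the nonzero initial entries, not by $(\mathbf{\Sigma}_{\textrm{post}})_{ii}\ge c_{\min}$. Before importing Lemma A.11, check which quantity its $c_{\min}$ bounds. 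The $c_{\min}$ defined here from $\mathbf{\Sigma}_{\textrm{post}}$ does not supply the needed lower bound.
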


Now, we will define the matrices $\mathbf{\Sigma}^{\textrm{shared,post}} = \textrm{diag}(\sigma^{\textrm{inv}}_{1},...\sigma^{\textrm{inv}}_{n-2k}, \sigma^{\textrm{post}}_{d_{\textrm{invariant}}+1},...\sigma^{\textrm{post}}_{d_{\textrm{invariant}}+k},\mathbf{0}_{k})$ and $\mathbf{\Sigma}^{\textrm{spec,post}} = \textrm{diag}(\sigma^{\textrm{inv}}_{1},...\sigma^{\textrm{inv}}_{n-2k}, \mathbf{0}_{k},\sigma^{\textrm{post}}_{d_{\textrm{invariant}}+k+1},...\sigma^{\textrm{post}}_{d_{\textrm{invariant}}+2k})$, Intuitively, $\mathbf{\Sigma}^{\textrm{shared,post}}$ lowers loss on $\dpost$ by shifting the values on the shared features, while $\mathbf{\Sigma}^{\textrm{spec,post}}$ accomplishes this by modifying the unique features. We are now ready to state the main theorem.

\begin{theorem}[Post-training on $\theta^{\textrm{mixed}}$ versus $\theta^{\textrm{unmixed}}$] Let $\theta^{\textrm{unmixed}}(K)$ denote the parameters after training the idealized unmixed initialization starting for $K$ steps and let $\theta^{\textrm{mixed}}(K)$ be the same starting from the idealized mixed checkpoint. Then we have

\begin{align*}
||\mathbf{U}^{\top} \theta^{\textrm{mixed}}\mathbf{V} - \mathbf{\Sigma}^{\textrm{spec,post}}||_{\textrm{op} }\leq \epsilon \\
||\mathbf{U}^{\top} \theta^{\textrm{unmixed}}\mathbf{V} - \mathbf{\Sigma}^{\textrm{shared,post}}||_{\textrm{op} }\leq \epsilon 
\end{align*}
\end{theorem}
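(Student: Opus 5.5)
The proof reduces to a coordinate-wise statement in the shared basis $(\mathbf{U},\mathbf{V})$. From the idealized initializations, the per-singular-value recursion displayed before Lemma~\ref{lemm:finetuning} shows that in the idealized setting $\mathbf{U}^{\top}\theta(t)\mathbf{V}$ stays diagonal for all $t$, with each diagonal entry evolving independently. Once this is established, $\mathbf{U}^{\top}\theta(K)\mathbf{V} - \mathbf{\Sigma}^{\textrm{spec,post}}$ and $\mathbf{U}^{\top}\theta(K)\mathbf{V} - \mathbf{\Sigma}^{\textrm{shared,post}}$ are diagonal. Their operator norms are then the maximum absolute diagonal entry, so it suffices to bound each coordinate by $\epsilon$.

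To justify diagonality, write $\mathbf{W}_1 = \mathbf{U}\mathbf{D}_1$ and $\mathbf{W}_2 = \mathbf{D}_2\mathbf{V}^{\top}$ with diagonal $\mathbf{D}_1,\mathbf{D}_2$. The gradient of $\|\mathbf{W}_1\mathbf{W}_2 - \mathbf{A}^{\textrm{post}}\|_F^2 + \lambda\|\theta-\theta_0\|_F^2$ with respect to $\mathbf{W}_1$ is $2(\theta-\mathbf{A}^{\textrm{post}})\mathbf{W}_2^{\top} + \ldots$. Every term is of the form $\mathbf{U}(\text{diagonal})$, and the analogous statement holds for $\mathbf{W}_2$. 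By induction on $t$, the parametrization is preserved by each gradient step. The balancedness $\mathbf{D}_1=\mathbf{D}_2$ is also preserved, since the initialization is balanced and the updates are symmetric. This yields the scalar recursion for $\sigma_i(t)$ quoted in the text.

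The coordinates then split into three groups.

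\textbf{Invariant coordinates} ($i\le n-2k$). Both initializations have $\sigma_i(0)=\sigma_i^{\textrm{inv}}\neq 0$. Lemma~\ref{lemm:finetuning}, together with the assumption on $K$, gives $|\sigma_i(K)-(\mathbf{\Sigma}_{\textrm{post}})_{ii}|\le\epsilon$. Since $(\mathbf{\Sigma}_{\textrm{post}})_{ii}=\sigma_i^{\textrm{inv}}$, this matches the first $n-2k$ entries of both targets.

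\textbf{Inconsistent coordinates.}
\begin{itemize}
\item Unmixed case: by Theorem~\ref{thm:dfffeatures} the initial value is nonzero, so Lemma~\ref{lemm:finetuning} again applies and drives $\sigma_i(K)$ to within $\epsilon$ of $\sigma_i^{\textrm{post}}$, the corresponding entry of $\mathbf{\Sigma}^{\textrm{shared,post}}$.
\item Mixed case: the initial value is $0$, so the recursion gives $\sigma_i(t)=0$ for all $t$. This matches the $\mathbf{0}_k$ block of $\mathbf{\Sigma}^{\textrm{spec,post}}$.
\end{itemize}

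\textbf{Specialized coordinates.} The roles reverse.
\begin{itemize}
\item Mixed case: the initialization is $\alpha\beta>0$, so Lemma~\ref{lemm:finetuning} yields convergence to $\beta=\sigma^{\textrm{post}}_i$, matching $\mathbf{\Sigma}^{\textrm{spec,post}}$.
\item Unmixed case: the initialization is $0$, which persists and matches the trailing $\mathbf{0}_k$ of $\mathbf{\Sigma}^{\textrm{shared,post}}$.
\end{itemize}

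Taking the maximum over coordinates gives both operator-norm bounds.

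The main obstacle is verifying that Lemma~\ref{lemm:finetuning} genuinely applies to every nonzero coordinate with a uniform $K$. The per-coordinate contraction rate depends on the target value, and the rate $\lambda c_{\textrm{min}}$ in the assumption on $K$ must lower-bound the rate for the smallest target, here $\beta$ or the smallest inconsistent $\sigma^{\textrm{post}}_i$. I would check this by linearizing the recursion near its fixed point and using $4\eta(\lambda+2)\Gamma<1$ together with the parameter bound $\Gamma$ to keep iterates in the basin and avoid overshoot.

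A second point to check is that a zero initial coordinate stays exactly zero. In the idealized setting this holds because both factors vanish on that coordinate, so the gradient is zero there; this is a saddle. Separately, I need to confirm that the regularizer term $\lambda(\sigma_i^2-\sigma_i(0)^2)$ in the quoted update does not shift the fixed point away from $\sigma^{\textrm{post}}_i$ by more than $\epsilon$. If it does, I would absorb the resulting bias into $\epsilon$ via the stated constraint $\epsilon<c_{\textrm{mis}}/(2c_{\textrm{mis}}-4\beta)$.
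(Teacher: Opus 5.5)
Your proposal is correct and follows essentially the same route as the paper. Zero coordinates of the idealized initialization stay zero and match the $\mathbf{0}_k$ blocks of the targets. Every initially nonzero coordinate is handled by Lemma~\ref{lemm:finetuning}. The operator norm then reduces to a maximum over diagonal entries; you justify this step more explicitly than the paper, by checking that the form $\mathbf{W}_1=\mathbf{U}\mathbf{D}_1$, $\mathbf{W}_2=\mathbf{D}_2\mathbf{V}^{\top}$ is preserved.

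The extra checks in your last two paragraphs are not needed here. Lemma~\ref{lemm:finetuning} is taken as given with a single $K$, and it already asserts $\epsilon$-closeness to $(\mathbf{\Sigma}_{\textrm{post}})_{ii}$. Also, your fallback would not work: the condition $\epsilon < c_{\textrm{mis}}/(2c_{\textrm{mis}}-4\beta)$ is an upper bound on $\epsilon$, so it could not absorb a regularization bias even if one existed.
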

\begin{proof}
This theorem follows by noting that under the idealized pretraining initialization any singular value that is $0$ at initialization remains that way during the entire optimization trajectory. Note that the $0$ singular values of $\mathbf{U}^{\top}\theta^{\textrm{mixed}}\mathbf{V}$ coincide with $\mathbf{\Sigma}^{\textrm{spec, post}}$ (the specialized features) and likewise $\mathbf{U}^{\top}\theta^{\textrm{unmixed}}\mathbf{V}$ coincide with $\mathbf{\Sigma}^{\textrm{shared, post}}$ (the inconsistent features).

This implies that we have 
\begin{align*}
\max_{i \in [1,n]} |(\mathbf{U}^{\top} \theta^{\textrm{mixed}}\mathbf{V})_{ii} - (\mathbf{\Sigma}^{\textrm{spec,post}})_{ii}| \leq \max_{i \in \textrm{rank}(\theta)}|(\mathbf{U}^{\top} \theta^{\textrm{mixed}}\mathbf{V})_{ii} - (\mathbf{\Sigma}^{\textrm{spec,post}})_{ii}|\\
\max_{i \in [1,n]} |(\mathbf{U}^{\top} \theta^{\textrm{unmixed}}\mathbf{V})_{ii} - (\mathbf{\Sigma}^{\textrm{shared,post}})_{ii}| \leq \max_{i \in \textrm{rank}(\theta)}|(\mathbf{U}^{\top} \theta^{\textrm{mixed}}\mathbf{V})_{ii} - (\mathbf{\Sigma}^{\textrm{shared,post}})_{ii}|
\end{align*}
Now, applying the result from Lemma~\ref{lemm:finetuning} yields the desired claim.
\end{proof}

\subsection{Analysis of Downstream Adaptation}
In the previous section, we characterized the impact of post-training from a mixed versus an unmixed initialization, demonstrating that different features are used to minimize the loss on $\dpost$. In this section, we study how these different features impact the ultimate retention of $\dpost$. We first establish that the singular values corresponding to directions in which there is no covariance remain unchanged throughout the downstream fine-tuning stage. 
\begin{lemma}
Consider performing downstream unregularized fine-tuning on $\dft$. If $x\sim \dft$ has $0$ covariance along a singular direction, the corresponding singular value remains unchanged throughout downstream adaptation.
\label{lemm:orthogonalfeatures}
\end{lemma}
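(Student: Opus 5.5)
We prove Lemma~\ref{lemm:orthogonalfeatures} by a direct computation of the gradient of the downstream loss. The model is $\theta = \wf\ws$, and fine-tuning is unregularized gradient descent on $\mathcal{L}_{\textrm{ft}}(\theta) = \mathbf{E}_{x\sim\dft}[\|\theta x - \mathbf{A}^{\textrm{ft}}x\|_2^2]$. Writing $\mathbf{\Sigma}_x^{\textrm{ft}} = \mathbf{E}[xx^\top]$, the loss gradient with respect to the product is
\[
\nabla_\theta \mathcal{L}_{\textrm{ft}} = 2(\theta - \mathbf{A}^{\textrm{ft}})\mathbf{\Sigma}_x^{\textrm{ft}} .
\]
By the chain rule, the parameter updates are
\[
\wf \leftarrow \wf - 2\eta(\theta-\mathbf{A}^{\textrm{ft}})\mathbf{\Sigma}_x^{\textrm{ft}}\ws^\top, \qquad \ws \leftarrow \ws - 2\eta\,\wf^\top(\theta-\mathbf{A}^{\textrm{ft}})\mathbf{\Sigma}_x^{\textrm{ft}} .
\]
Let $\mathbf{v}$ be the singular direction with zero $\dft$ covariance, so $\mathbf{\Sigma}_x^{\textrm{ft}}\mathbf{v} = 0$. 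Since $\mathbf{\Sigma}_x^{\textrm{ft}}$ is symmetric, we also have $\mathbf{v}^\top\mathbf{\Sigma}_x^{\textrm{ft}} = 0$.

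The key step is to track the relevant component of the parameters. We work in the joint basis given by Simultaneous Diagonalizability and Assumption~\ref{assmp:covariance_decomp}, in which the data covariance is diagonal with a zero entry in this coordinate. The $\ws$ update is $\wf^\top(\cdots)\mathbf{\Sigma}_x^{\textrm{ft}}$, so it right-multiplies by $\mathbf{\Sigma}_x^{\textrm{ft}}$ and annihilates the column $\ws\mathbf{v}$. Hence $\ws\mathbf{v}$ is exactly invariant at every step. The feature's singular value is $\mathbf{u}^\top\theta\mathbf{v} = \mathbf{u}^\top\wf(\ws\mathbf{v})$, so we must also control the factor $\wf$ acting on $\ws\mathbf{v}$.

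To handle $\wf$, we restrict to the idealized diagonal setting used in the post-training analysis. There the parameters stay of the form $\wf = \mathbf{U}D_1$ and $\ws = D_2\mathbf{V}^\top$ with $D_1, D_2$ diagonal, which gives the decoupled scalar recursion. For coordinate $i$ with covariance entry $\lambda_i$, the recursion reads
\[
\sigma_i(t+1) = \sigma_i(t) - 2\eta\lambda_i\sigma_i(t)\bigl(\sigma_i(t)^2-\sigma^{\textrm{ft}}_i\bigr)
\]
(up to the factor-split bookkeeping). Setting $\lambda_i = 0$ gives $\sigma_i(t+1) = \sigma_i(t)$, and induction over steps concludes the argument. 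This mirrors the observation already made for post-training, where zero singular values stay zero.

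The main obstacle is justifying that the diagonal form is preserved under the $\dft$ dynamics. Off-diagonal mixing could couple the $i$-th coordinate to coordinates with nonzero covariance, and this coupling would enter through the $\wf$ update. We address this by checking that if $\mathbf{U}^\top\wf$ and $\ws\mathbf{V}$ are diagonal, then both gradient terms are diagonal in the same basis. This holds because $\theta - \mathbf{A}^{\textrm{ft}}$, $\mathbf{\Sigma}_x^{\textrm{ft}}$, and the factors all share $\mathbf{U}$ and $\mathbf{V}$. We then appeal to the same balanced-factor convention, $\wf = \mathbf{U}D^{1/2}$ and $\ws = D^{1/2}\mathbf{V}^\top$, as in Theorem~\ref{thm:seqlearning} and the idealized initialization.
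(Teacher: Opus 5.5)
Your argument is correct and is essentially the paper's: write out the gradient-descent updates for $\mathbf{W}_{1}$ and $\mathbf{W}_{2}$ and observe that the input covariance factor $\mathbf{\Sigma}_x$ annihilates the zero-variance direction, so by induction the corresponding singular value never changes. You are more careful than the paper's one-line justification. In the $\mathbf{W}_{1}$ update the covariance sits between $(\theta-\mathbf{A}^{\textrm{ft}})$ and $\mathbf{W}_{2}^{\top}$, so it kills the direction only because, in the idealized diagonal setting you make explicit, $\mathbf{W}_{2}^{\top}\mathbf{W}_{2}\mathbf{v}$ stays parallel to $\mathbf{v}$; the paper leaves this implicit.
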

\begin{proof}
To see this, note that the gradient updates for $\mathbf{W}_{1}$ and $\mathbf{W}_{2}$ take the following form:
\begin{align*}
\mathbf{W}_{1}(k+1) = \mathbf{W}_{1}(k) - 2 \eta(\mathbf{W}_{1}(k)\mathbf{W}_{2}(k)-\mathbf{A}^{\textrm{spec}})\Sigma_{x} \mathbf{W}
_{2}^{\top}\\
\mathbf{W}_{2}(k+1) = \mathbf{W}_{2}(k) - 2 \eta\mathbf{W}_{1}(k)\Sigma_{x}(\mathbf{W}_{1}(k)\mathbf{W}_{2}(k)-\mathbf{A}^{\textrm{spec}})
\end{align*}
Here, we have that $\Sigma_{x}$ denotes the covariance of the input data $x$. Thus, along any singular direction in which the data has $0$ variance, the $\mathbf{\Sigma}_{x}$ term will project the gradient to $0$. Therefore, the singular values on such directions must also remain unchanged.
\end{proof}

We consider performing downstream adaptation by taking steps using unregularized gradient descent on $\mathcal{D}_{\textrm{ft}}$ and show the following result.

\begin{theorem}
Consider performing $K$ steps of gradient descent  on the downstream finetuning dataset beginning from intializations $\theta^{\textrm{post, mixed}}(K)$ and let $\theta^{\textrm{FT,mixed}}(K)$, $\theta^{\textrm{FT,unmixed}}(K)$, and $\theta^{\textrm{post, unmixed}}(K)$ denote the final parameters. Let $\Delta_{\textrm{unmixed}} = \mathcal{L}(\theta^{\textrm{FT, unmixed}};\dpost) -\mathcal{L}(\theta^{\textrm{post, unmixed}}(K);\dpost)$ and likewise $\Delta_{\textrm{mixed}} = \mathcal{L}(\theta^{\textrm{FT, mixed}};\dpost) -\mathcal{L}(\theta^{\textrm{post,mixed}}(K);\dpost)$. Then we have that $\Delta_{\textrm{unmixed}} > \Delta_{\textrm{mixed}}.$
\end{theorem}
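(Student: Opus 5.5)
Since $\dpost$ draws $x\sim\mathcal{N}(0,\mathbf{I}_n)$, the loss on $\dpost$ reduces to $\mathcal{L}(\theta;\dpost)=\|\mathbf{U}^{\top}\theta\mathbf{V}-\mathbf{\Sigma}_{\textrm{post}}\|_F^2$. In the idealized setting every iterate stays diagonal in the $(\mathbf{U},\mathbf{V})$ basis, so both losses split into a sum over features $i$ of $(\sigma_i-\sigma^{\textrm{post}}_i)^2$. The plan is to compare the per-feature contributions before and after downstream fine-tuning, separately for the invariant, inconsistent and specialized blocks, and then sum.

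\textbf{Step 1: the post-trained checkpoints.} By the previous theorem (built on Lemma~\ref{lemm:finetuning}), on the mixed checkpoint the invariant and specialized coordinates are within $\epsilon$ of their $\dpost$ targets and the inconsistent coordinates are exactly $0$. On the unmixed checkpoint the invariant and inconsistent coordinates are within $\epsilon$ of target and the specialized coordinates are $0$.

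\textbf{Step 2: apply downstream gradient descent on $\dft$.}
\begin{itemize}
\item Specialized coordinates: by Lemma~\ref{lemm:orthogonalfeatures} they are frozen, since $\dft$ has zero covariance there. Their contribution to $\Delta$ is therefore $0$ for both checkpoints.
\item Invariant coordinates: their $\dft$ target equals their $\dpost$ target, so gradient descent on $\dft$ can only move them toward a point within $\epsilon$ of where they started. The same argument and the same bound apply to mixed and unmixed, so these terms cancel up to $O(\epsilon)$.
\item Inconsistent coordinates, mixed case: the singular value is $0$ at the start of fine-tuning. By the same fixed-point property of the update used in the post-training theorem, it stays $0$. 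Its contribution to $\Delta_{\textrm{mixed}}$ is then $0$.
\item Inconsistent coordinates, unmixed case: the singular value starts near $\sigma^{\textrm{post}}_i$ and is driven toward $\sigma^{\textrm{ft}}_i<\sigma^{\textrm{post}}_i-c_{\textrm{mis}}$.
\end{itemize}

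\textbf{Step 3: lower-bound the unmixed forgetting.} The main obstacle is showing that, after $K$ steps, each unmixed inconsistent coordinate has moved a definite distance away from $\sigma^{\textrm{post}}_i$. I would first reuse the convergence analysis behind Lemma~\ref{lemm:finetuning}, now with $\dft$ as the target. With $K$ and the learning-rate condition $4\eta(\lambda+2)\Gamma<1$ (specialized to $\lambda=0$, or reinterpreted for unregularized descent), the coordinate lands within $\epsilon$ of $\sigma^{\textrm{ft}}_i$. This gives a per-feature increase of at least $(c_{\textrm{mis}}-2\epsilon)^2-\epsilon^2$. If the Lemma's sufficient-steps condition does not transfer cleanly without regularization, I would fall back on monotonicity. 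For $0<\sigma<\sigma^{\textrm{post}}$ the scalar update $\sigma\mapsto\sigma-2\eta\sigma(\sigma^2-(\sigma^{\textrm{ft}})^2)$ decreases $\sigma$ whenever $\sigma>\sigma^{\textrm{ft}}$, and with a small step size it does not overshoot. So the loss on that coordinate strictly increases.

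\textbf{Step 4: assemble.} Summing the blocks gives $\Delta_{\textrm{unmixed}}\ge k[(c_{\textrm{mis}}-2\epsilon)^2-\epsilon^2]-O(\epsilon)$, while $\Delta_{\textrm{mixed}}\le O(\epsilon)$. The assumed bound $\epsilon<c_{\textrm{mis}}/(2c_{\textrm{mis}}-4\beta)$ together with $\beta<c_{\textrm{mis}}/2$ should be what makes the gap strictly positive, giving $\Delta_{\textrm{unmixed}}>\Delta_{\textrm{mixed}}$. The invariant-feature error terms must be tracked carefully here so that they actually cancel between the two checkpoints rather than being merely bounded.
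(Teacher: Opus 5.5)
Your plan follows the paper's proof. You split $\|\mathbf{U}^{\top}\theta\mathbf{V}-\mathbf{\Sigma}_{\textrm{post}}\|_F^2$ by block, kill $\Delta_{\textrm{mixed}}$ with Lemma~\ref{lemm:orthogonalfeatures} plus the immobility of the invariant coordinates, and lower-bound $\Delta_{\textrm{unmixed}}$ by applying Lemma~\ref{lemm:finetuning} with $\dft$ as the target. You are more careful than the paper in two places:
\begin{itemize}
\item The paper's proof of this theorem does not address the mixed checkpoint's inconsistent coordinates. These \emph{do} see covariance under $\dft$, and they need your zero-fixed-point argument.
\item The paper applies Lemma~\ref{lemm:finetuning} to unregularized descent, even though its step count $K\ge\frac{1}{\lambda c_{\textrm{min}}}\log\frac{100\Gamma}{\epsilon}$ is vacuous at $\lambda=0$. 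You correctly flag this.
\end{itemize}

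The step that does not close as written is Step 4, and the paper's last line has the same defect: $\epsilon<c_{\textrm{mis}}/(2c_{\textrm{mis}}-4\beta)$ does not make the gap positive. That bound is scale-free, while $\epsilon$ and $c_{\textrm{mis}}$ carry the scale of the singular values. For example, $c_{\textrm{mis}}=0.1$ and $\beta=0.01$ permit $\epsilon$ up to $0.625$. At $\epsilon=0.5$ the paper's bound $k[c_{\textrm{mis}}^2-4\beta\epsilon-2c_{\textrm{mis}}\epsilon]$ is negative, and your $(c_{\textrm{mis}}-2\epsilon)^2$ is no longer a valid lower bound. The sharp estimate is better: an endpoint within $\epsilon$ of $\sigma^{\textrm{ft}}_i$ lies at distance $>c_{\textrm{mis}}-\epsilon$ from $\sigma^{\textrm{post}}_i$, so the per-coordinate increase is at least $(c_{\textrm{mis}}-\epsilon)^2-\epsilon^2$; your extra $\epsilon$ is unnecessary. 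With this estimate you need $\epsilon<c_{\textrm{mis}}/2$.

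Cleaner still is to drop the $O(\epsilon)$ slop entirely:
\begin{itemize}
\item In the idealized setting the invariant coordinates start exactly at $\sigma^{\textrm{inv}}_i$, the common target of all three tasks. They are therefore fixed points of both the post-training and fine-tuning recursions, and contribute exactly zero to both $\Delta$'s. This answers your own worry about cancellation.
\item Together with the frozen specialized coordinates and the zero inconsistent ones, this gives $\Delta_{\textrm{mixed}}=0$ exactly.
\item The unmixed specialized block also cancels exactly, since it is $0$ before and after fine-tuning. This is why $\beta$ should not appear in the final bound at all.
\end{itemize}
Your monotonicity fallback then finishes the proof on its own. It needs only $\sigma^{\textrm{ft}}_i<\sigma_i\le\sigma^{\textrm{post}}_i$ at the start of fine-tuning. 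This follows from Lemma~\ref{lemm:finetuning} applied to the regularized post-training with $\epsilon<c_{\textrm{mis}}$, plus no overshoot. It yields $\Delta_{\textrm{unmixed}}>0=\Delta_{\textrm{mixed}}$ after any $K\ge 1$ steps, with no appeal to convergence at $\lambda=0$.
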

\begin{proof}
As the invariant features  take the same values, they will not move during the downstream adaptation. Moreover, due to the Lemma \ref{lemm:orthogonalfeatures}, we also have that the specialized features will not change during the the downstream fine-tuning. This implies that $\Delta_{\textrm{mixed}} = 0$. Next we will examine the changes induced by downstream training on the unmixed models. Observe that we have that the $\mathcal{L}(\theta;\dpost)=||\theta - \mathbf{A}^{\textrm{spec}}||_{F}^{2}$. We will define the following matrices $\Sigma_{\textrm{FT}}^{\textrm{(unmixed)}} = \textrm{diag}(\sigma_{1}^{\textrm{inv}},....,\sigma^{\textrm{inv}}_{d_{\textrm{invariant}}},\sigma^{\textrm{ft}}_{d_{\textrm{invariant}}+1},...\sigma^{\textrm{ft}}_{\textrm{d}_{\textrm{invariant}}+k} \mathbf{0}_{k})$ and note that Lemma \ref{lemm:finetuning} gives us that

\begin{align*}
||\mathbf{U}^{\top}\theta^{\textrm{(unmixed)}}_{\textrm{FT}}\mathbf{V}- \Sigma_{\textrm{FT}}^{\textrm{(unmixed)}}||_{\textrm{op}} \leq\epsilon
\end{align*}

The loss  function we use here is simply the squared difference of the singular values. Thus,we can upper bound:
\begin{align*}
\mathcal{L}(\theta_{\textrm{post}}^{\textrm{unmixed}};\dpost)\leq k(\beta + \epsilon)^{2}+k\epsilon^{2}
\end{align*}
and likewise lower bound
\begin{align*}
\mathcal{L}(\theta_{\textrm{ft}}^{\textrm{unmixed}};\dpost)\geq k(\beta -\epsilon)^{2}+k(c_\textrm{mis}-\epsilon)^2
\end{align*}
Then,we can lower bound $\Delta_{\textrm{unmixed}} \geq k[-(\beta+\epsilon)^{2}+(\beta-\epsilon)^{2}] + k[(c_{\textrm{mis}}-\epsilon)^{2} - \epsilon^{2}] = k[-4\beta\epsilon -2c_{\textrm{mis}}\epsilon + c_{mis}^{2}]$. From the condition that $\beta<\frac{1}{2} c_{\textrm{mis}}$, the positivity of $(c_{\textrm{mis}})^{2}$, and the condition on $\epsilon$, we thus have that $\Delta_{\textrm{unmixed}}>0$, which is what we wanted to show.

\end{proof}

\end{document}